\documentclass[bj,authoryear]{imsart}
\makeatletter%%%%%to delete 
\def\journal@name{}%%%%%to delete
\makeatother%%%%%to delete

\usepackage{amsmath,amssymb,mathtools}
\usepackage{mathrsfs}
\usepackage{bm}

\usepackage{graphicx}

\usepackage{booktabs}
\usepackage{array}
\usepackage{tabularx}
\usepackage{ragged2e}

\usepackage{comment}
\usepackage{placeins}

\startlocaldefs

\numberwithin{equation}{section}

\theoremstyle{plain}
\newtheorem{theorem}{Theorem}[section]
\newtheorem{proposition}[theorem]{Proposition}
\newtheorem{lemma}[theorem]{Lemma}
\newtheorem{corollary}[theorem]{Corollary}

\theoremstyle{definition}

\newtheorem{remark}[theorem]{Remark}

\newcounter{Pass}
\newcommand{\Pitem}[2]{%
  \par\refstepcounter{Pass}\noindent
  \textbf{(P\thePass) #1.}\ #2}

\newcounter{Lass}
\newcommand{\Litem}[2]{%
  \par\refstepcounter{Lass}\noindent
  \textbf{(L\theLass) #1.}\ #2}

\newcolumntype{Y}
    {>{\RaggedRight\hspace{0pt}\arraybackslash}X}

\newcolumntype{P}[1]
    {>{\RaggedRight\hspace{0pt}\arraybackslash}p{#1}}

\newcolumntype{L}[1]
    {>{\raggedright\arraybackslash}p{#1}}

\graphicspath{{figures/}{numerics/}}

\endlocaldefs

\begin{document}

\begin{frontmatter}

\title{Weighted Empirical Risk Minimization for Machine Learning under Long-Range Dependence: Exact Pathwise Rates and Learning-Error Geometry}

%\runtitle{Weighted LIL for Long-Memory Learning}

\begin{aug}

\author[A]{
\inits{E.}
\fnms{Elina}~
\snm{Moldavskaya}
\ead[label=e1]{elina.mol@technion.ac.il}
}

\address[A]{
Technion--Israel Institute of Technology, Israel
\printead[presep={,\ }]{e1}
}

\end{aug}

\begin{abstract}
We develop an exact almost-sure learning theory for smooth parametric
models trained by regularly weighted empirical risk minimization on
long-range dependent data. The training observations are generated
from a fixed finite window of a stationary Gaussian sequence, and the
sample weights are regularly varying. If the loss gradient at the
population minimizer has Wiener-chaos rank \(m\) and a nonzero
low-frequency coefficient, then, in the long-memory interior regime,
the finite-lag score reduces on the iterated-logarithm scale to a
single weighted Hermite chaos. This yields an almost-sure Bahadur
representation, an exact limsup law for the learned parameter, and,
for \(m\ge2\), the functional cluster set of the complete learning
trajectory. The polynomial learning exponent is determined by the
memory parameter and the chaos rank and is invariant under the
admissible power weighting, whereas the sharp pathwise constant and
cluster geometry depend on the weights. In the rank-one case, global
optimization over the admissible power exponents shows that every
optimizer is positive. Time-series prediction and classification
examples illustrate the results.
\end{abstract}

\begin{keyword}
\kwd{empirical risk minimization}
\kwd{functional law of the iterated logarithm}
\kwd{long-range dependence}
\kwd{machine learning}
\kwd{regularly varying weights}
\kwd{Wiener chaos}
\end{keyword}

\end{frontmatter}

%% ================================================================
%% Main text
%% ================================================================

\section{Introduction}
\label{sec:introduction}

Machine-learning methods are often trained on temporally ordered data,
including time series, sensor streams, network measurements, and
repeatedly observed systems.  A substantial learning-theoretic
literature treats dependent observations through mixing and related
weak-dependence conditions and derives finite-sample generalization
guarantees; see, for example,
\cite{MohriRostamizadeh2010,KuznetsovMohri2017}.  Long-range dependence
has a different asymptotic structure: correlations are not summable,
and cumulative fluctuations may be substantially larger and more
persistent than under independent or short-memory sampling.

Sample weighting is also common in machine learning, for example under
covariate shift and importance weighting
\cite{Shimodaira2000,SugiyamaKrauledatMueller2007,CortesMansourMohri2010}.
Here we consider a different class: deterministic regularly varying
sample weights
\(
    a_t=t^{-\kappa}L_a(t), \quad t=1,2,\ldots,
\)
where \(\kappa\in\mathbb R\) is the power exponent and \(L_a\) is
slowly varying at infinity.  

Our main question is whether such 
weighting changes the exact pathwise learning rate under long-range
 dependence and, if not, how it changes the sharp fluctuation constant.

We study a smooth parametric predictor \(f_\theta\) with parameter
vector \(\theta\in\Theta\subset\mathbb R^p\), trained from
observations \(Z_t=(U_t,Y_t)\), where \(U_t\) denotes the feature
vector and \(Y_t\) the response or label.  We assume that, for some
fixed lag \(r\ge0\),
\(
    Z_t
    =
    \Phi(X_t,X_{t-1},\ldots,X_{t-r}),
\)
for a fixed measurable map \(\Phi\), where \(\{X_t\}_{t\in\mathbb Z}\) is a centered stationary Gaussian
sequence with unit variance and covariance function
\(
    \rho(h)
    :=
    \mathbb E[X_0X_h]
    \sim
    c h^{-\alpha}L_0(h),
    \quad h\to\infty,
\)
where \(c>0\), \(0<\alpha<1\), and \(L_0\) is slowly varying at
infinity.  For a loss function \(\ell\), measuring the cost of a prediction (prediction error), we write
\[
    \ell(\theta;Z_t)
    =
    \ell\bigl(f_\theta(U_t),Y_t\bigr).
\]

Let \(W_n:=\sum_{t=r+1}^{n}a_t\).  The population and weighted
empirical risks are \(R(\theta):=\mathbb E\,\ell(\theta;Z_0)\) and
\(\widehat R_n(\theta):=W_n^{-1}\sum_{t=r+1}^{n}
a_t\ell(\theta;Z_t)\), respectively, and we denote the corresponding
population and empirical minimizers by \(\theta_0\) and
\(\widehat\theta_n\), respectively.
The case
\(a_t\equiv1\) gives ordinary empirical risk minimization.

The central probabilistic object is the loss gradient at the population
minimizer,
\[
g(X_t,\ldots,X_{t-r})
=
\nabla_\theta
\ell\!\left(\theta_0;\Phi(X_t,\ldots,X_{t-r})\right)
\in\mathbb R^p .
\]
Its first nonzero Wiener chaos determines the leading long-memory
fluctuations.

For nonlinear functionals of Gaussian long-memory sequences, the first
nonzero Wiener chaos determines the leading dependence; this is the
Hermite-rank mechanism underlying classical long-memory limit theory
\cite{Taqqu1977,DehlingTaqqu1989,Arcones1994}. We denote by \(m\) the Wiener-chaos rank of the loss gradient, that is,
the order of its first nonzero chaos, and by
\(\mathcal J\in\mathbb R^p\) its effective low-frequency coefficient.

The law of the iterated logarithm (LIL) describes the exact almost-sure scale of the largest recurrent fluctuations of partial sums. The probabilistic input of the present work is the weighted LIL established in the 
companion paper~\cite{Moldavskaya2026WeightedLIL}.  It studies weighted sums of nonlinear functionals of
long-memory Gaussian sequences. In particular, for the pure-chaos
reference statistic
\[
T_n=\sum_{t\le n}a_tH_m(X_t),
\]
where \(H_m\) is the \(m\)th
Hermite polynomial and \(B_n^2=\operatorname{Var}(T_n)\), the
companion result identifies the almost-sure scale
\(B_n(2\log\log n)^{m/2}\).  In the nonlinear regime it also gives the
functional cluster set and the corresponding variational LIL
constant.  The present paper develops the consequences of this
pathwise theory for statistical learning.

Classical almost-sure theory for \(M\)-estimators already includes
laws of the iterated logarithm and Bahadur-type invariance
representations; see \cite{HeWang1995}. For dependent errors,
\cite{FanYanXiu2014} established strong Bahadur representations and
an LIL for \(M\)-estimators in linear models with a Markov dependence
structure. These results do not address regularly weighted ERM in
the noncentral Gaussian long-memory regime or the chaos-rank-dependent
functional cluster geometry studied here.

Statistical estimation under long-range dependence has been studied
extensively, including M-estimation and least-squares estimation in
regression; see, for example, \cite{Koul1992,Moldavskaya2007LSE}, the
latter in a constrained regression setting with strong dependence.
Weighted nonlinear Gaussian functionals with singular spectra were
studied by \cite{IvanovEtAl2013}, who obtained Gaussian distributional
limits under general weight conditions.  Our question is different:
we seek exact almost-sure normalizations, limsup constants, and
functional cluster geometry in the noncentral long-memory regime.
Also related is the nonasymptotic anytime LIL for general M-estimators
of \cite{SchreuderBrunelDalalyan2020}; that work provides finite-time
probability guarantees, whereas we study exact pathwise behavior
generated by long memory and Wiener-chaos structure.
Very recently, \cite{BenningNourdinPeccati2026} showed that
correlation decay and Hermite rank also determine the infinite-depth
limit of residual networks with correlated initialization. Their
dependence acts across network layers at initialization, whereas here
long-range dependence is generated by the temporally dependent training
data and we study exact almost-sure learning fluctuations.

The main consequences are as follows.

\begin{enumerate}
\item
We prove a finite-lag reduction for the loss gradient.  Although it
depends on the Gaussian vector
\((X_t,\ldots,X_{t-r})\), on the leading LIL scale the effect of the
fixed lag window is reduced to the coefficient \(\mathcal J\):
\begin{equation}
\label{eq:intro-finite-lag}
    \sum_{t=r+1}^{n}
    a_tg(X_t,\ldots,X_{t-r})
    =
    \mathcal J
    \sum_{t=1}^{n}a_tH_m(X_t)
    +
    o_{\mathrm{a.s.}}
    \bigl(B_n(2\log\log n)^{m/2}\bigr).
\end{equation}
This reduction connects the companion scalar LIL with the learning
problem.

\item
Under standard smoothness, identifiability, and positive-curvature
conditions, we obtain a pathwise asymptotic linear representation (of Bahadur type) that expresses
the learning error through the weighted score:
\begin{equation}
\label{eq:intro-Bahadur}
    \widehat\theta_n-\theta_0
    =
    -W_n^{-1}V^{-1}\mathcal J
    \sum_{t=1}^{n}a_tH_m(X_t)
    +
    o_{\mathrm{a.s.}}(r_n),
\end{equation}
where \(V=\nabla^2R(\theta_0)\) and
\(r_n=B_n(2\log\log n)^{m/2}/W_n\).  This yields exact endpoint limsup laws and, for \(m\ge2\), the full
functional cluster set describing the normalized learning trajectory.

\item
The leading multidimensional learning error lies in the deterministic
direction \(V^{-1}\mathcal J\).  Thus scalar Gaussian finite-lag
subordination produces a one-dimensional leading pathwise geometry
even when \(\theta\in\mathbb R^p\).

\item
In the interior regime
\(\alpha m<1\) and \(2\kappa+\alpha m<1\),
\[
    r_n
    \asymp
    n^{-\alpha m/2}
    L_0(n)^{m/2}
    (2\log\log n)^{m/2}.
\]
More precisely, the slowly varying weight factor \(L_a\) cancels and
\(\kappa\) does not affect the polynomial exponent, although it
changes the sharp LIL constant and, in the nonlinear case, the cluster
geometry.  The excess population risk satisfies an exact LIL on the corresponding
squared scale, with the sharp constant identified explicitly.

\item
For \(m=1\), the sharp weighting problem can be solved globally over
the full admissible range
\(\kappa<(1-\alpha)/2\).  Every global optimizer is strictly positive:
both negative exponents and ordinary unweighted ERM are suboptimal for
the exact rank-one LIL constant.

\end{enumerate}

The results are illustrated by three machine-learning examples.
One-step linear prediction has chaos rank two despite its linear
parameterization; lagged threshold classification has chaos rank one;
and a quadratic-target model exhibits a symmetry-induced transition
between chaos ranks one and two.  Reproducible numerical experiments
illustrate the effect of weighting on the sharp LIL constant, the
one-dimensional geometry of the learning error, and the
symmetry-induced rank transition. The numerical study also reveals a pronounced asymmetry: optimal positive weighting gives only a modest improvement, whereas weighting in the opposite direction can be substantially more costly.
%The paper is organized as follows.
%Section~\ref{sec:model} introduces the learning model and assumptions.
%Section~\ref{sec:score-reduction} proves the finite-lag reduction.
%Sections~\ref{sec:parameter-LIL} and
%\ref{sec:functional-learning} establish the endpoint and functional
%learning LILs.
%Section~\ref{sec:rates} derives explicit rates, excess-risk
%asymptotics, and the weighting results.
%Sections~\ref{sec:examples} and \ref{sec:numerics} give examples and
%numerical illustrations, and Section~\ref{sec:discussion} concludes.

Sections~\ref{sec:model}--\ref{sec:score-reduction} develop the learning
model and finite-lag reduction, Sections~\ref{sec:parameter-LIL}--\ref{sec:rates}
establish the learning LILs and study weighting,
Sections~\ref{sec:examples}--\ref{sec:numerics} give examples and numerical
experiments, and Section~\ref{sec:discussion} discusses the scope and
extensions.
% =====================================================================
% Section 2. Weighted empirical risk minimization for long-memory data
% =====================================================================

\section{Weighted empirical risk minimization for long-memory data}
\label{sec:model}

\subsection{Learning model}
\label{subsec:learning-model}

Let \(\{X_t\}_{t\in\mathbb Z}\) be a centered stationary Gaussian
sequence with unit variance and covariance
\(\rho(h)=\mathbb E[X_tX_{t+h}]\).  For a fixed integer \(r\ge0\), put
\(\mathbf X_t^{(r)}:=(X_t,X_{t-1},\ldots,X_{t-r})\) and
\(Z_t=(U_t,Y_t):=\Phi(\mathbf X_t^{(r)})\).
Thus the model includes finite-window time-series prediction; for
\(r\ge1\), for example,
\(U_t=(X_{t-1},\ldots,X_{t-r})\) and \(Y_t=X_t\).

Let \(f_\theta\), \(\theta\in\Theta\subset\mathbb R^p\), be a
parametric predictor and
\(\ell(\theta;Z_t)=\ell(f_\theta(U_t),Y_t)\) its loss.
Define \(R(\theta):=\mathbb E\,\ell(\theta;Z_0)\) and
\(\theta_0:=\operatorname*{arg\,min}_{\theta\in\Theta}R(\theta)\).

For regularly varying weights
\(a_t=t^{-\kappa}L_a(t)\), \(\kappa\in\mathbb R\), let
\(W_n:=\sum_{t=r+1}^{n}a_t\) and \\
\(\widehat R_n(\theta):=W_n^{-1}\sum_{t=r+1}^{n}
a_t\ell(\theta;Z_t)\), and choose
\(\widehat\theta_n\in
\operatorname*{arg\,min}_{\theta\in\Theta}\widehat R_n(\theta)\).
The unweighted case \(a_t\equiv1\) is included as a benchmark; our
interest is in how nonconstant power weights affect the exact
long-memory learning fluctuations.

\subsection{Loss gradient and Wiener-chaos rank}\label{Loss gradient}

Write \(\psi(\theta;z):=\nabla_\theta\ell(\theta;z)\) and
\(g(\mathbf x):=\psi(\theta_0;\Phi(\mathbf x))\).
Under the assumptions below,
\(\mathbb E\,g(\mathbf X_0^{(r)})=\nabla R(\theta_0)=0\).

Let \(W\) be an isonormal Gaussian process over a real Hilbert space
\(\mathfrak H\) such that \(X_t=W(h_t)\) and
\(\langle h_s,h_t\rangle_{\mathfrak H}=\rho(t-s)\).
Put \(E_t:=\operatorname{span}\{h_t,\ldots,h_{t-r}\}\).
Since \(g(\mathbf X_t^{(r)})\) is
\(\sigma(X_t,\ldots,X_{t-r})\)-measurable, each coordinate of its
\(q\)th chaos kernel belongs to the symmetric tensor power
\(E_t^{\odot q}\). Hence, coordinatewise,
\(g(\mathbf X_t^{(r)})=\sum_{q\ge1}I_q(K_{q,t})\), where \(I_q\)
denotes the \(q\)th multiple Wiener--It\^o integral and
\begin{equation}
\label{eq:finite-lag-chaos}
K_{q,t}
=
\sum_{\mathbf j\in\{0,\ldots,r\}^q}
\beta_{\mathbf j}^{(q)}
\,h_{t-j_1}\widetilde\otimes\cdots
\widetilde\otimes h_{t-j_q},
\qquad
\beta_{\mathbf j}^{(q)}\in\mathbb R^p .
\end{equation}
Here \(\widetilde\otimes\) denotes the symmetrized tensor product.

Define the chaos rank by
\(m:=\min\{q\ge1:K_{q,0}\neq0\}\).
Under \textup{(P\ref{P1})} below, define the leading low-frequency
coefficient intrinsically by
\[
\mathcal J
:=
\lim_{h\to\infty}
\frac{\operatorname{Cov}\bigl(
g(\mathbf X_0^{(r)}),H_m(X_h)\bigr)}
{m!\rho(h)^m}
\in\mathbb R^p ,
\]
where the covariance is understood componentwise.
Indeed, since \(H_m(X_h)=I_m(h_h^{\otimes m})\) and\\
\(\langle f_1\widetilde\otimes\cdots\widetilde\otimes f_m,
h^{\otimes m}\rangle
=\prod_{\nu=1}^m\langle f_\nu,h\rangle\), orthogonality of the
Wiener chaoses and the Wiener--It\^o isometry give
\[
\operatorname{Cov}\bigl(
g(\mathbf X_0^{(r)}),H_m(X_h)\bigr)
=
m!\sum_{\mathbf j\in\{0,\ldots,r\}^m}
\beta_{\mathbf j}^{(m)}
\prod_{\nu=1}^{m}\rho(h+j_\nu).
\]
Since \(\rho(h+j)/\rho(h)\to1\) for each fixed \(j\) by
\textup{(P\ref{P1})}, the limit exists and
\(\mathcal J=\sum_{\mathbf j\in\{0,\ldots,r\}^m}
\beta_{\mathbf j}^{(m)}\).
In particular, \(\mathcal J\) is independent of the particular tensor
representation in \eqref{eq:finite-lag-chaos}.

When \(\mathcal J=0\), the finite-lag combination cancels the leading
zero-frequency contribution, analogously to differencing a
long-memory series; this degenerate case is excluded in
\textup{(P\ref{P2})}. For \(r=0\), if
\(J_m:=\mathbb E[g(X_0)H_m(X_0)]\in\mathbb R^p\), then
\(\mathcal J=J_m/m!\), so the definitions reduce to the usual
Hermite expansion and Hermite rank.

\subsection{Probabilistic assumptions}
\label{subsec:prob-assumptions}

\Pitem{Long-range dependence}{\label{P1}%
\(\rho(h)\sim c h^{-\alpha}L_0(h)\) as \(h\to\infty\), where
\(c>0\), \(0<\alpha<1\), and \(L_0\) is eventually positive and
slowly varying.}

\Pitem{Finite-lag chaos rank}{\label{P2}%
Let \(\mu_r\) denote the law of \(\mathbf X_0^{(r)}\).
The function \(g\in L^2(\mu_r;\mathbb R^p)\) has zero mean, chaos
rank \(m\ge1\), and leading low-frequency coefficient
\(\mathcal J\neq0\).}

\Pitem{Weights and standing regime}{\label{P3}%
The weights satisfy \(a_t=t^{-\kappa}L_a(t)\), \(\kappa\in\mathbb R\),
where \(L_a\) is eventually positive and slowly varying. Throughout,
the standing regime is \(\alpha m<1\) and
\(2\kappa+\alpha m<1\). Hence \(\kappa<1/2\); in particular,
\(\kappa<1\), and Karamata's theorem~\cite{BGT1987} gives
\(W_n\sim n^{1-\kappa}L_a(n)/(1-\kappa)\).}

Since modifying finitely many weights does not affect any of the
asymptotic results, we assume without loss of generality that
\(a_t>0\) for all \(t\ge1\).

For \(m\ge2\), the nonlinear functional results additionally use the
following assumptions from the companion weighted-LIL paper.

\Pitem{Spectral long-memory condition}{\label{P4}%
The sequence admits a spectral density \(f\) satisfying
\(\rho(h)=\int_{-\pi}^{\pi}e^{ih\lambda}f(\lambda)\,d\lambda\) and
\(f(\lambda)\sim c_f|\lambda|^{\alpha-1}
L_0(1/|\lambda|)\) as \(\lambda\to0\), with \(c_f>0\);
moreover, \(f\) is bounded on every compact subset of
\((-\pi,\pi]\setminus\{0\}\).} 

When \textup{(P\ref{P1})} and \textup{(P\ref{P4})} hold with the same \(L_0\),
their constants satisfy
\(c=c_\alpha c_f\),
\(c_\alpha=2\Gamma(\alpha)\cos(\pi\alpha/2)\).

\Pitem{Weight regularity}{\label{P5}%
\(L_a\) is eventually continuously differentiable, with
\(xL_a'(x)/L_a(x)\to0\) as \(x\to\infty\).}

\Pitem{High-frequency spectral regularity}{\label{P6}%
For every \(\delta\in(0,\pi)\),
\(f\in W^{1,1}([-\pi,-\delta]\cup[\delta,\pi])\).}

For \(r=0\), \textup{(P\ref{P1})--(P\ref{P3})} reduce to the
corresponding time-domain, Hermite-rank, and weight assumptions of the
companion paper; \textup{(P\ref{P4})--(P\ref{P6})} are the additional
assumptions used there for the nonlinear functional LIL. The new
ingredient for \(r\ge1\) is the finite-lag reduction proved in
Section~\ref{sec:score-reduction}.

\subsection{Learning assumptions}
\label{subsec:learning-assumptions}

\Litem{Identifiability}{\label{L1}%
The set \(\Theta\subset\mathbb R^p\) is compact,
\(\theta_0\in\operatorname{int}\Theta\), and \(\theta_0\) is the
unique well-separated minimizer of \(R\): for every
\(\varepsilon>0\),
\(\inf_{\theta\in\Theta:\,\|\theta-\theta_0\|\ge\varepsilon}
R(\theta)>R(\theta_0)\).
A measurable minimizer \(\widehat\theta_n\) of \(\widehat R_n\) is
selected for every \(n\).}

\Litem{Smooth loss}{\label{L2}%
There exists an open neighborhood \(\Theta_0\) of \(\theta_0\) with
\(\Theta_0\subset\operatorname{int}\Theta\) such that, for almost
every \(z\) under the law of \(Z_0\),
\(\theta\mapsto\ell(\theta;z)\) is twice continuously differentiable
on \(\Theta_0\), the first two derivatives may be interchanged with
expectation, and
\(\nabla R(\theta_0)=\mathbb E\,\psi(\theta_0;Z_0)=0\).}

\Litem{\(L^2\)-envelopes}{\label{L3}%
There exist
\(\Xi_0,\Xi_\ell,\Xi_1,\Xi_2,\Xi_H\in L^2(\mu_r)\) such that
\(\sup_{\theta\in\Theta}|\ell(\theta;\Phi(\mathbf x))|
\le\Xi_0(\mathbf x)\) and, for \(\theta,\theta'\in\Theta\),
\(|\ell(\theta;\Phi(\mathbf x))
-\ell(\theta';\Phi(\mathbf x))|
\le\Xi_\ell(\mathbf x)\|\theta-\theta'\|\).
For \(\theta,\theta'\in\Theta_0\),
\(\sup_{\theta\in\Theta_0}\|\psi(\theta;\Phi(\mathbf x))\|
\le\Xi_1(\mathbf x)\),
\(\sup_{\theta\in\Theta_0}
\|\nabla_\theta\psi(\theta;\Phi(\mathbf x))\|
\le\Xi_2(\mathbf x)\), and\\
\(
\|\nabla_\theta\psi(\theta;\Phi(\mathbf x))
-\nabla_\theta\psi(\theta';\Phi(\mathbf x))\|
\le\Xi_H(\mathbf x)\|\theta-\theta'\|.
\)}

\Litem{Local curvature}{\label{L4}%
\(V:=\nabla^2R(\theta_0)
=\mathbb E\,\nabla_\theta\psi(\theta_0;Z_0)\) is symmetric positive
definite.}%%%%%%%%%%%%%%%%%

%\section{Finite-lag reduction and weighted learning score}

% =====================================================================
% Section 3. Finite-lag reduction for the weighted learning score
%
% The bibliographic key MoldavskayaWeightedLIL is temporary and should
% be replaced by the key used for the companion weighted-LIL paper.
% =====================================================================

\section{Finite-lag reduction for the weighted learning score}
\label{sec:score-reduction}

This section connects the finite-lag learning model of
Section~\ref{sec:model} with the scalar weighted LIL established in the
companion paper.  The main point is that a fixed lag window changes the
leading score only through the deterministic low-frequency coefficient \(\mathcal J\).  All remaining lag effects and all higher Wiener
chaoses are negligible on the iterated-logarithm scale.

All results imported from the companion paper concern the scalar
weighted Hermite reference statistic and its limiting kernels. The
finite-lag reduction and all learning consequences are proved here.

\subsection{Reference chaos and its variance scale}
\label{subsec:reference-chaos}

For \(n\ge r+1\), define the weighted learning-score sum
\(S_n:=\sum_{t=r+1}^{n}a_tg(\mathbf X_t^{(r)})\in\mathbb R^p\).
We use the scalar reference statistic
\(T_n:=\sum_{t=1}^{n}a_tH_m(X_t)\), with exact variance
\[
B_n^2:=\operatorname{Var}(T_n)
=
m!\sum_{s,t=1}^{n}a_sa_t\rho(t-s)^m .
\]
For \(n\ge3\), put \(L_n:=2\log\log n\).
Under \textup{(P\ref{P1})--(P\ref{P3})}, the variance asymptotics
from the companion paper, applied to \(G=H_m\), give
\begin{equation}
\label{eq:reference-variance}
B_n^2
\sim
m!c^m I_{m,\alpha,\kappa}\,
n^{2-2\kappa-\alpha m}
L_a(n)^2L_0(n)^m ,
\end{equation}
where
\(I_{m,\alpha,\kappa}:=
\int_0^1\!\int_0^1
x^{-\kappa}y^{-\kappa}|x-y|^{-\alpha m}\,dx\,dy
=
2\mathrm B(1-\kappa,1-\alpha m)/(2-2\kappa-\alpha m)\).
Consequently, \(B_n\) is regularly varying with index
\(H:=1-\kappa-\alpha m/2>1/2\).

\subsection{Covariance structure of the finite-lag score}
\label{subsec:finite-lag-covariance}

Let \(\Pi_q\) denote coordinatewise orthogonal projection onto the
\(q\)th Wiener chaos. Write
\(g_t:=g(\mathbf X_t^{(r)})\), \(g_t^{[m]}:=\Pi_m g_t\), and define
\(d_t:=g_t^{[m]}-\mathcal J H_m(X_t)\) and
\(u_t:=g_t-g_t^{[m]}\). Thus
\(g_t=\mathcal J H_m(X_t)+d_t+u_t\).
The process \(\{d_t\}\) belongs coordinatewise to the fixed
\(m\)th Wiener chaos, whereas \(\{u_t\}\) contains only chaoses of
orders at least \(m+1\).

\begin{lemma}[Covariance cancellation]
\label{lem:finite-lag-covariance}
Assume \textup{(P\ref{P1})} and \textup{(P\ref{P2})}. Then, as
\(h\to\infty\),
\begin{equation}
\label{eq:leading-lag-covariance}
\operatorname{Cov}\bigl(g_0^{[m]},g_h^{[m]}\bigr)
=
m!\,\mathcal J\mathcal J^\top\rho(h)^m
+
o\bigl(|\rho(h)|^m\bigr),
\end{equation}
where the remainder is understood entrywise. In particular,
\begin{equation}
\label{eq:d-covariance}
\bigl\|\operatorname{Cov}(d_0,d_h)\bigr\|_{\mathrm{op}}
=
o\bigl(|\rho(h)|^m\bigr).
\end{equation}
There is also \(C<\infty\) such that, for all sufficiently large \(h\),
\begin{equation}
\label{eq:u-covariance}
\bigl\|\operatorname{Cov}(u_0,u_h)\bigr\|_{\mathrm{op}}
\le
C\max_{|v|\le r}|\rho(h+v)|^{m+1}
=
O\bigl(|\rho(h)|^{m+1}\bigr).
\end{equation}
\end{lemma}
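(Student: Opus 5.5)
We prove each of the three bounds by computing the covariances directly from the finite-lag chaos representation \eqref{eq:finite-lag-chaos} and the Wiener--It\^o isometry.

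\emph{Covariance of the $m$th chaos.} By isometry, for each $q$,
\[
\operatorname{Cov}\bigl(I_q(K_{q,0}),I_q(K_{q,h})\bigr)=q!\,\langle K_{q,0},K_{q,h}\rangle .
\]
Expanding the symmetrized tensor products gives
\[
\langle h_{-j_1}\widetilde\otimes\cdots\widetilde\otimes h_{-j_q},\,h_{h-k_1}\widetilde\otimes\cdots\widetilde\otimes h_{h-k_q}\rangle
=\frac1{q!}\sum_{\sigma\in S_q}\prod_{\nu=1}^q\rho\bigl(h-k_{\sigma(\nu)}+j_\nu\bigr).
\]
Hence the covariance matrix equals
\[
\sum_{\mathbf j,\mathbf k}\beta^{(q)}_{\mathbf j}\bigl(\beta^{(q)}_{\mathbf k}\bigr)^{\top}\sum_{\sigma}\prod_{\nu}\rho(h+v_{\nu,\sigma}),
\]
where each shift $v_{\nu,\sigma}:=j_\nu-k_{\sigma(\nu)}$ satisfies $|v_{\nu,\sigma}|\le r$. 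This is a finite sum with fixed coefficients.

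For $q=m$ we use (P\ref{P1}): $\rho(h+v)/\rho(h)\to1$ for each fixed $v$, and $\rho(h)>0$ for large $h$. So every product equals $\rho(h)^m(1+o(1))$. Summing over $\sigma$ gives the factor $m!$, and summing over $\mathbf j,\mathbf k$ gives $m!\,\mathcal J\mathcal J^\top\rho(h)^m+o(\rho(h)^m)$, which is \eqref{eq:leading-lag-covariance}.

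\emph{The difference $d_t$.} The same computation with $H_m(X_t)=I_m(h_t^{\otimes m})$ handles the other terms. It gives
\[
\operatorname{Cov}\bigl(g_0^{[m]},H_m(X_h)\bigr)=m!\,\mathcal J\rho(h)^m(1+o(1)),
\]
and the same for $\operatorname{Cov}(H_m(X_0),g_h^{[m]})$. We also have $\operatorname{Cov}(H_m(X_0),H_m(X_h))=m!\rho(h)^m$. Expanding $\operatorname{Cov}(d_0,d_h)$ bilinearly, the leading terms are
\[
m!\rho(h)^m\bigl(\mathcal J\mathcal J^\top-\mathcal J\mathcal J^\top-\mathcal J\mathcal J^\top+\mathcal J\mathcal J^\top\bigr)=0,
\]
so only $o(\rho(h)^m)$ entrywise remains. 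Since the dimension $p$ is fixed, the entrywise bound gives the operator-norm bound \eqref{eq:d-covariance}.

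\emph{Higher chaoses, the main obstacle.} The difficulty in \eqref{eq:u-covariance} is that the sum over $q\ge m+1$ is infinite, so we cannot expand term by term with fixed coefficients. Instead we need a bound uniform in $q$, and we argue via Gaussian hypercontractivity-free covariance inequalities for functions of Gaussian vectors.

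Set $\varepsilon_h:=\max_{|v|\le r}|\rho(h+v)|$, which tends to $0$. Let $\xi=\mathbf X_0^{(r)}$ and $\eta=\mathbf X_h^{(r)}$, and write $\Sigma_\xi=\Sigma_\eta=\Sigma$. Assume first that $\Sigma$ is nondegenerate. We whiten both vectors, $\tilde\xi=\Sigma^{-1/2}\xi$ and $\tilde\eta=\Sigma^{-1/2}\eta$. Their cross-covariance $\Sigma^{-1/2}\,\mathrm{Cov}(\xi,\eta)\,\Sigma^{-1/2}$ has operator norm at most $C\varepsilon_h$. Taking its singular value decomposition makes the pairs of whitened coordinates independent across indices, with correlations $\lambda_i$ satisfying $|\lambda_i|\le C\varepsilon_h$.

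In this basis the standard Mehler-type bound applies. If $F,G\in L^2$ are functions of $\tilde\xi$ and $\tilde\eta$ lying in chaoses of order at least $m+1$, then
\[
|\operatorname{Cov}(F,G)|\le \Bigl(\max_i|\lambda_i|\Bigr)^{m+1}\|F\|_2\|G\|_2 .
\]
This holds because the multivariate Hermite expansion yields covariance $\sum_{|\mathbf n|\ge m+1}c_{\mathbf n}d_{\mathbf n}\prod_i\lambda_i^{n_i}$, and Cauchy--Schwarz then bounds it by the $(m+1)$th power of the largest $|\lambda_i|$. Applying this coordinatewise to $u_0$ and $u_h$ gives $\|\operatorname{Cov}(u_0,u_h)\|_{\mathrm{op}}\le C\varepsilon_h^{m+1}\|u_0\|_2^2$, and $\|u_0\|_2\le\|g\|_{L^2}<\infty$.

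If $\Sigma$ is degenerate, we restrict to a basis of $E_0$ and the argument is unchanged. Finally, $\varepsilon_h=O(|\rho(h)|)$ by (P\ref{P1}), which gives the last equality in \eqref{eq:u-covariance}.
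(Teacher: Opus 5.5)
Your proposal is correct and follows the paper's proof: the $m$th-chaos and $d_t$ parts use the same isometry computation with $\rho(h+v)/\rho(h)\to1$ and the same four-term cancellation. For the higher chaoses, your whitened cross-covariance $\Sigma^{-1/2}\operatorname{Cov}(\xi,\eta)\Sigma^{-1/2}$ is exactly the matrix $(\langle e_a,e_{b,h}\rangle)$ representing $P_{E_0}P_{E_h}$ in the paper, so $\max_i|\lambda_i|=\vartheta_h$, and your SVD/Mehler bound is the diagonalized form of the paper's estimate $|\langle F,(P_{E_0}P_{E_h})^{\otimes q}G\rangle|\le\vartheta_h^q\|F\|\,\|G\|$ (the degenerate case you set aside cannot occur under \textup{(P\ref{P1})}, since a singular Toeplitz matrix $(\rho(i-j))$ would force a finitely supported spectral measure, which is incompatible with $\rho(h)\to0$).
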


\begin{proof}
For coordinates \(k,\ell\), the representation in
\eqref{eq:finite-lag-chaos} and the Wiener--It\^o isometry give
\[
\operatorname{Cov}\bigl(g_{0,k}^{[m]},g_{h,\ell}^{[m]}\bigr)
=
\sum_{\mathbf i,\mathbf j}
\beta_{\mathbf i,k}^{(m)}
\beta_{\mathbf j,\ell}^{(m)}
\sum_{\pi\in\mathfrak S_m}
\prod_{\nu=1}^{m}
\rho\bigl(h+i_\nu-j_{\pi(\nu)}\bigr).
\]
Since all lag indices belong to the fixed set
\(\{0,\ldots,r\}\), \textup{(P\ref{P1})} gives
\(\rho(h+v)/\rho(h)\to1\) uniformly over the finitely many relevant
integers \(v\). Hence
\[
\frac{
\operatorname{Cov}\bigl(g_{0,k}^{[m]},g_{h,\ell}^{[m]}\bigr)}
{\rho(h)^m}
\longrightarrow
m!\,\mathcal J_k\mathcal J_\ell,
\]
which proves the first assertion.

The cross-covariance calculation in Section~\ref{Loss gradient},
together with stationarity, gives
\[
\operatorname{Cov}\bigl(g_0^{[m]},H_m(X_h)\bigr)
=
m!\mathcal J\rho(h)^m+o\bigl(|\rho(h)|^m\bigr)
\]
and
\[
\operatorname{Cov}\bigl(H_m(X_0),g_h^{[m]}\bigr)
=
m!\mathcal J^\top\rho(h)^m+o\bigl(|\rho(h)|^m\bigr).
\]
Since
\(\operatorname{Cov}(H_m(X_0),H_m(X_h))=m!\rho(h)^m\),
expanding \(d_t=g_t^{[m]}-\mathcal JH_m(X_t)\) shows that the
leading terms cancel, proving the second assertion.

It remains to control the higher chaoses. Let
\(E_0:=\operatorname{span}\{h_0,\ldots,h_{-r}\}\) and
\(E_h:=\operatorname{span}\{h_h,\ldots,h_{h-r}\}\).
Choose an orthonormal basis \(e_1,\ldots,e_D\) of \(E_0\), writing
each \(e_a\) as a fixed linear combination of
\(h_0,\ldots,h_{-r}\). Since the Gram matrix of these generating
vectors is \(\rho(i-j)\), the same coefficients, with \(h_{-j}\)
replaced by \(h_{h-j}\), define an orthonormal basis
\(e_{1,h},\ldots,e_{D,h}\) of \(E_h\), also when the generating
vectors are linearly dependent. Each
\(\langle e_a,e_{b,h}\rangle\) is therefore a fixed linear
combination of \(\rho(h+v)\), \(|v|\le r\). Hence
\[
\vartheta_h
:=
\|P_{E_0}P_{E_h}\|_{\mathrm{op}}
\le
D\max_{a,b}|\langle e_a,e_{b,h}\rangle|
\le
C\max_{|v|\le r}|\rho(h+v)|
=
O(|\rho(h)|).
\]
Moreover, \(\vartheta_h\le1\), since orthogonal projections are
contractions.

For \(F\in E_0^{\odot q}\) and \(G\in E_h^{\odot q}\),
\(G=P_{E_h}^{\otimes q}G\), and therefore
\(
\langle F,G\rangle
=
\left\langle
F,(P_{E_0}P_{E_h})^{\otimes q}G
\right\rangle.
\)
Since
\(\|A^{\otimes q}\|_{\mathrm{op}}=\|A\|_{\mathrm{op}}^q\),
\(
|\langle F,G\rangle|
\le
\vartheta_h^q\|F\|\,\|G\|.
\)
Using the chaos expansions of \(u_{0,k}\) and \(u_{h,\ell}\),
orthogonality of different chaos orders, \(\vartheta_h\le1\), and
Cauchy--Schwarz gives
\[
|\operatorname{Cov}(u_{0,k},u_{h,\ell})|
\le
\vartheta_h^{m+1}
\|u_{0,k}\|_2\|u_{h,\ell}\|_2.
\]
Stationarity and the preceding estimate for \(\vartheta_h\) prove the
third assertion.
\end{proof}
%%%%%%%%%%%%%%%%%%%
\subsection{Block variance bounds}
\label{subsec:block-variance}

The covariance cancellation in
Lemma~\ref{lem:finite-lag-covariance} is sufficient for the
almost-sure reduction below. No differentiability assumption on
\(L_a\) or spectral assumption is required at this stage.

\begin{lemma}[Variance gain for the lag and higher-chaos remainders]
\label{lem:block-variance}
Assume \textup{(P\ref{P1})--(P\ref{P3})}. There exist constants
\(q>1\), \(\delta>0\), and a deterministic sequence
\(\varepsilon_N\downarrow0\) such that, for every sufficiently large
\(N\) and every interval \(I\subset\{N+1,\ldots,2N\}\),
\begin{equation}
\label{eq:d-block}
\mathbb E
\left\|
    \sum_{t\in I}a_td_t
\right\|^2
\le
\varepsilon_N B_N^2
\left(\frac{|I|}{N}\right)^q
\end{equation}
and
\begin{equation}
\label{eq:u-block}
\mathbb E
\left\|
    \sum_{t\in I}a_tu_t
\right\|^2
\le
C B_N^2N^{-2\delta}
\left(\frac{|I|}{N}\right).
\end{equation}
Moreover,
\begin{equation}
\label{eq:d-global}
\mathbb E
\left\|
    \sum_{t=r+1}^{n}a_td_t
\right\|^2
=
o(B_n^2)
\end{equation}
and
\begin{equation}
\label{eq:u-global}
\mathbb E
\left\|
    \sum_{t=r+1}^{n}a_tu_t
\right\|^2
\le
C B_n^2n^{-2\delta}.
\end{equation}
\end{lemma}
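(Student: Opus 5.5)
Throughout we use the elementary bound
\[
\mathbb E\Bigl\|\sum_{t\in I}a_tY_t\Bigr\|^2
\le p\sum_{s,t\in I}a_sa_t\,\|\operatorname{Cov}(Y_s,Y_t)\|_{\mathrm{op}},
\]
valid for a stationary vector process \(Y\). We apply it to \(Y=d\) and \(Y=u\) and feed in the covariance estimates \eqref{eq:d-covariance} and \eqref{eq:u-covariance} of Lemma~\ref{lem:finite-lag-covariance}. For \(s,t\in I\subset\{N+1,\ldots,2N\}\), regular variation gives \(a_s\asymp N^{-\kappa}L_a(N)\) uniformly (Potter bounds on \([N,2N]\)). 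Hence every double sum reduces to \(N^{-2\kappa}L_a(N)^2\sum_{s,t\in I}\gamma(|t-s|)\) for a suitable covariance bound \(\gamma\). The comparison scale is \(B_N^2\asymp N^{2-2\kappa-\alpha m}L_a(N)^2L_0(N)^m\), taken from \eqref{eq:reference-variance}.

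\emph{Bound \eqref{eq:d-block}.} Write \(\|\operatorname{Cov}(d_0,d_h)\|_{\mathrm{op}}\le\eta(h)|\rho(h)|^m\) for \(h\ge h_0\), where \(\eta(h)\to0\); let \(\bar\eta(h):=\sup_{k\ge h}\eta(k)\) be its monotone majorant. Small lags \(h<h_0\) contribute \(O(|I|)\). With \(k=|I|\),
\[
\sum_{s,t\in I}|\rho(t-s)|^m\eta(|t-s|)\le 2k\sum_{h<k}|\rho(h)|^m\eta(h).
\]
Split at \(h=M\). The part \(h<M\) is \(O(kM)\). The part \(h\ge M\) is at most \(\bar\eta(M)\,k\sum_{h<k}|\rho(h)|^m\asymp \bar\eta(M)k^{2-\alpha m}L_0(k)^m\), by Karamata and \(\alpha m<1\). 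Relative to \(B_N^2N^{2\kappa}/L_a(N)^2\) this gives the ratio
\[
\bar\eta(M)\,(k/N)^{2-\alpha m}\,\frac{L_0(k)^m}{L_0(N)^m}+\frac{kM}{N^{2-\alpha m}L_0(N)^m}.
\]
Fix \(q\in(1,2-\alpha m)\), say \(q=(3-\alpha m)/2\). Potter's bound gives \(L_0(k)/L_0(N)\le C(k/N)^{-\epsilon}\) uniformly in \(1\le k\le N\), with \(\epsilon\) small enough that \((k/N)^{2-\alpha m-m\epsilon}\le(k/N)^q\). For the second term, \(k/N\le (k/N)^q N^{q-1}\), so it is bounded by \((k/N)^q\,M N^{q-1-(1-\alpha m)}/L_0(N)^m\). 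Since \(q-1<1-\alpha m\), choosing \(M=M_N\to\infty\) slowly makes this \(o((k/N)^q)\). We then set \(\varepsilon_N:=C\bigl(\bar\eta(M_N)+M_NN^{q-2+\alpha m}L_0(N)^{-m}\bigr)\), made monotone by taking suprema over later \(N\). This deterministic choice of \(\varepsilon_N\) and the uniform handling of the slowly varying ratio are the delicate step.

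\emph{Bound \eqref{eq:u-block}.} Here \(\gamma(h)=C\min(1,|\rho(h)|^{m+1})\). Two cases arise. If \(\alpha(m+1)<1\), then \(\sum_{h<k}\gamma\asymp k^{1-\alpha(m+1)}L_0^{m+1}\le C N^{1-\alpha(m+1)+\epsilon}\). If \(\alpha(m+1)\ge1\), the sum is \(O(N^{\epsilon})\) for every \(\epsilon>0\). In either case the total is \(\le C k\,N^{1-\alpha m-\alpha'}\) with some \(\alpha'>0\); one can take \(\alpha'=\min(\alpha,1-\alpha m)/2\), absorbing the slowly varying factors. Dividing by \(B_N^2N^{2\kappa}/L_a(N)^2\asymp N^{2-\alpha m}L_0(N)^m\) yields \(C(k/N)N^{-\alpha'}L_0(N)^{-m}\), which is \(\le C(k/N)N^{-2\delta}\) for \(2\delta<\alpha'\).

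\emph{Global bounds.} For \eqref{eq:d-global} and \eqref{eq:u-global}, apply the same computation on \(\{r+1,\ldots,n\}\) directly, without dyadic blocking. Split the double sum by regions using \(a_t=t^{-\kappa}L_a(t)\) and compare with the integral \(I_{m,\alpha,\kappa}\) as in the proof of \eqref{eq:reference-variance}. Two facts make this work. First, replacing \(|\rho|^m\) by \(\eta|\rho|^m\) with \(\eta\to0\) gives \(o\) of the integral, since \(2\kappa+\alpha m<1\) makes the kernel \(x^{-\kappa}y^{-\kappa}|x-y|^{-\alpha m}\) integrable and near-diagonal or short-range pairs are negligible. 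Second, replacing \(|\rho|^m\) by \(|\rho|^{m+1}\) gains a polynomial factor \(n^{-2\delta}\). Alternatively, one may sum the dyadic block bounds over \(N=2^j\le n\) and use Cauchy--Schwarz across blocks; the weights \(B_{2^j}\) grow geometrically because \(H>1/2\), but cross-block covariances are not controlled that way, so the direct computation is preferable.
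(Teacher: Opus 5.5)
Your proposal is correct and follows essentially the same route as the paper: for the $d$-terms you split the covariance sum at a slowly growing lag cutoff $M_N$, control the far lags by a monotone majorant of the $o(|\rho(h)|^m)$ factor together with Karamata and Potter bounds for some $q\in(1,2-\alpha m)$, bound the near lags by $C|I|M_N$, and for the global sum use that short-range pairs contribute only $O(\sum_{t\le n}a_t^2)=o(B_n^2)$. The only difference is that you carry out the higher-chaos estimates for $u_t$ explicitly, via the case split $\alpha(m+1)<1$ versus $\alpha(m+1)\ge1$, whereas the paper transfers the argument of Lemma~10.2 of the companion paper; one harmless point is that the bound on $\operatorname{Cov}(u_0,u_h)$ holds only for large $h$, so your $\gamma(h)$ should be taken as a constant at small lags rather than $C\min(1,|\rho(h)|^{m+1})$.
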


\begin{proof}
Put \(\beta:=\alpha m\in(0,1)\). Fix
\(\eta\in(0,1-\beta)\) and set
\(q:=2-\beta-\eta>1\). By regular variation, uniformly for
\(N\le t\le2N\),
\(
    |a_t|
    \le
    C N^{-\kappa}L_a(N).
\)
Potter's bound and the standard summation estimate for a regularly
varying kernel give, uniformly over all intervals
\(I\subset\{N+1,\ldots,2N\}\),
\begin{equation}
\label{eq:reference-block-convolution}
\sum_{s,t\in I}
|a_sa_t|\,
(1+|t-s|)^{-\beta}
L_0(1+|t-s|)^m
\le
C B_N^2
\left(\frac{|I|}{N}\right)^q.
\end{equation}

By \eqref{eq:d-covariance},
\(
    \bigl\|\operatorname{Cov}(d_0,d_h)\bigr\|_{\mathrm{op}}
    =
    o\bigl(h^{-\beta}L_0(h)^m\bigr).
\)
Define
\[
    \omega(M)
    :=
    \sup_{h\ge M}
    \frac{
        \bigl\|\operatorname{Cov}(d_0,d_h)\bigr\|_{\mathrm{op}}
    }{
        h^{-\beta}L_0(h)^m
    }.
\]
Then \(\omega(M)\to0\) as \(M\to\infty\). Since \(L_0\) is slowly
varying, Potter's bound gives
\(L_0(N)^m\ge cN^{-\eta/2}\) for all sufficiently large \(N\);
hence
\(
    N^\eta L_0(N)^m\longrightarrow\infty.
\)
We may therefore choose integers \(M_N\uparrow\infty\) sufficiently
slowly that
\(
   M_N/(N^\eta L_0(N)^m)
    \longrightarrow0.
\)

Using stationarity and summing over the fixed number of coordinates,
split the covariance sum for
\(\mathbb E\|\sum_{t\in I}a_td_t\|^2\) according as
\(|t-s|>M_N\) or \(|t-s|\le M_N\). On the first part,
\[
    \bigl\|\operatorname{Cov}(d_s,d_t)\bigr\|_{\mathrm{op}}
    \le
    \omega(M_N)
    |t-s|^{-\beta}L_0(|t-s|)^m,
\]
so \eqref{eq:reference-block-convolution} bounds its contribution by
\(
    C\omega(M_N)B_N^2
    \left(|I|/N\right)^q.
\)

For \(|t-s|\le M_N\), square integrability and stationarity give a
uniform bound on the covariance, while the number of relevant pairs
is at most \(C|I|M_N\). Consequently, this contribution is at most\\
\(
    C N^{-2\kappa}L_a(N)^2|I|M_N.
\)
By \eqref{eq:reference-variance},
\(
    B_N^2
    \asymp
    N^{2-2\kappa-\beta}
    L_a(N)^2L_0(N)^m.
\)
Dividing the near-lag bound by
\(B_N^2(|I|/N)^q\), and using
\(q=2-\beta-\eta\), gives
\[
    \frac{M_N}{N^\eta L_0(N)^m}|I|^{1-q}
    \le
    \frac{M_N}{N^\eta L_0(N)^m},
\]
because \(q>1\) and \(|I|\ge1\). Thus
\[
    \mathbb E
    \left\|
        \sum_{t\in I}a_td_t
    \right\|^2
    \le
    \widetilde\varepsilon_N B_N^2
    \left(\frac{|I|}{N}\right)^q,
\]
where
\[
    \widetilde\varepsilon_N
    :=
    C\left\{
        \omega(M_N)
        +
        \frac{M_N}{N^\eta L_0(N)^m}
    \right\}
    \longrightarrow0.
\]
Replacing \(\widetilde\varepsilon_N\), if necessary, by its decreasing
majorant
\(\varepsilon_N:=\sup_{K\ge N}\widetilde\varepsilon_K\)
proves \eqref{eq:d-block} with
\(\varepsilon_N\downarrow0\).

We next prove \eqref{eq:d-global}. Fix \(M\). 
For \(|t-s|>M\), Lemma~10.1 of
\cite{Moldavskaya2026WeightedLIL}, applied with
\(\beta=\alpha m\), gives together with the definition of
\(\omega(M)\)
\[
    \sum_{\substack{r+1\le s,t\le n\\|t-s|>M}}
    |a_sa_t|\,
    \bigl\|\operatorname{Cov}(d_s,d_t)\bigr\|_{\mathrm{op}}
    \le
    C\omega(M)B_n^2.
\]
For \(|t-s|\le M\), regular variation of the weights at each fixed lag
gives
\[
    \sum_{\substack{r+1\le s,t\le n\\|t-s|\le M}}
    |a_sa_t|
    \le
    C_M\sum_{t\le n}a_t^2.
\]
Since \(2\kappa<1\), Karamata's theorem yields
\[
    \sum_{t\le n}a_t^2
    =
    O\bigl(n^{1-2\kappa}L_a(n)^2\bigr)
    =
    o(B_n^2).
\]
Indeed, by \eqref{eq:reference-variance}, the ratio is of order
\(n^{\beta-1}L_0(n)^{-m}\to0\) by Potter's bound, since
\(\beta<1\). It follows that
\[
    \limsup_{n\to\infty}
    \frac{
        \mathbb E
        \left\|
            \sum_{t=r+1}^{n}a_td_t
        \right\|^2
    }{
        B_n^2
    }
    \le
    C\omega(M).
\]
Letting \(M\to\infty\) proves \eqref{eq:d-global}.

For the higher-chaos remainder,
Lemma~\ref{lem:finite-lag-covariance} gives
\[
    \bigl\|\operatorname{Cov}(u_0,u_h)\bigr\|_{\mathrm{op}}
    \le
    C|\rho(h)|^{m+1}
\]
for all sufficiently large \(h\). Each coordinate of \(\{u_t\}\) is
stationary, centered, and square integrable. The proof of the
higher-rank variance-gain estimate
\cite[Lemma~10.2]{Moldavskaya2026WeightedLIL}
uses, after establishing precisely such a covariance bound, only the
regular variation of the deterministic weights. Applying that
argument to each coordinate and summing over the fixed number of
coordinates yields constants \(\delta>0\) and \(\gamma\ge1\) such
that
\[
    \mathbb E
    \left\|
        \sum_{t\in I}a_tu_t
    \right\|^2
    \le
    C B_N^2N^{-2\delta}
    \left(\frac{|I|}{N}\right)^\gamma
\]
for every sufficiently large \(N\) and every
\(I\subset\{N+1,\ldots,2N\}\), and
\[
    \mathbb E
    \left\|
        \sum_{t=r+1}^{n}a_tu_t
    \right\|^2
    \le
    C B_n^2n^{-2\delta}.
\]
Here the fixed omission of the first \(r\) terms is harmless; after
decreasing \(\delta\), if necessary, the corresponding finite
\(L^2\)-contribution is absorbed by the right-hand side.

Since \(|I|/N\le1\) and \(\gamma\ge1\),
\(
    \left(|I|/N\right)^\gamma
    \le
    |I|/N.
\)
This proves \eqref{eq:u-block} and \eqref{eq:u-global}.
\end{proof}

\subsection{Almost-sure finite-lag reduction}
\label{subsec:as-reduction}

We now prove the bridge result used throughout the remainder of the
paper.

\begin{theorem}[Finite-lag reduction of the weighted learning score]
\label{thm:finite-lag-reduction}
Assume \textup{(P\ref{P1})--(P\ref{P3})}. Then
\begin{equation}
\label{eq:score-reduction}
\frac{\|S_n-\mathcal JT_n\|}{B_nL_n^{m/2}}
\longrightarrow0
\qquad\text{almost surely}.
\end{equation}
Equivalently,
\begin{equation}
\label{eq:score-leading-representation}
S_n
=
\mathcal J\sum_{t=1}^{n}a_tH_m(X_t)
+
o_{\mathrm{a.s.}}\bigl(B_nL_n^{m/2}\bigr).
\end{equation}
\end{theorem}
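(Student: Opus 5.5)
Using the decomposition \(g_t=\mathcal JH_m(X_t)+d_t+u_t\), we write
\[
S_n-\mathcal JT_n
=
\sum_{t=r+1}^{n}a_td_t+\sum_{t=r+1}^{n}a_tu_t
-\mathcal J\sum_{t=1}^{r}a_tH_m(X_t).
\]
The last term is a fixed finite random vector. Since \(B_n\to\infty\), being regularly varying with index \(H>1/2\), it is trivially \(o(B_nL_n^{m/2})\) almost surely. It therefore suffices to show that the two remainder sums \(D_n:=\sum_{t=r+1}^na_td_t\) and \(U_n:=\sum_{t=r+1}^na_tu_t\) are each \(o(B_nL_n^{m/2})\) almost surely. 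We treat both along the dyadic skeleton \(N_k=2^k\) and then control the oscillation within dyadic blocks by a maximal inequality. For the oscillation we use Lemma~\ref{lem:block-variance}, since the bounds \eqref{eq:d-block} and \eqref{eq:u-block} are exactly of the superadditive form \(\mathbb E\|\sum_{t\in I}\cdot\|^2\le \phi(N)(|I|/N)^{q}\) with \(q\ge1\), which is what the Móricz/Serfling-type maximal inequalities require.

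\emph{Higher-chaos term \(U_n\).} By \eqref{eq:u-global}, \(\mathbb E\|U_{N_k}\|^2\le CB_{N_k}^2N_k^{-2\delta}\). Chebyshev then gives
\[
\mathbb P\bigl(\|U_{N_k}\|>\epsilon B_{N_k}\bigr)\le C\epsilon^{-2}2^{-2\delta k},
\]
which is summable, so Borel--Cantelli yields \(U_{N_k}/B_{N_k}\to0\) almost surely, even without the \(L_n\) factor. For the intra-block oscillation we apply the Móricz maximal inequality with \(q=1\) to \eqref{eq:u-block}; this only costs a factor \((\log N)^2\):
\[
\mathbb E\max_{N<n\le2N}\Bigl\|\sum_{t=N+1}^{n}a_tu_t\Bigr\|^2
\le CB_N^2N^{-2\delta}(\log N)^2.
\]
This bound is still summable along \(N=2^k\). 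Combining with \(B_n\asymp B_{N_k}\) for \(N_k<n\le N_{k+1}\), which holds by regular variation, gives \(U_n=o(B_n)\) almost surely.

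\emph{Same-chaos lag term \(D_n\).} This is the main obstacle. The gain in \eqref{eq:d-block} and \eqref{eq:d-global} is only a nonquantified \(o(1)\), so second moments and Chebyshev along a geometric subsequence do not give summability. The remedy is that \(d_t\) lies in the fixed \(m\)th Wiener chaos. By hypercontractivity, for every \(p\ge2\),
\[
\bigl(\mathbb E\|D\|^p\bigr)^{1/p}\le C_{p,m}\bigl(\mathbb E\|D\|^2\bigr)^{1/2}
\]
for any linear combination \(D\) of the \(d_t\). Equivalently, we get the Gaussian-chaos tail bound
\[
\mathbb P(\|D\|>x\sigma)\le C\exp(-cx^{2/m}),
\]
where \(\sigma^2=\mathbb E\|D\|^2\). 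With \(\sigma_k^2\le\varepsilon_{N_k}B_{N_k}^2\) and \(x=\epsilon\,\varepsilon_{N_k}^{-1/2}L_{N_k}^{m/2}\), this gives
\[
\exp(-cx^{2/m})
=\exp\bigl(-c\epsilon^{2/m}\varepsilon_{N_k}^{-1/m}\cdot2\log\log N_k\bigr)
=k^{-2c\epsilon^{2/m}\varepsilon_{N_k}^{-1/m}},
\]
and the exponent tends to infinity because \(\varepsilon_{N_k}\to0\). Hence the probabilities are summable and \(D_{N_k}=o(B_{N_k}L_{N_k}^{m/2})\) almost surely. This is precisely where the \(L_n^{m/2}\) normalization, rather than \(B_n\) alone, is needed.

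To pass to intermediate \(n\), we use a chaining/maximal inequality for the block maxima. Because \(q>1\) in \eqref{eq:d-block}, the Móricz inequality in its \(L^p\) form (valid via hypercontractivity, \(p\) fixed but large) gives
\[
\Bigl(\mathbb E\max_{N<n\le2N}\|D_n-D_N\|^p\Bigr)^{1/p}\le C_p\varepsilon_N^{1/2}B_N,
\]
with no logarithmic loss. Markov's inequality with a suitably large \(p\) alone yields only polynomial-in-\(k\) decay, which is summable only if \(p\) is large relative to \(\varepsilon\). Therefore I would instead choose \(p=p_k\) growing like \(\log k\). I would track the hypercontractive constant \((p-1)^{m/2}\) and use it through the Móricz argument (which is linear in the moment bounds), giving tails of the form \(\exp\bigl(-c(\epsilon/\varepsilon_N^{1/2})^{2/m}\log k\bigr)\), again summable. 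Combining with \(B_n\asymp B_N\) and \(L_n\sim L_N\) on dyadic blocks yields \(D_n=o(B_nL_n^{m/2})\) almost surely, and together with the bound for \(U_n\) this proves \eqref{eq:score-reduction}.
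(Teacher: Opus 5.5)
Your proposal is correct and follows essentially the paper's proof: the same splitting into $D_n$ and $U_n$, hypercontractive tails in the fixed $m$th chaos along $2^k$ for $D_n$ (where the $L_n^{m/2}$ factor absorbs the unquantified $o(1)$ variance gain), and Chebyshev plus a M\'oricz-type maximal inequality with a $(\log N)^2$ loss for $U_n$. The only difference is in the within-block maximum of $D_n$: you rebuild the exponential maximal tail yourself, via an $L^p$ M\'oricz inequality with $p_k\asymp\log k$. This works because for $q>1$ the $p$th root of the M\'oricz constant stays bounded as $p\to\infty$. The paper instead simply invokes Lemma~10.5 of the companion paper.
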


\begin{proof}
Put \(D_n:=\sum_{t=r+1}^{n}a_td_t\) and
\(U_n:=\sum_{t=r+1}^{n}a_tu_t\). Then
\(S_n=\mathcal J\sum_{t=r+1}^{n}a_tH_m(X_t)+D_n+U_n\).
The fixed difference between the first sum and \(T_n\) is negligible
on the scale \(B_nL_n^{m/2}\). All scalar tail and maximal estimates
below are applied coordinatewise; since \(p\) is fixed, this changes
only the constants.

Let \(n_k=2^k\). By \eqref{eq:d-global},
\(\eta_n:=B_n^{-2}\mathbb E\|D_n\|^2\to0\).
Hypercontractivity in the fixed \(m\)th chaos gives, for every \(x>0\),
\[
\mathbb P\!\left(
\|D_{n_k}\|>xB_{n_k}L_{n_k}^{m/2}\right)
\le
C\exp\{-c x^{2/m}\eta_{n_k}^{-1/m}L_{n_k}\},
\]
which is summable since \(L_{n_k}\asymp\log k\) and
\(\eta_{n_k}^{-1/m}\to\infty\).
By \eqref{eq:d-block} and
\cite[Lemma~10.5]{Moldavskaya2026WeightedLIL}, the same argument
controls \(\max_{n_k\le n\le n_{k+1}}\|D_n-D_{n_k}\|\), with
\(\eta_{n_k}\) replaced by \(\varepsilon_{n_k}\).
Hence Borel--Cantelli and regular variation of \(B_n\) give
\(\|D_n\|/(B_nL_n^{m/2})\to0\) almost surely.

For \(U_n\), \eqref{eq:u-global} and Chebyshev's inequality give
summable bounds along \(n_k=2^k\). Moreover,
\eqref{eq:u-block} and
\cite[Lemma~10.3]{Moldavskaya2026WeightedLIL} yield
\[
\mathbb E\max_{n_k\le n\le n_{k+1}}
\|U_n-U_{n_k}\|^2
\le
C(\log n_k)^2B_{n_k}^2n_k^{-2\delta}.
\]
Chebyshev's inequality and Borel--Cantelli therefore give
\(\|U_n\|/(B_nL_n^{m/2})\to0\) almost surely.
Combining the two remainders proves \eqref{eq:score-reduction}.
\end{proof}

\subsection{Uniform reduction of the score trajectory}
\label{subsec:uniform-score-reduction}

For integers \(k\le r\), set \(S_k=0\). For \(0\le t\le T\), define
\(S_n(t):=(1-\{nt\})S_{\lfloor nt\rfloor}
+\{nt\}S_{\lfloor nt\rfloor+1}\), and define \(T_n(t)\) analogously.

\begin{corollary}[Uniform finite-lag reduction]
\label{cor:uniform-score-reduction}
Under the assumptions of Theorem~\ref{thm:finite-lag-reduction}, for
every fixed \(T>0\),
\begin{equation}
\label{eq:uniform-score-reduction}
\sup_{0\le t\le T}
\frac{\|S_n(t)-\mathcal JT_n(t)\|}
{B_nL_n^{m/2}}
\longrightarrow0
\qquad\text{almost surely}.
\end{equation}
\end{corollary}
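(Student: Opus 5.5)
The plan is to reduce the uniform statement to the pointwise reduction in Theorem~\ref{thm:finite-lag-reduction}, using piecewise linearity and the regular variation of \(B_n\). Put \(\Delta_k:=S_k-\mathcal JT_k\) for integers \(k\ge0\); set \(\Delta_k:=-\mathcal JT_k\) for \(k\le r\), consistent with \(S_k=0\) there. Since \(S_n(t)-\mathcal JT_n(t)\) is the linear interpolation of \(\Delta_{\lfloor nt\rfloor}\) and \(\Delta_{\lfloor nt\rfloor+1}\), we have
\[
\sup_{0\le t\le T}\|S_n(t)-\mathcal JT_n(t)\|
\le
\max_{0\le k\le \lfloor nT\rfloor+1}\|\Delta_k\|.
\]
It therefore suffices to show that \(\max_{k\le nT+1}\|\Delta_k\|=o(B_nL_n^{m/2})\) almost surely.

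Fix \(\varepsilon>0\). By Theorem~\ref{thm:finite-lag-reduction}, almost surely there is a random \(k_0\) such that \(\|\Delta_k\|\le\varepsilon B_kL_k^{m/2}\) for all \(k\ge k_0\). We then split the maximum into two ranges.

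\emph{Range \(k<k_0\).} The quantity \(\max_{k<k_0}\|\Delta_k\|\) is a finite random variable that does not depend on \(n\). Since \(B_n\) is regularly varying with index \(H>1/2>0\), we have \(B_n\to\infty\), and \(L_n\to\infty\). This part is therefore \(o(B_nL_n^{m/2})\).

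\emph{Range \(k_0\le k\le nT+1\).} Here
\[
\|\Delta_k\|\le\varepsilon\max_{k\le nT+1}B_kL_k^{m/2}.
\]
Two facts control the right-hand side. First, \(L_k\le L_{\lceil nT\rceil+1}\), and \(L_{nT+1}/L_n\to1\). Second, the uniform convergence theorem for regularly varying sequences with positive index gives
\[
\max_{k\le nT+1}B_k\sim (T)^H B_n\le C_T B_n,
\]
since \(\max_{k\le x}B_k\) is asymptotically equivalent to \(B_x\) when the index is positive. Hence \(\max_{k\le nT+1}\|\Delta_k\|\le C_T\varepsilon B_nL_n^{m/2}\) for all large \(n\). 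Letting \(\varepsilon\downarrow0\) along a countable sequence gives \eqref{eq:uniform-score-reduction}.

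The main obstacle is the bound \(\max_{k\le x}B_k=O(B_x)\). \(B_n\) is only regularly varying, not monotone, so this needs care. I would take it from the equivalence \(\sup_{k\le x}B_k\sim B_x\), which holds for regularly varying functions of positive index (Bingham--Goldie--Teugels, Theorem~1.5.3), applied to \(B_n\) through its asymptotic form \eqref{eq:reference-variance}. If necessary, I would pass to the smooth equivalent \(n^{H}L_a(n)L_0(n)^{m/2}\).
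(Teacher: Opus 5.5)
Your proposal is correct and follows essentially the same route as the paper. Both arguments combine four ingredients:
\begin{enumerate}
\item the pointwise reduction of Theorem~\ref{thm:finite-lag-reduction};
\item a uniform bound \(\sup_{3\le k\le Tn+1}B_kL_k^{m/2}/(B_nL_n^{m/2})\le C_T\) from regular variation of positive index, where the paper cites Potter's bound and you cite the equivalence \(\max_{k\le x}B_k\sim B_x\) from Bingham--Goldie--Teugels;
\item negligibility of the finitely many initial terms;
\item the fact that linear interpolation is a convex combination of mesh values.
\end{enumerate}
Your write-up is a more detailed version of the paper's three-line argument.
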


\begin{proof}
Put \(E_k:=S_k-\mathcal JT_k\). By
Theorem~\ref{thm:finite-lag-reduction},
\(\|E_k\|/(B_kL_k^{m/2})\to0\) a.s. The sequence
\(B_kL_k^{m/2}\) is regularly varying with positive index \(H\).
Therefore, for every fixed \(T\), Potter's bound gives
\(\sup_{3\le k\le Tn+1}B_kL_k^{m/2}/(B_nL_n^{m/2})\le C_T\)
for all sufficiently large \(n\). The finitely many initial values are
negligible, and hence
\(\max_{0\le k\le Tn+1}\|E_k\|/(B_nL_n^{m/2})\to0\) a.s.
Linear interpolation does not change this conclusion.
\end{proof}

\subsection{Consequences for the weighted learning score}
\label{subsec:score-consequences}

The reduction theorem first yields an asymptotic covariance statement.

\begin{corollary}[Asymptotic covariance and one-dimensional geometry]
\label{cor:score-covariance}
Under the assumptions of
Theorem~\ref{thm:finite-lag-reduction},
\begin{equation}
\label{eq:score-covariance}
B_n^{-2}\operatorname{Cov}(S_n)
\longrightarrow
\mathcal J\mathcal J^\top .
\end{equation}
Consequently, for every \(v\in\mathbb R^p\) with
\(v^\top\mathcal J\neq0\),
\(\operatorname{Var}(v^\top S_n)\sim
(v^\top\mathcal J)^2B_n^2\), whereas
\(\operatorname{Var}(v^\top S_n)=o(B_n^2)\) whenever
\(v^\top\mathcal J=0\).
\end{corollary}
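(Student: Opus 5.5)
This is an $L^2$ statement, so it follows from the decomposition $S_n=\mathcal J\sum_{t=r+1}^{n}a_tH_m(X_t)+D_n+U_n$ used in the proof of Theorem~\ref{thm:finite-lag-reduction}, together with the global variance bounds of Lemma~\ref{lem:block-variance}. No almost-sure argument is needed. Set $\widetilde T_n:=\sum_{t=r+1}^{n}a_tH_m(X_t)$ and $R_n:=D_n+U_n$. Then
\[
\operatorname{Cov}(S_n)=\mathcal J\mathcal J^\top\operatorname{Var}(\widetilde T_n)+\mathcal J\operatorname{Cov}(\widetilde T_n,R_n)+\operatorname{Cov}(R_n,\widetilde T_n)\mathcal J^\top+\operatorname{Cov}(R_n).
\]

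First, $\operatorname{Var}(\widetilde T_n)\sim B_n^2$. The difference $T_n-\widetilde T_n$ consists of the first $r$ terms, so it has bounded $L^2$ norm. Since $B_n\to\infty$ by \eqref{eq:reference-variance}, Minkowski's inequality gives $\|\widetilde T_n\|_2/B_n\to1$.

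Second, $\mathbb E\|R_n\|^2=o(B_n^2)$. This is \eqref{eq:d-global} combined with \eqref{eq:u-global}, using $(x+y)^2\le2x^2+2y^2$. Hence $\|\operatorname{Cov}(R_n)\|_{\mathrm{op}}\le\mathbb E\|R_n\|^2=o(B_n^2)$.

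Third, for the cross terms, Cauchy--Schwarz entrywise gives $|\operatorname{Cov}(\widetilde T_n,R_{n,k})|\le\|\widetilde T_n\|_2\|R_{n,k}\|_2=O(B_n)\,o(B_n)=o(B_n^2)$. Dividing the decomposition by $B_n^2$ then yields \eqref{eq:score-covariance}.

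For the consequences, $\operatorname{Var}(v^\top S_n)=v^\top\operatorname{Cov}(S_n)v=B_n^2\bigl((v^\top\mathcal J)^2+o(1)\bigr)$. When $v^\top\mathcal J\neq0$ this is asymptotic to $(v^\top\mathcal J)^2B_n^2$, and when $v^\top\mathcal J=0$ it is $o(B_n^2)$. There is no real obstacle here. The only point needing care is that the cross terms are controlled by Cauchy--Schwarz, using only the $o(B_n^2)$ bound on $R_n$. No finer orthogonality between $d_t$ and $H_m(X_s)$ is required, although $u_t$ is in fact orthogonal to $H_m(X_s)$ by chaos orthogonality.
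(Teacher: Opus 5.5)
Your proposal is correct and follows essentially the paper's argument. The paper likewise combines Lemma~\ref{lem:block-variance} with the $L^2$-bounded first-$r$-terms difference to get $\mathbb E\|S_n-\mathcal JT_n\|^2=o(B_n^2)$, and then concludes from $\operatorname{Var}(T_n)=B_n^2$ and Cauchy--Schwarz on the cross-covariances. The only difference is bookkeeping: you place the first $r$ terms on the reference side (giving $\operatorname{Var}(\widetilde T_n)\sim B_n^2$) rather than in the remainder.
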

\begin{proof}
Lemma~\ref{lem:block-variance}, together with the fixed difference
between the sums starting at \(1\) and at \(r+1\), which is bounded in
\(L^2\) and hence \(o(B_n^2)\) because \(B_n\to\infty\), gives
\(
    \mathbb E
    \left\|
       S_n-\mathcal JT_n
    \right\|^2
    =
    o(B_n^2).
\)
The result follows from \(\operatorname{Var}(T_n)=B_n^2\) and Cauchy--Schwarz for the cross-covariance terms.
\end{proof}

For \(m\ge2\), let \(Q_t^{(m,\alpha,\kappa)}\) be the
variance-normalized limiting kernel of the companion paper
\cite{Moldavskaya2026WeightedLIL}, with weight exponent \(\kappa\),
and let \(\mathfrak H_{\mathbb R}\) denote the real spectral Hilbert
space used there. Define
\[
\mathcal K_{m,\alpha,\kappa;T}
:=
\left\{
t\mapsto
\left\langle Q_t^{(m,\alpha,\kappa)},\xi^{\otimes m}\right\rangle
:
\|\xi\|_{\mathfrak H_{\mathbb R}}\le1
\right\}.
\]
Set
\(\Lambda_{m,\alpha,\kappa}:=
\sup_{\|\xi\|_{\mathfrak H_{\mathbb R}}\le1}
|\langle Q_1^{(m,\alpha,\kappa)},\xi^{\otimes m}\rangle|\);
for \(m=1\), set \(\Lambda_{1,\alpha,\kappa}:=1\).

\begin{corollary}[Exact weighted LIL for the learning score]
\label{cor:score-LIL}
Assume \textup{(P\ref{P1})--(P\ref{P3})}.

If \(m=1\), then
\begin{equation}
\label{eq:score-LIL-linear}
\limsup_{n\to\infty}
\frac{\|S_n\|}{B_nL_n^{1/2}}
=
\|\mathcal J\|
\qquad\text{a.s.}
\end{equation}

If \(m\ge2\) and \textup{(P\ref{P4})--(P\ref{P6})} also hold, then
\begin{equation}
\label{eq:score-LIL-nonlinear}
\limsup_{n\to\infty}
\frac{\|S_n\|}{B_nL_n^{m/2}}
=
\|\mathcal J\|\Lambda_{m,\alpha,\kappa}
\qquad\text{a.s.}
\end{equation}
Moreover, for every fixed \(T>0\), the normalized score trajectories
are almost surely relatively compact in \(C([0,T];\mathbb R^p)\), and
\begin{equation}
\label{eq:score-functional-cluster}
\operatorname{Cl}_{C([0,T];\mathbb R^p)}
\left(
\frac{S_n(\cdot)}{B_nL_n^{m/2}}
\right)
=
\mathcal J\mathcal K_{m,\alpha,\kappa;T}
\qquad\text{a.s.},
\end{equation}
where
\(\mathcal J\mathcal K_{m,\alpha,\kappa;T}
:=\{t\mapsto\mathcal J\varphi(t):
\varphi\in\mathcal K_{m,\alpha,\kappa;T}\}\).
\end{corollary}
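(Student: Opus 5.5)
The plan is to transfer the scalar results of the companion paper for \(T_n\) to the vector score \(S_n\) using the finite-lag reduction. Theorem~\ref{thm:finite-lag-reduction} gives \(S_n=\mathcal JT_n+o_{\mathrm{a.s.}}(B_nL_n^{m/2})\), and Corollary~\ref{cor:uniform-score-reduction} gives the same statement uniformly on \([0,T]\). The leading term is a deterministic vector times a scalar process, so \(\|\mathcal JT_n\|=\|\mathcal J\|\,|T_n|\). Hence
\[
\limsup_{n\to\infty}\frac{\|S_n\|}{B_nL_n^{m/2}}
=\|\mathcal J\|\limsup_{n\to\infty}\frac{|T_n|}{B_nL_n^{m/2}}
\qquad\text{a.s.},
\]
and the endpoint statements reduce to the scalar weighted LIL for \(T_n=\sum_{t\le n}a_tH_m(X_t)\).

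For \(m=1\) I would invoke the companion paper's Gaussian weighted LIL. Under (P\ref{P1})--(P\ref{P3}) it gives \(\limsup|T_n|/(B_nL_n^{1/2})=1\) a.s., since \(T_n\) is Gaussian with variance \(B_n^2\) and \(B_n\) is regularly varying with index \(H>1/2\). This yields \eqref{eq:score-LIL-linear}.

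For \(m\ge2\), under (P\ref{P4})--(P\ref{P6}), the companion paper supplies two facts for the interpolated trajectory \(T_n(\cdot)/(B_nL_n^{m/2})\). First, it is almost surely relatively compact in \(C([0,T])\). Second, its cluster set is \(\mathcal K_{m,\alpha,\kappa;T}\), and the variational constant is \(\limsup|T_n|/(B_nL_n^{m/2})=\Lambda_{m,\alpha,\kappa}\). Multiplying by \(\mathcal J\) gives \eqref{eq:score-LIL-nonlinear}.

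For the functional statement, I would use the uniform reduction to write \(S_n(\cdot)/(B_nL_n^{m/2})=\mathcal J\,T_n(\cdot)/(B_nL_n^{m/2})+\epsilon_n(\cdot)\), where \(\sup_{[0,T]}\|\epsilon_n\|\to0\) a.s. The map \(\varphi\mapsto\mathcal J\varphi\) from \(C([0,T];\mathbb R)\) to \(C([0,T];\mathbb R^p)\) is linear and continuous. It therefore preserves relative compactness, and it maps the cluster set of the scalar sequence onto the cluster set of the image sequence. A uniformly vanishing perturbation changes neither relative compactness nor the set of limit points. This gives \eqref{eq:score-functional-cluster}.

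A normalization check is needed throughout. The companion results are stated for sums starting at \(t=1\), with the same \(B_n\), \(L_n\) and interpolation convention, and \(T_n(\cdot)\) here is built from exactly that statistic. The convention \(S_k=0\) for \(k\le r\) affects only finitely many terms, and Corollary~\ref{cor:uniform-score-reduction} already absorbs it.

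The main obstacle is exact bookkeeping of the imported scalar theorem rather than any new estimate. One must check that the companion functional LIL is stated on \([0,T]\) and not only on \([0,1]\). One must also check that its cluster set is exactly \(\{\langle Q_t,\xi^{\otimes m}\rangle:\|\xi\|\le1\}\) with the variance-normalized kernel, and that the endpoint constant \(\Lambda_{m,\alpha,\kappa}\) is its evaluation at \(t=1\) under the same normalization. If the companion result is stated only on \([0,1]\), I would extend it to \([0,T]\) by the regular variation of \(B_n\), through the scaling \(B_{nT}\sim T^HB_n\) and \(L_{nT}\sim L_n\). I would also check that the rescaled kernel \(Q_{Tt}\) matches the definition of \(\mathcal K_{m,\alpha,\kappa;T}\).
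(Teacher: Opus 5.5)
Your proposal is correct and follows the paper's proof. For $m=1$ the paper combines the companion sharp linear weighted LIL for $T_n$ with Theorem~\ref{thm:finite-lag-reduction}; for $m\ge2$ it applies the companion functional cluster theorem to $T_n(\cdot)$, transfers relative compactness and the cluster set through Corollary~\ref{cor:uniform-score-reduction} and multiplication by $\mathcal J$, and obtains the endpoint constant by evaluation at $t=1$ (the paper imports the companion result directly on $[0,T]$, so your contingency rescaling step is not used).
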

\begin{proof}
For \(m=1\), apply the sharp linear weighted LIL of the companion
paper to \(T_n\) and use
Theorem~\ref{thm:finite-lag-reduction}.
For \(m\ge2\), apply its nonlinear functional cluster theorem to
\(T_n(\cdot)\). Corollary~\ref{cor:uniform-score-reduction} shows that
the normalized score trajectories have, after multiplication by
\(\mathcal J\), the same relative compactness and cluster set.
Evaluation at \(t=1\) gives \eqref{eq:score-LIL-nonlinear}.
\end{proof}

\begin{remark}[Pathwise geometry of the learning score]
\label{rem:score-rank-one-geometry}
Equations \eqref{eq:score-covariance} and
\eqref{eq:score-functional-cluster} show that the leading \(p\)-dimensional score fluctuations are asymptotically confined to
the deterministic direction \(\mathcal J\).  This is not an
assumption of a one-dimensional parameter model.  It is a consequence
of scalar Gaussian long-memory input together with finite-lag
subordination: after low-frequency reduction, all lag coordinates
share the same leading long-memory component.
\end{remark}

% =====================================================================
% Section 4. Exact LIL for the learning error
% =====================================================================

\section{Exact LIL for the learning error}
\label{sec:parameter-LIL}

We transfer the weighted score LIL of Section~\ref{sec:score-reduction}
to the learned parameter. Put
\(\Delta_n:=\widehat\theta_n-\theta_0\) and
\(\widehat V_n(\theta):=
W_n^{-1}\sum_{t=r+1}^{n}
a_t\nabla_\theta\psi(\theta;Z_t)\).
The following standard weighted-ERM facts are collected for later use;
details are given in the Supplement
\citep{Moldavskaya2026MLSupplement}.

\begin{proposition}[Uniform laws, consistency, and local curvature]
\label{prop:local-Hessian}
Assume \textup{(P\ref{P1})}, \textup{(P\ref{P3})}, and
\textup{(L\ref{L1})--(L\ref{L4})}. Then, almost surely,
\begin{equation}
\label{eq:risk-uniform-law}
\sup_{\theta\in\Theta}
|\widehat R_n(\theta)-R(\theta)|
\longrightarrow0,
\qquad
\widehat\theta_n\longrightarrow\theta_0.
\end{equation}
Moreover, for every compact \(K\subset\Theta_0\),
\begin{equation}
\label{eq:hessian-uniform-law}
\sup_{\theta\in K}
\|\widehat V_n(\theta)-V(\theta)\|
\longrightarrow0,
\qquad
V(\theta):=
\mathbb E\nabla_\theta\psi(\theta;Z_0).
\end{equation}
For all sufficiently large \(n\), define
\begin{equation}
\label{eq:integrated-Hessian}
\overline V_n
:=
\int_0^1
\widehat V_n(\theta_0+u\Delta_n)\,du.
\end{equation}
Then
\begin{equation}
\label{eq:integrated-Hessian-convergence}
\overline V_n\longrightarrow V:=V(\theta_0),
\qquad
\overline V_n^{-1}\longrightarrow V^{-1},
\end{equation}
and
\begin{equation}
\label{eq:exact-score-identity}
\Delta_n
=
-\,W_n^{-1}\overline V_n^{-1}S_n.
\end{equation}
\end{proposition}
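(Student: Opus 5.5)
The plan is to first prove a weighted strong law of large numbers for stationary $L^1$ (indeed $L^2$) functionals of the Gaussian window, and then derive the uniform laws, consistency and Hessian statements from it in the standard way.

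\textbf{Step 1: weighted SLLN.} Let $F\in L^2(\mu_r)$. The goal is
\[
W_n^{-1}\sum_{t=r+1}^{n}a_tF(\mathbf X_t^{(r)})\to\mathbb E F(\mathbf X_0^{(r)})
\qquad\text{a.s.}
\]
One route is ergodicity: under (P1), $\rho(h)\to0$, so the Gaussian sequence is mixing and hence ergodic. Birkhoff's theorem then gives the unweighted SLLN for $F(\mathbf X_t^{(r)})$. Since $\kappa<1$ and $a_t=t^{-\kappa}L_a(t)$ is regularly varying, one can pass to the weighted averages by Abel summation. Writing $A_t$ for the unweighted partial sums, the weighted sum equals $a_nA_n-\sum_t(a_{t+1}-a_t)A_t$. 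Because $a_t$ is only regularly varying and not monotone, I would instead compare with the smooth majorant/minorant weights obtained via the smooth variation theorem~\cite{BGT1987}; alternatively, (P5) could be invoked where available.

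A cleaner alternative avoids monotonicity entirely. Truncate $F$ so that $F=F_M+(F-F_M)$ with $F_M$ bounded. For the tail part, use $\sum a_t|F-F_M|\le(\max\text{-ratio})$ combined with Birkhoff for $|F-F_M|$ along dyadic blocks, since $a_t\asymp N^{-\kappa}L_a(N)$ uniformly on $[N,2N]$. For the bounded part, use the $L^2$ bound $\operatorname{Var}(\sum a_t F_M)\le C\sum_{s,t}a_sa_t|\rho(t-s)|$. This follows from the Hermite-type bound $|\operatorname{Cov}|\le\vartheta_{t-s}\|F\|^2$, exactly as in the proof of Lemma~\ref{lem:finite-lag-covariance} with $q\ge1$. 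The bound is $O(W_n^2 n^{-\alpha}L_0(n))$ up to slowly varying factors, so Chebyshev along $n_k=2^k$ (or $\lfloor(1+\epsilon)^k\rfloor$ if the decay is only slowly varying) plus blockwise interpolation gives a.s.\ convergence. I would commit to this covariance-plus-blocking argument, since it uses only results already in the paper.

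\textbf{Step 2: uniform laws.} I would take a finite $\epsilon$-net $\{\theta_i\}$ of compact $\Theta$. The Lipschitz envelope $\Xi_\ell$ gives
\[
\sup_\theta|\widehat R_n(\theta)-R(\theta)|\le\max_i|\widehat R_n(\theta_i)-R(\theta_i)|+\epsilon\bigl(W_n^{-1}\textstyle\sum a_t\Xi_\ell(\mathbf X_t^{(r)})+\mathbb E\Xi_\ell\bigr).
\]
Applying Step 1 to the finitely many functions $\ell(\theta_i;\Phi)$ and to $\Xi_\ell$, and then letting $\epsilon\downarrow0$ along a countable sequence, yields the uniform law in \eqref{eq:risk-uniform-law}. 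Consistency is then the usual argmin argument. Well-separation (L1) gives $R(\widehat\theta_n)-R(\theta_0)\le2\sup|\widehat R_n-R|\to0$, which forces $\widehat\theta_n\to\theta_0$. The Hessian law \eqref{eq:hessian-uniform-law} is proved identically on $K\subset\Theta_0$, using the envelope $\Xi_H$ for the net and $\Xi_2$ for integrability; continuity of $V(\cdot)$ follows by dominated convergence.

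\textbf{Step 3: score identity and convergence.} Since $\widehat\theta_n\to\theta_0\in\operatorname{int}\Theta$, eventually $\widehat\theta_n$ lies in a closed ball $K\subset\Theta_0$ and is an interior minimizer. Hence $\nabla\widehat R_n(\widehat\theta_n)=0$; differentiation under the finite sum is justified by (L2) a.s.\ in $\mathbf X$. The fundamental theorem of calculus along the segment then gives
\[
0=W_n^{-1}S_n+\overline V_n\Delta_n.
\]
Moreover,
\[
\|\overline V_n-V\|\le\sup_K\|\widehat V_n-V(\cdot)\|+\sup_{u}\|V(\theta_0+u\Delta_n)-V\|\to0.
\]
Invertibility of $V$ from (L4) then gives invertibility of $\overline V_n$ for large $n$, and $\overline V_n^{-1}\to V^{-1}$, which yields \eqref{eq:exact-score-identity}.

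\textbf{Main obstacle.} The main obstacle is Step 1. The slowly decaying covariance needs the geometric blocking, and the irregular weights need to be handled through the within-block comparability $a_t\asymp a_N$.
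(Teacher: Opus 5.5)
Your Steps~2 and~3 essentially reproduce the paper's argument: a finite net with the Lipschitz envelope, the well-separation argmin argument, and the integral Taylor formula on a convex ball inside $\Theta_0$. The difference is entirely in Step~1.

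The route you sketch first and then set aside is the one the paper uses. The Supplement combines Birkhoff's theorem with summation by parts against a smooth equivalent weight $\widetilde a$ from the smooth variation theorem. Monotonicity is never needed; the argument uses only $|\widetilde a_{t+1}-\widetilde a_t|\le C\widetilde a_t/t$ and Karamata, followed by a short step replacing $\widetilde a_t$ by $a_t$. That proof works for any stationary ergodic $L^1$ sequence and any regularly varying weight with $\kappa<1$. It uses \textup{(P\ref{P1})} only through $\rho(h)\to0$.

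The route you commit to is also correct. It combines truncation, the maximal-correlation bound $|\operatorname{Cov}|\le\vartheta_{|t-s|}\|F_M\|_2^2$ from the proof of Lemma~\ref{lem:finite-lag-covariance}, the long-memory double-sum estimate under \textup{(P\ref{P1})} and \textup{(P\ref{P3})}, and Chebyshev with Borel--Cantelli. It gives an explicit polynomial rate $n^{-\alpha}L_0(n)$ for the normalized variance of the bounded part. However, it uses much more of the model and still needs Birkhoff for the truncation tail, so it does not avoid ergodic theory.

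Two remarks on its execution:
\begin{itemize}
\item The interpolation between checkpoints must run along $n_k=\lfloor(1+\epsilon)^k\rfloor$ with $\epsilon\downarrow0$, or use a maximal inequality. For bounded signed summands the crude within-block error is of order $M\bigl(W_{n_{k+1}}/W_{n_k}-1\bigr)$, which tends to $M(2^{1-\kappa}-1)\neq0$ when $n_k=2^k$. Summability is not what forces the finer sequence, since the variance decays polynomially.
\item Because $F\in L^2$, you can drop both truncation and Birkhoff. Apply the variance bound to $F^+$ and $F^-$ separately, and interpolate using monotonicity of the nonnegative partial sums (recall $a_t>0$). This makes Step~1 self-contained within the Gaussian model.
\end{itemize}
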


\begin{proof}
Since \(\rho(h)\to0\), the Gaussian sequence and its finite-lag
measurable factors are ergodic. A weighted ergodic theorem for the
regularly varying weights, together with a finite-net argument and the
\(L^2\)-envelopes in \textup{(L\ref{L3})}, yields
\eqref{eq:risk-uniform-law}--\eqref{eq:hessian-uniform-law};
identifiability gives consistency. Uniform Hessian convergence along
the segment between \(\theta_0\) and \(\widehat\theta_n\) yields
\eqref{eq:integrated-Hessian-convergence}. Since
\(\theta_0\in\operatorname{int}\Theta\) and
\(\widehat\theta_n\to\theta_0\), almost surely
\(\widehat\theta_n\in\operatorname{int}\Theta\) for all sufficiently
large \(n\), so
\(\nabla\widehat R_n(\widehat\theta_n)=0\).
The integral Taylor formula then gives
\eqref{eq:exact-score-identity}. Details are given in the Supplement.
\end{proof}

Define the natural learning-error scale
\(r_n:=B_nL_n^{m/2}/W_n\).

\begin{theorem}[LIL-scale Bahadur representation]
\label{thm:LIL-Bahadur}
Assume \textup{(P\ref{P1})--(P\ref{P3})} and
\textup{(L\ref{L1})--(L\ref{L4})}. Then
\begin{equation}
\widehat\theta_n-\theta_0
=
-\,W_n^{-1}V^{-1}S_n+o_{\mathrm{a.s.}}(r_n).
\end{equation}
Consequently,
\begin{equation}
\widehat\theta_n-\theta_0
=
-\,W_n^{-1}V^{-1}\mathcal J
\sum_{t=1}^{n}a_tH_m(X_t)
+
o_{\mathrm{a.s.}}(r_n),
\end{equation}
and
\(\|\widehat\theta_n-\theta_0\|=O_{\mathrm{a.s.}}(r_n)\).
\end{theorem}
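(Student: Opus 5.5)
The plan is to start from the exact score identity \eqref{eq:exact-score-identity} of Proposition~\ref{prop:local-Hessian}, which holds almost surely for all sufficiently large \(n\):
\(\Delta_n=-W_n^{-1}\overline V_n^{-1}S_n\).
Subtracting the claimed leading term gives
\[
\Delta_n+W_n^{-1}V^{-1}S_n
=
-W_n^{-1}\bigl(\overline V_n^{-1}-V^{-1}\bigr)S_n ,
\]
so the first assertion reduces to
\[
\bigl\|\overline V_n^{-1}-V^{-1}\bigr\|_{\mathrm{op}}\,\|S_n\|
=
o_{\mathrm{a.s.}}\bigl(B_nL_n^{m/2}\bigr).
\]

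The first factor tends to zero almost surely by \eqref{eq:integrated-Hessian-convergence}. For the second factor it suffices to have \(\|S_n\|=O_{\mathrm{a.s.}}(B_nL_n^{m/2})\), and we want this under (P\ref{P1})--(P\ref{P3}) only, without invoking (P\ref{P4})--(P\ref{P6}). By Theorem~\ref{thm:finite-lag-reduction},
\[
\|S_n\|\le\|\mathcal J\|\,|T_n|+o_{\mathrm{a.s.}}\bigl(B_nL_n^{m/2}\bigr),
\]
so we need \(|T_n|=O_{\mathrm{a.s.}}(B_nL_n^{m/2})\).

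This upper-class bound is the main obstacle. We do not need the sharp constant, so a crude Borel--Cantelli argument in the spirit of the proof of Theorem~\ref{thm:finite-lag-reduction} should suffice.
\begin{enumerate}
\item Along \(n_k=2^k\), apply hypercontractivity in the \(m\)th chaos to get
\(\mathbb P(|T_{n_k}|>xB_{n_k}L_{n_k}^{m/2})\le C\exp\{-cx^{2/m}L_{n_k}\}\asymp k^{-2cx^{2/m}}\). This is summable once \(x\) is large enough.
\item For the blocks, use the reference analogue of the block bound \eqref{eq:reference-block-convolution}, namely \(\mathbb E|T_{I}|^2\le CB_N^2(|I|/N)^q\) with \(q>1\). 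Combined with the chaining maximal lemma \cite[Lemma~10.5]{Moldavskaya2026WeightedLIL}, this controls \(\max_{n_k\le n\le n_{k+1}}|T_n-T_{n_k}|\) on the same scale with a summable tail.
\item Regular variation of \(B_nL_n^{m/2}\) then transfers the bound from \(n_k\) to all \(n\).
\end{enumerate}
Alternatively, one can cite the upper-half of the companion weighted LIL directly if it is available under (P\ref{P1})--(P\ref{P3}) alone. This is clearly so for \(m=1\) by Corollary~\ref{cor:score-LIL}.

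Once the first representation is established, the second follows by inserting Theorem~\ref{thm:finite-lag-reduction}:
\[
W_n^{-1}V^{-1}\bigl(S_n-\mathcal JT_n\bigr)
=
W_n^{-1}\,o_{\mathrm{a.s.}}\bigl(B_nL_n^{m/2}\bigr)
=
o_{\mathrm{a.s.}}(r_n),
\]
since \(V^{-1}\) is a fixed matrix. Finally, \(\|\widehat\theta_n-\theta_0\|=O_{\mathrm{a.s.}}(r_n)\) follows from the same bound: \(\|W_n^{-1}V^{-1}\mathcal JT_n\|\le\|V^{-1}\mathcal J\|\,|T_n|/W_n=O_{\mathrm{a.s.}}(r_n)\).
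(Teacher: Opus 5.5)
Your proposal is correct and follows essentially the same route as the paper. It uses the exact score identity of Proposition~\ref{prop:local-Hessian}, \(\overline V_n^{-1}\to V^{-1}\), and the bound \(\|S_n\|=O_{\mathrm{a.s.}}(B_nL_n^{m/2})\) obtained from Theorem~\ref{thm:finite-lag-reduction} together with an upper-class bound for \(T_n\); the finite-lag reduction then gives the second representation. The only difference is that the paper simply cites the companion's general upper bound \cite[Theorem~3.2]{Moldavskaya2026WeightedLIL} for \(T_n\), whereas you also sketch a self-contained proof under \textup{(P\ref{P1})--(P\ref{P3})} via hypercontractivity, Borel--Cantelli and the block maximal inequality, and that sketch is sound.
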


\begin{proof}
Theorem~\ref{thm:finite-lag-reduction} and the general upper bound
\cite[Theorem~3.2]{Moldavskaya2026WeightedLIL}, applied to \(T_n\),
give \(\|S_n\|=O_{\mathrm{a.s.}}(B_nL_n^{m/2})\).
Hence \eqref{eq:exact-score-identity} and
\(\overline V_n^{-1}\to V^{-1}\) imply
\[
\Delta_n+W_n^{-1}V^{-1}S_n
=
-\,W_n^{-1}(\overline V_n^{-1}-V^{-1})S_n
=
o_{\mathrm{a.s.}}(r_n).
\]

The second assertion follows from
Theorem~\ref{thm:finite-lag-reduction}, and the last from the first
and the preceding score bound.
\end{proof}

Put \(q_0:=V^{-1}\mathcal J\in\mathbb R^p\).

\begin{theorem}[Exact LIL and cluster geometry of the learning error]
\label{thm:parameter-LIL}
Under the assumptions of Theorem~\ref{thm:LIL-Bahadur}; if \(m\ge2\),
assume also \textup{(P\ref{P4})--(P\ref{P6})}.
\begin{equation}
\label{eq:parameter-limsup}
\limsup_{n\to\infty}
\frac{W_n\|\widehat\theta_n-\theta_0\|}
     {B_nL_n^{m/2}}
=
\|q_0\|\Lambda_{m,\alpha,\kappa}
\qquad\text{a.s.}
\end{equation}
where \(\Lambda_{1,\alpha,\kappa}=1\). More generally, for every
\(v\in\mathbb R^p\),
\begin{equation}
\label{eq:directional-parameter-LIL}
\limsup_{n\to\infty}
\frac{W_n|v^\top(\widehat\theta_n-\theta_0)|}
     {B_nL_n^{m/2}}
=
|v^\top q_0|\Lambda_{m,\alpha,\kappa}
\qquad\text{a.s.}
\end{equation}
If \(v^\top q_0=0\), then
\begin{equation}
\label{eq:orthogonal-directions}
\frac{W_n v^\top(\widehat\theta_n-\theta_0)}
     {B_nL_n^{m/2}}
\longrightarrow0
\qquad\text{a.s.}
\end{equation}

For \(m\ge2\), let
\[
\mathcal C_{m,\alpha,\kappa}
:=
\left\{
\left\langle Q_1^{(m,\alpha,\kappa)},\xi^{\otimes m}\right\rangle:
\|\xi\|_{\mathfrak H_{\mathbb R}}\le1
\right\}.
\]
Then
\begin{equation}
\label{eq:parameter-cluster}
\operatorname{Clust}_{n\to\infty}
\left\{
\frac{W_n(\widehat\theta_n-\theta_0)}
     {B_nL_n^{m/2}}
\right\}
=
-q_0\,\mathcal C_{m,\alpha,\kappa}
\qquad\text{a.s.}
\end{equation}
\end{theorem}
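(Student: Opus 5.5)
The plan is to reduce everything to the scalar reference statistic \(T_n\) through the Bahadur representation. By Theorem~\ref{thm:LIL-Bahadur},
\[
\frac{W_n(\widehat\theta_n-\theta_0)}{B_nL_n^{m/2}}
=
-\,q_0\,\frac{T_n}{B_nL_n^{m/2}}+o_{\mathrm{a.s.}}(1),
\]
where \(q_0=V^{-1}\mathcal J\). The normalized learning error is therefore, up to an almost surely vanishing vector, the fixed vector \(-q_0\) multiplied by the scalar sequence \(\tau_n:=T_n/(B_nL_n^{m/2})\). Every claim of the theorem will then follow from the scalar limsup and cluster-set results of the companion paper for \(\tau_n\).

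\emph{Step 1 (scalar input).} For \(m=1\), the sharp linear weighted LIL of the companion paper, as already used in Corollary~\ref{cor:score-LIL}, gives \(\limsup|\tau_n|=1\) a.s. For \(m\ge2\), under (P\ref{P4})--(P\ref{P6}), the functional cluster theorem applied to \(T_n(\cdot)\) and evaluated at \(t=1\) gives \(\operatorname{Clust}\{\tau_n\}=\mathcal C_{m,\alpha,\kappa}\) a.s. Evaluation at a fixed time is continuous on \(C([0,T])\), so the cluster set of the evaluated sequence is the image of the functional cluster set. By relative compactness, \(\limsup|\tau_n|=\sup_{c\in\mathcal C_{m,\alpha,\kappa}}|c|=\Lambda_{m,\alpha,\kappa}\).

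\emph{Step 2 (directional and norm statements).} Fix \(v\in\mathbb R^p\). Then \(W_nv^\top\Delta_n/(B_nL_n^{m/2})=-(v^\top q_0)\tau_n+o(1)\) a.s. Taking the absolute value and the limsup, and using that an \(o(1)\) perturbation does not change the limsup, gives \eqref{eq:directional-parameter-LIL}. If \(v^\top q_0=0\), the leading term vanishes identically, which gives \eqref{eq:orthogonal-directions}; this also follows from Theorem~\ref{thm:LIL-Bahadur} directly. For the norm, \(\|{-q_0}\tau_n+o(1)\|=\|q_0\||\tau_n|+o(1)\), which yields \eqref{eq:parameter-limsup}.

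\emph{Step 3 (cluster geometry).} The map \(c\mapsto -q_0c\) from \(\mathbb R\) to \(\mathbb R^p\) is continuous. The cluster set of \(\{-q_0\tau_n\}\) is therefore \(-q_0\mathcal C_{m,\alpha,\kappa}\), because \(\tau_n\) is a.s. bounded and hence relatively compact. Adding an a.s. vanishing perturbation leaves the cluster set unchanged, which proves \eqref{eq:parameter-cluster}.

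The main point to check is Step 1 for \(m\ge2\). The evaluation of the functional cluster set at \(t=1\) must give exactly \(\mathcal C_{m,\alpha,\kappa}\) with the interpolation convention. Since \(T_n(1)=T_n\), this holds. The remaining arguments are routine continuity and perturbation steps.
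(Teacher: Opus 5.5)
Your proposal is correct and follows the paper's proof. Theorem~\ref{thm:LIL-Bahadur} reduces the normalized error to \(-q_0T_n/(B_nL_n^{m/2})+o_{\mathrm{a.s.}}(1)\), and every claim then follows from the companion sharp weighted LIL for \(m=1\) and the endpoint cluster theorem for \(T_n\) when \(m\ge2\); obtaining that endpoint cluster set by evaluating the functional cluster set at \(t=1\) (with \(T\ge1\)) is also how the paper passes from functional to endpoint statements in Corollary~\ref{cor:score-LIL}.
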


\begin{proof}
By Theorem~\ref{thm:LIL-Bahadur},
\begin{equation}
\label{eq:normalized-parameter-representation}
\frac{W_n(\widehat\theta_n-\theta_0)}
     {B_nL_n^{m/2}}
=
-q_0\frac{T_n}{B_nL_n^{m/2}}
+o_{\mathrm{a.s.}}(1).
\end{equation}
The assertions follow from the sharp weighted LIL and, for \(m\ge2\),
the endpoint cluster theorem for \(T_n\).
\end{proof}

Thus the leading learning error lies in the direction
\(V^{-1}\mathcal J\), while orthogonal directions are negligible on
the LIL scale; Section~\ref{sec:rates} shows that \(r_n\) has
polynomial exponent \(-\alpha m/2\), independent of \(\kappa\).

\begin{remark}[Parity of the endpoint cluster set]
\label{rem:parameter-cluster-parity}
For the reference statistic \(T_n\), the explicit kernel representation
in the companion paper gives
\(\langle Q_1^{(m,\alpha,\kappa)},\xi^{\otimes m}\rangle
=C\int_0^1y^{-\kappa}\phi_\xi(y)^m\,dy\), with \(C>0\) and
real \(\phi_\xi\). Hence
\(\mathcal C_{m,\alpha,\kappa}=[0,\Lambda_{m,\alpha,\kappa}]\)
for even \(m\), whereas
\(\mathcal C_{m,\alpha,\kappa}
=[-\Lambda_{m,\alpha,\kappa},\Lambda_{m,\alpha,\kappa}]\)
for odd \(m\). Thus the leading parameter-error cluster set is a
one-sided segment for even \(m\) and a symmetric segment for odd \(m\).
\end{remark}
%\section{Functional LIL for the learning trajectory}
% =====================================================================
% Section 5. Functional LIL for the learning trajectory
% =====================================================================

\section{Functional LIL for the learning trajectory}
\label{sec:functional-learning}

We now describe the complete learning trajectory. Throughout this
section \(m\ge2\), so that the nonlinear functional cluster theorem of
the companion weighted-LIL paper
\cite{Moldavskaya2026WeightedLIL} is available.

For \(k\ge r+1\), put
\(\Gamma_k:=W_k(\widehat\theta_k-\theta_0)\), and set
\(\Gamma_k=0\) for \(0\le k\le r\). For \(T>0\), let
\(\mathcal E_n\) be the linear interpolation on \([0,T]\) of
\(\mathcal E_n(k/n)=\Gamma_k\).
Recall \(q_0=V^{-1}\mathcal J\), and define
\begin{equation}
\label{eq:parameter-functional-cluster-set}
\mathcal K^{\theta}_{m,\alpha,\kappa;T}
:=
\left\{
t\mapsto
-q_0
\left\langle Q_t^{(m,\alpha,\kappa)},\xi^{\otimes m}\right\rangle
:
\|\xi\|_{\mathfrak H_{\mathbb R}}\le1
\right\}.
\end{equation}

\begin{theorem}[Functional LIL for the weighted learning trajectory]
\label{thm:functional-parameter-LIL}
Assume \textup{(P\ref{P1})--(P\ref{P6})} and
\textup{(L\ref{L1})--(L\ref{L4})}. Then, for every fixed \(T>0\),
the sequence
\(\{\mathcal E_n(\cdot)/(B_nL_n^{m/2})\}_{n\ge3}\)
is almost surely relatively compact in
\(C([0,T];\mathbb R^p)\), and
\begin{equation}
\label{eq:functional-parameter-cluster}
\operatorname{Cl}_{C([0,T];\mathbb R^p)}
\left(
\frac{\mathcal E_n(\cdot)}{B_nL_n^{m/2}}
\right)
=
\mathcal K^{\theta}_{m,\alpha,\kappa;T}
\qquad\text{a.s.}
\end{equation}
\end{theorem}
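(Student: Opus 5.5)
The plan is to reduce the trajectory \(\mathcal E_n\) to the interpolated score trajectory \(-V^{-1}S_n(\cdot)\), uniformly on \([0,T]\), and then invoke Corollary~\ref{cor:score-LIL}. By \eqref{eq:exact-score-identity}, for every \(k\ge r+1\) large enough that \(\overline V_k\) is defined and invertible we have \(\Gamma_k=W_k\Delta_k=-\overline V_k^{-1}S_k\). Hence
\[
\Gamma_k+V^{-1}S_k=-(\overline V_k^{-1}-V^{-1})S_k .
\]
The first step is to show that
\[
\max_{0\le k\le Tn+1}\frac{\|\Gamma_k+V^{-1}S_k\|}{B_nL_n^{m/2}}\longrightarrow0 \qquad\text{a.s.}
\]

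For this uniform bound, fix an \(\omega\) in the full-measure event on which \(\overline V_k^{-1}\to V^{-1}\) (Proposition~\ref{prop:local-Hessian}) and \(\|S_k\|\le C(\omega)B_kL_k^{m/2}\) for \(k\ge3\) (proof of Theorem~\ref{thm:LIL-Bahadur}). Given \(\varepsilon>0\), choose \(k_0(\omega)\) such that \(\|\overline V_k^{-1}-V^{-1}\|\le\varepsilon\) for \(k\ge k_0\). Split the range of \(k\) in two.

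\emph{Large \(k\), \(k_0\le k\le Tn+1\).} The error is at most \(\varepsilon C(\omega)B_kL_k^{m/2}\). By regular variation of \(B_kL_k^{m/2}\) with positive index \(H\) and Potter's bound, as in the proof of Corollary~\ref{cor:uniform-score-reduction}, we have \(\sup_{3\le k\le Tn+1}B_kL_k^{m/2}/(B_nL_n^{m/2})\le C_T\). So this part contributes at most \(C_TC(\omega)\varepsilon\).

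\emph{Small \(k\), \(k<k_0\).} These finitely many \(\Gamma_k\) and \(S_k\) are fixed, almost surely finite vectors. (For \(k\) where \(\widehat\theta_k\in\Theta\) is merely some minimizer, \(\Gamma_k\) is still bounded because \(\Theta\) is compact.) Since \(B_nL_n^{m/2}\to\infty\), they are negligible.

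Letting \(\varepsilon\downarrow0\) gives the uniform statement. Linear interpolation preserves maxima over grid points, so
\[
\sup_{0\le t\le T}\frac{\|\mathcal E_n(t)+V^{-1}S_n(t)\|}{B_nL_n^{m/2}}\to0 \qquad\text{a.s.},
\]
since \(\mathcal E_n\) and \(S_n(\cdot)\) are both interpolations of their grid values with the same convention of vanishing for \(k\le r\).

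The second step is to transfer the cluster set. By Corollary~\ref{cor:score-LIL}, under (P\ref{P1})--(P\ref{P6}) the normalized trajectories \(S_n(\cdot)/(B_nL_n^{m/2})\) are a.s. relatively compact in \(C([0,T];\mathbb R^p)\), with cluster set \(\mathcal J\mathcal K_{m,\alpha,\kappa;T}\). The map \(x\mapsto-V^{-1}x\) is a continuous linear map on \(C([0,T];\mathbb R^p)\). It therefore preserves relative compactness and maps the cluster set onto
\[
\{t\mapsto-V^{-1}\mathcal J\varphi(t):\varphi\in\mathcal K_{m,\alpha,\kappa;T}\}=\mathcal K^\theta_{m,\alpha,\kappa;T}.
\]
Finally, two sequences whose sup-distance tends to zero have the same cluster sets and the same relative compactness. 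This yields \eqref{eq:functional-parameter-cluster}.

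The main obstacle is the first step, namely making the Bahadur error uniform over \(k\le Tn+1\) rather than only at the endpoint \(n\). Theorem~\ref{thm:LIL-Bahadur} gives \(o(r_k)\) at each \(k\), but the trajectory requires the error to be normalized by the scale at \(n\). This is exactly where Potter's bound is used, together with the positivity of \(H\), which follows from \(2\kappa+\alpha m<1\). Some care is also needed with the indices \(k\) where \(\overline V_k\) is undefined, but these are finitely many for each \(\omega\).
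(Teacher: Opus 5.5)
Your proposal is correct and follows the paper's proof. It uses the identity \(\Gamma_k=-\overline V_k^{-1}S_k\), a uniform Bahadur bound on \([0,T]\) after splitting off finitely many initial \(k\), and then transfers relative compactness and the cluster set from Corollary~\ref{cor:score-LIL} through the continuous map \(f\mapsto-V^{-1}f\). The only minor difference is how you bound \(\max_{k\le Tn+1}\|S_k\|/(B_nL_n^{m/2})\): you use the endpoint bound \(\|S_k\|=O_{\mathrm{a.s.}}(B_kL_k^{m/2})\) together with Potter's bound, whereas the paper reads this off from the relative compactness of the normalized score trajectories; both work.
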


\begin{proof}
For all sufficiently large \(k\),
Proposition~\ref{prop:local-Hessian} gives
\(\Gamma_k=-\overline V_k^{-1}S_k\). Hence
\[
\Gamma_k+V^{-1}S_k
=
-(\overline V_k^{-1}-V^{-1})S_k.
\]
Since \(\overline V_k^{-1}\to V^{-1}\) a.s. and the normalized score
trajectories are almost surely relatively compact by
Corollary~\ref{cor:score-LIL}, they are uniformly bounded on
\([0,T]\). Splitting off finitely many initial \(k\)'s therefore gives
\begin{equation}
\label{eq:uniform-Bahadur}
\sup_{0\le t\le T}
\frac{\|\mathcal E_n(t)+V^{-1}S_n(t)\|}
{B_nL_n^{m/2}}
\longrightarrow0
\qquad\text{a.s.}
\end{equation}
Linear interpolation does not change the conclusion.
Relative compactness of
\(\mathcal E_n(\cdot)/(B_nL_n^{m/2})\) follows from
\eqref{eq:uniform-Bahadur} and that of
\(S_n(\cdot)/(B_nL_n^{m/2})\) in
Corollary~\ref{cor:score-LIL}.
The functional score cluster set is
\(\mathcal J\mathcal K_{m,\alpha,\kappa;T}\); hence
\eqref{eq:uniform-Bahadur} and the continuous map
\(f\mapsto-V^{-1}f\) yield
\eqref{eq:functional-parameter-cluster}.
\end{proof}

Thus every leading cluster trajectory takes values in the
one-dimensional subspace
\(\operatorname{span}\{V^{-1}\mathcal J\}\).

\subsection{The normalized learning curve}

Recall \(r_n:=B_nL_n^{m/2}/W_n\). Fix \(0<\varepsilon<T\), and let
\(\mathcal D_n\) be the linear interpolation on \([\varepsilon,T]\)
of
\(\mathcal D_n(k/n):=
(\widehat\theta_k-\theta_0)/r_n\).
Define
\begin{equation}
\label{eq:raw-learning-cluster}
\mathcal G_{m,\alpha,\kappa;\varepsilon,T}
:=
\left\{
t\mapsto
-t^{\kappa-1}q_0
\left\langle Q_t^{(m,\alpha,\kappa)},\xi^{\otimes m}\right\rangle
:
\|\xi\|_{\mathfrak H_{\mathbb R}}\le1
\right\}.
\end{equation}

\begin{theorem}[Functional LIL for the normalized learning curve]
\label{thm:raw-learning-curve}
Under the assumptions of
Theorem~\ref{thm:functional-parameter-LIL}, the sequence
\(\{\mathcal D_n\}\) is almost surely relatively compact in
\(C([\varepsilon,T];\mathbb R^p)\), and
\begin{equation}
\label{eq:raw-learning-curve-cluster}
\operatorname{Cl}_{C([\varepsilon,T];\mathbb R^p)}
(\mathcal D_n)
=
\mathcal G_{m,\alpha,\kappa;\varepsilon,T}
\qquad\text{a.s.}
\end{equation}
\end{theorem}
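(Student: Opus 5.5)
The plan is to derive this from Theorem~\ref{thm:functional-parameter-LIL} through a deterministic, continuous time reparametrization. For \(k=\lfloor nt\rfloor\) with \(t\in[\varepsilon,T]\) we have
\[
\frac{\widehat\theta_k-\theta_0}{r_n}
=
\frac{W_n}{W_k}\cdot\frac{\Gamma_k}{B_nL_n^{m/2}} .
\]
By \textup{(P\ref{P3})} and Karamata, \(W_k/W_n\to t^{1-\kappa}\), and this holds uniformly for \(k/n\in[\varepsilon/2,2T]\) by the uniform convergence theorem for regularly varying sequences. Uniformity is available because the index \(1-\kappa>0\) and the interval is bounded away from \(0\). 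Define the deterministic functions \(w_n(t):=W_n/W_{\lfloor nt\rfloor}\). Then \(\sup_{[\varepsilon,T]}|w_n(t)-t^{\kappa-1}|\to0\), and the limit \(t^{\kappa-1}\) is continuous and bounded on \([\varepsilon,T]\).

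First step: compare the two interpolations. We need
\[
\sup_{t\in[\varepsilon,T]}
\bigl\|\mathcal D_n(t)-t^{\kappa-1}\mathcal E_n(t)/(B_nL_n^{m/2})\bigr\|\to0
\quad\text{a.s.}
\]
At grid points \(k/n\), the difference equals \((w_n(k/n)-(k/n)^{\kappa-1})\,\Gamma_k/(B_nL_n^{m/2})\). By Theorem~\ref{thm:functional-parameter-LIL}, the normalized trajectories \(\mathcal E_n/(B_nL_n^{m/2})\) are a.s.\ relatively compact in \(C([0,T];\mathbb R^p)\), hence a.s.\ uniformly bounded. The grid-point differences are therefore \(o(1)\) uniformly. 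Between grid points, both \(\mathcal D_n\) and the product differ from their grid values by at most the oscillation over intervals of length \(1/n\). This oscillation vanishes uniformly: relative compactness gives equicontinuity of \(\mathcal E_n/(B_nL_n^{m/2})\), and \(t^{\kappa-1}\) is uniformly continuous on \([\varepsilon,T]\). The endpoint \(\varepsilon\) may fall between grid points, which is harmless by the same oscillation bound.

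Second step: apply the continuous map \(M\colon C([\varepsilon,T])\ni f\mapsto t^{\kappa-1}f(t)\), composed with restriction from \([0,T]\) to \([\varepsilon,T]\). This map is continuous, indeed Lipschitz with constant \(\max(\varepsilon^{\kappa-1},T^{\kappa-1})\). The images of a relatively compact sequence under a continuous map are relatively compact. Moreover, if \(x_n\) is relatively compact with cluster set \(K\) and \(y_n-M(x_n)\to0\), then \(\operatorname{Cl}(y_n)=M(K)\): every subsequential limit of \(y_n\) is the image of a subsequential limit of \(x_n\), and conversely. Applying this with \(K=\mathcal K^\theta_{m,\alpha,\kappa;T}\) gives exactly \(\mathcal G_{m,\alpha,\kappa;\varepsilon,T}\), which proves \eqref{eq:raw-learning-curve-cluster}.

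The only delicate point is the uniform convergence \(W_{\lfloor nt\rfloor}/W_n\to t^{1-\kappa}\) on \([\varepsilon,T]\). It requires the uniform convergence theorem for the regularly varying function \(x\mapsto W_{\lfloor x\rfloor}\), whose index \(1-\kappa\) is positive, together with the restriction \(t\ge\varepsilon\), which keeps the factor \(t^{\kappa-1}\) bounded. This restriction is why the theorem is stated on \([\varepsilon,T]\) rather than \([0,T]\).
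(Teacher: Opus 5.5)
Your proposal is correct and follows essentially the same route as the paper. Both arguments use three ingredients: the uniform convergence \(W_n/W_{\lfloor nt\rfloor}\to t^{\kappa-1}\) on \([\varepsilon,T]\) (Karamata plus the uniform convergence theorem), the uniform boundedness and equicontinuity of the normalized \(\mathcal E_n\) from Theorem~\ref{thm:functional-parameter-LIL}, and the continuous multiplication map \(f\mapsto t^{\kappa-1}f\). The one cosmetic difference is that the paper interpolates the mesh values \(W_n/W_k\) linearly into \(c_n\), while you use the step function \(W_n/W_{\lfloor nt\rfloor}\) and bound the within-cell oscillation directly; this is the same estimate.
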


\begin{proof}
Karamata's theorem and the uniform convergence theorem for regularly
varying functions give
\begin{equation}
\label{eq:uniform-W-ratio}
\sup_{\varepsilon\le t\le T}
\left|
\frac{W_n}{W_{\lfloor nt\rfloor}}-t^{\kappa-1}
\right|
\longrightarrow0.
\end{equation}
At mesh points,
\[
\frac{\widehat\theta_k-\theta_0}{r_n}
=
\frac{W_n}{W_k}
\frac{\Gamma_k}{B_nL_n^{m/2}}.
\]
Let \(c_n\) be the linear interpolation of the mesh values
\(W_n/W_k\). By \eqref{eq:uniform-W-ratio},
\(c_n\to t^{\kappa-1}\) uniformly, while
Theorem~\ref{thm:functional-parameter-LIL} makes the normalized
\(\mathcal E_n\)'s relatively compact and hence uniformly bounded and
equicontinuous. Thus interpolation of the preceding products differs
uniformly by \(o(1)\) from
\(t^{\kappa-1}\mathcal E_n(t)/(B_nL_n^{m/2})\).
The continuous multiplication map
\(f(t)\mapsto t^{\kappa-1}f(t)\) now gives
\eqref{eq:raw-learning-curve-cluster}.
\end{proof}

The restriction to \([\varepsilon,T]\) is intrinsic to this terminal
normalization: \(t^{\kappa-1}\) is singular at zero because
\(\kappa<1\) under \textup{(P\ref{P3})}.

\subsection{Pointwise learning envelope}

\begin{lemma}[Scaling of the limiting kernel]
\label{lem:kernel-time-scaling}
Let \(H:=1-\kappa-\alpha m/2\). For \(t>0\),
\begin{equation}
\label{eq:kernel-time-scaling}
Q_t^{(m,\alpha,\kappa)}
=
t^H\mathcal U_t^{\otimes m}Q_1^{(m,\alpha,\kappa)},
\qquad
(\mathcal U_t h)(x):=t^{1/2}h(tx).
\end{equation}
Consequently,
\begin{equation}
\label{eq:Lambda-time-scaling}
\sup_{\|\xi\|_{\mathfrak H_{\mathbb R}}\le1}
\left|
\left\langle
Q_t^{(m,\alpha,\kappa)},\xi^{\otimes m}
\right\rangle
\right|
=
t^H\Lambda_{m,\alpha,\kappa}.
\end{equation}
\end{lemma}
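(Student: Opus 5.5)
The plan is to work directly with the explicit spectral representation of the limiting kernel from the companion paper \cite{Moldavskaya2026WeightedLIL}. Under \textup{(P\ref{P4})}, the variance-normalized kernel has the form
\[
Q_t^{(m,\alpha,\kappa)}(\lambda_1,\ldots,\lambda_m)
=
C_{m,\alpha,\kappa}
\int_0^t y^{-\kappa}e^{iy(\lambda_1+\cdots+\lambda_m)}\,dy
\prod_{j=1}^m|\lambda_j|^{(\alpha-1)/2},
\]
where \(C_{m,\alpha,\kappa}>0\) is a normalizing constant independent of \(t\). This is the form underlying Remark~\ref{rem:parameter-cluster-parity}. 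The first step is to recall this formula and to check that \(\mathfrak H_{\mathbb R}\) is the real (Hermitian-symmetric) subspace of \(L^2(\mathbb R,d\lambda)\).

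Next, substitute \(y=ts\) in the \(y\)-integral. This gives
\[
t^{1-\kappa}\int_0^1 s^{-\kappa}e^{is\,t(\lambda_1+\cdots+\lambda_m)}\,ds .
\]
Then rewrite each spectral factor as
\[
|\lambda_j|^{(\alpha-1)/2}=t^{(1-\alpha)/2}|t\lambda_j|^{(\alpha-1)/2}.
\]
Collecting powers of \(t\) yields
\[
Q_t(\lambda)=t^{1-\kappa+m(1-\alpha)/2}\,Q_1(t\lambda_1,\ldots,t\lambda_m).
\]
Since \(t^{1-\kappa+m(1-\alpha)/2}=t^{H}\,t^{m/2}\), this is exactly \(t^H(\mathcal U_t^{\otimes m}Q_1)(\lambda)\), which proves \eqref{eq:kernel-time-scaling}.

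For \eqref{eq:Lambda-time-scaling}, observe that \(\mathcal U_t\) is unitary on \(L^2(\mathbb R,d\lambda)\). It commutes with \(h(x)\mapsto\overline{h(-x)}\), so it maps \(\mathfrak H_{\mathbb R}\) onto itself, and its adjoint is \(\mathcal U_{1/t}\). Therefore
\[
\langle \mathcal U_t^{\otimes m}Q_1,\xi^{\otimes m}\rangle=\langle Q_1,(\mathcal U_{1/t}\xi)^{\otimes m}\rangle .
\]
As \(\xi\) ranges over the closed unit ball of \(\mathfrak H_{\mathbb R}\), so does \(\mathcal U_{1/t}\xi\). Taking the supremum gives \(t^H\Lambda_{m,\alpha,\kappa}\).

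The main obstacle is bookkeeping rather than analysis. One must confirm that the companion paper's normalization constant and spectral-space conventions do not depend on \(t\), and that the \(2\pi\) or measure conventions in \(\mathfrak H_{\mathbb R}\) are dilation-invariant up to the stated \(t^{1/2}\) factor.

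If the kernel there is written instead in a time-domain form, the same argument applies with \(y=ts\) and the change of variables \(x\mapsto tx\) in each kernel coordinate. Alternatively, one could derive the identity from the self-similarity of the limiting weighted Hermite process \(Z(t)\stackrel{d}{=}t^HZ(1)\), together with the unitary action of dilations on \(\mathfrak H_{\mathbb R}\). The direct computation is more transparent, so it is the approach I would follow.
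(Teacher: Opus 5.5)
Your proposal is correct and follows essentially the same route as the paper, which likewise substitutes \(y=tu\) in the companion paper's explicit kernel representation to get \(Q_t^{(m,\alpha,\kappa)}=t^{H}\mathcal U_t^{\otimes m}Q_1^{(m,\alpha,\kappa)}\), and then uses unitarity of \(\mathcal U_t\) on \(\mathfrak H_{\mathbb R}\) to carry the supremum over the unit ball. Your extra bookkeeping makes explicit what the paper leaves implicit: the \(t^{(1-\alpha)/2}\) factors from the spectral weights, the \(t\)-independence of the normalizing constant, and the fact that \(\mathcal U_t\) preserves \(\mathfrak H_{\mathbb R}\) with adjoint \(\mathcal U_{1/t}\).
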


\begin{proof}
From the explicit kernel representation in
\cite{Moldavskaya2026WeightedLIL}, the substitution \(y=tu\) gives
\(Q_t^{(m,\alpha,\kappa)}
=t^{1-\kappa-\alpha m/2}
\mathcal U_t^{\otimes m}Q_1^{(m,\alpha,\kappa)}\).
Since \(\mathcal U_t\) is unitary on
\(\mathfrak H_{\mathbb R}\), taking the supremum over the unit ball
proves \eqref{eq:Lambda-time-scaling}.
\end{proof}

\begin{corollary}[Exact pointwise learning envelope]
\label{cor:pointwise-learning-envelope}
Under the assumptions of
Theorem~\ref{thm:functional-parameter-LIL}, for every fixed \(t>0\),
\begin{equation}
\label{eq:weighted-pointwise-envelope}
\limsup_{n\to\infty}
\frac{
W_{\lfloor nt\rfloor}
\|\widehat\theta_{\lfloor nt\rfloor}-\theta_0\|
}{
B_nL_n^{m/2}
}
=
t^H\|q_0\|\Lambda_{m,\alpha,\kappa}
\qquad\text{a.s.},
\end{equation}
and
\begin{equation}
\label{eq:raw-pointwise-envelope}
\limsup_{n\to\infty}
\frac{
\|\widehat\theta_{\lfloor nt\rfloor}-\theta_0\|
}{
r_n
}
=
t^{-\alpha m/2}
\|q_0\|\Lambda_{m,\alpha,\kappa}
\qquad\text{a.s.}
\end{equation}
\end{corollary}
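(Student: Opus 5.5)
The plan is to evaluate the functional cluster set of Theorem~\ref{thm:functional-parameter-LIL} at the fixed time \(t\). Lemma~\ref{lem:kernel-time-scaling} then computes the extremal value, and the regular-variation ratio \(W_n/W_{\lfloor nt\rfloor}\) converts the weighted statement into the raw one.

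\emph{Step 1 (evaluation map).} Fix \(t>0\) and choose \(T>t\). The map \(\pi_t:f\mapsto\|f(t)\|\) is continuous on \(C([0,T];\mathbb R^p)\). By Theorem~\ref{thm:functional-parameter-LIL}, the sequence \(\mathcal E_n/(B_nL_n^{m/2})\) is a.s. relatively compact with cluster set \(\mathcal K^\theta_{m,\alpha,\kappa;T}\). A standard fact follows: for a relatively compact sequence \(x_n\) with cluster set \(K\) and continuous real \(\pi\), one has \(\limsup\pi(x_n)=\max_K\pi\). Hence
\[
\limsup_{n\to\infty}\frac{\|\mathcal E_n(t)\|}{B_nL_n^{m/2}}
=\|q_0\|\sup_{\|\xi\|\le1}\bigl|\langle Q_t^{(m,\alpha,\kappa)},\xi^{\otimes m}\rangle\bigr|
=t^H\|q_0\|\Lambda_{m,\alpha,\kappa},
\]
where the last equality is \eqref{eq:Lambda-time-scaling}.

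\emph{Step 2 (mesh versus interpolation).} We must replace \(\mathcal E_n(t)\) by \(\Gamma_{\lfloor nt\rfloor}\). Here \(\mathcal E_n(t)\) is a convex combination of \(\Gamma_{\lfloor nt\rfloor}\) and \(\Gamma_{\lfloor nt\rfloor+1}\), whose points are \(\lfloor nt\rfloor/n\) and \((\lfloor nt\rfloor+1)/n\). By equicontinuity of the relatively compact normalized sequence (Arzel\`a--Ascoli), \(\|\mathcal E_n(t)-\mathcal E_n(\lfloor nt\rfloor/n)\|/(B_nL_n^{m/2})\to0\) a.s. Since \(\mathcal E_n(\lfloor nt\rfloor/n)=\Gamma_{\lfloor nt\rfloor}=W_{\lfloor nt\rfloor}(\widehat\theta_{\lfloor nt\rfloor}-\theta_0)\) for large \(n\), this proves \eqref{eq:weighted-pointwise-envelope}. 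This equicontinuity step is the only mildly delicate point. It must be taken from the relative compactness in \(C([0,T])\), not from pointwise convergence.

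\emph{Step 3 (raw envelope).} Write
\[
\frac{\|\widehat\theta_{\lfloor nt\rfloor}-\theta_0\|}{r_n}=\frac{W_n}{W_{\lfloor nt\rfloor}}\cdot\frac{W_{\lfloor nt\rfloor}\|\widehat\theta_{\lfloor nt\rfloor}-\theta_0\|}{B_nL_n^{m/2}}.
\]
By Karamata's theorem, \(W_n\sim n^{1-\kappa}L_a(n)/(1-\kappa)\) is regularly varying with index \(1-\kappa\). Therefore \(W_n/W_{\lfloor nt\rfloor}\to t^{\kappa-1}\), which is \eqref{eq:uniform-W-ratio} at a single point. A deterministic positive convergent factor passes through the limsup. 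This gives \(t^{\kappa-1}t^H\|q_0\|\Lambda_{m,\alpha,\kappa}\), and since \(\kappa-1+H=-\alpha m/2\), it yields \eqref{eq:raw-pointwise-envelope}.
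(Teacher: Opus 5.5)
Your proposal is correct and follows the paper's proof: evaluate the functional cluster set of Theorem~\ref{thm:functional-parameter-LIL} at \(t\), obtain the extremal value from Lemma~\ref{lem:kernel-time-scaling}, and multiply by \(W_n/W_{\lfloor nt\rfloor}\to t^{\kappa-1}\) using \(H+\kappa-1=-\alpha m/2\). Your equicontinuity argument for replacing \(\mathcal E_n(t)\) by the mesh value \(\Gamma_{\lfloor nt\rfloor}\) makes explicit a step that the paper leaves implicit in ``evaluation at \(t\)''.
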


\begin{proof}
Evaluation at \(t\) in
Theorem~\ref{thm:functional-parameter-LIL}, together with
Lemma~\ref{lem:kernel-time-scaling}, gives
\eqref{eq:weighted-pointwise-envelope}. Multiplying by
\(W_n/W_{\lfloor nt\rfloor}\to t^{\kappa-1}\) and using
\(H+\kappa-1=-\alpha m/2\) gives
\eqref{eq:raw-pointwise-envelope}.
\end{proof}

Thus the power profile \(t^{-\alpha m/2}\) is independent of
\(\kappa\): weighting changes the sharp constant and the cluster
geometry, but not the polynomial learning profile.

Finally, every continuously differentiable functional
\(F:\Theta_0\to\mathbb R^s\) inherits the same functional LIL through
the pathwise delta method, with effective direction
\(DF(\theta_0)V^{-1}\mathcal J\). In particular, for a fixed feature
vector \(u\), this gives the corresponding cluster set for the
prediction error
\(f_{\widehat\theta_{\lfloor nt\rfloor}}(u)-f_{\theta_0}(u)\).
The detailed delta-method statement and proof are given in the
Supplement.
%\section{Learning rates, excess risk, and optimal weighting}
% =====================================================================
% Section 6. Learning rates, excess risk, and optimal weighting
% =====================================================================
\section{Learning rates, excess risk, and optimal weighting}
\label{sec:rates}

We now make the learning scale explicit and study the effect of the
weight exponent. Recall \(L_n=2\log\log n\) and
\(r_n=B_nL_n^{m/2}/W_n\).

\subsection{Exact learning and excess-risk rates}
\label{subsec:explicit-rate}

\begin{proposition}[Explicit LIL learning scale]
\label{prop:explicit-learning-scale}
Under \textup{(P\ref{P1})--(P\ref{P3})},
\begin{equation}
\label{eq:rn-expanded}
r_n
\sim
(1-\kappa)
\sqrt{m!c^m I_{m,\alpha,\kappa}}\,
n^{-\alpha m/2}
L_0(n)^{m/2}
(2\log\log n)^{m/2}.
\end{equation}
\end{proposition}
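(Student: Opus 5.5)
The plan is to compute the ratio \(r_n=B_nL_n^{m/2}/W_n\) by combining two asymptotic equivalences already available: the variance asymptotics \eqref{eq:reference-variance} for \(B_n\), and Karamata's asymptotics for \(W_n\) recorded in (P\ref{P3}). Since each holds as \(\sim\), and asymptotic equivalence is preserved under products, quotients and square roots of positive sequences, the result will follow by direct algebra. No probabilistic input is needed.

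The first step is to take square roots in \eqref{eq:reference-variance}. This gives
\[
B_n\sim\sqrt{m!c^mI_{m,\alpha,\kappa}}\;n^{1-\kappa-\alpha m/2}L_a(n)L_0(n)^{m/2},
\]
where we use that \(L_a\) and \(L_0\) are eventually positive, so the square roots are well defined. The second step is to invoke (P\ref{P3}): since \(\kappa<1\), we have \(W_n\sim n^{1-\kappa}L_a(n)/(1-\kappa)\). A small check is needed here. The sum defining \(W_n\) starts at \(t=r+1\) rather than \(t=1\), but dropping finitely many terms is negligible because \(W_n\to\infty\). This is immediate, since \(1-\kappa>1/2>0\).

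The third step is to divide the two equivalences. The factors \(n^{1-\kappa}\) and \(L_a(n)\) cancel, leaving
\[
\frac{B_n}{W_n}\sim(1-\kappa)\sqrt{m!c^mI_{m,\alpha,\kappa}}\,n^{-\alpha m/2}L_0(n)^{m/2}.
\]
Multiplying by \(L_n^{m/2}=(2\log\log n)^{m/2}\) then gives \eqref{eq:rn-expanded}. I would also confirm that \(I_{m,\alpha,\kappa}\) is finite and positive in the standing regime. This holds because \(\kappa<1\) and \(\alpha m<1\), so the Beta-function expression defining it is finite and positive.

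There is no real obstacle. The only points needing care are the bookkeeping just described: the shifted starting index of \(W_n\), and the cancellation of \(L_a\). The latter is exactly what makes the slowly varying weight factor disappear from the rate.
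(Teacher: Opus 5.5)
Your proposal is correct and matches the paper's proof: take square roots in \eqref{eq:reference-variance}, use Karamata's \(W_n\sim n^{1-\kappa}L_a(n)/(1-\kappa)\) from (P\ref{P3}), and divide so that \(n^{1-\kappa}L_a(n)\) cancels. One small correction: the positivity of the denominator \(2-2\kappa-\alpha m\) in \(I_{m,\alpha,\kappa}\) comes from the standing condition \(2\kappa+\alpha m<1\), not merely from \(\kappa<1\) and \(\alpha m<1\).
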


\begin{proof}
By \eqref{eq:reference-variance},
\(B_n\sim\sqrt{m!c^mI_{m,\alpha,\kappa}}\,
n^{1-\kappa-\alpha m/2}L_a(n)L_0(n)^{m/2}\), while Karamata's
theorem gives
\(W_n\sim n^{1-\kappa}L_a(n)/(1-\kappa)\).
Their ratio yields \eqref{eq:rn-expanded}.
\end{proof}

Thus \(L_a(n)\) cancels completely, and \(\kappa\) disappears from
the polynomial exponent.

Define
\begin{equation}
\label{eq:weighting-factor}
\mathcal W_{m,\alpha}(\kappa)
:=
(1-\kappa)
\sqrt{m!I_{m,\alpha,\kappa}}\,
\Lambda_{m,\alpha,\kappa},
\qquad
\Lambda_{1,\alpha,\kappa}:=1.
\end{equation}

\begin{theorem}[Exact almost-sure learning rate]
\label{thm:explicit-learning-rate}
Under the assumptions of Theorem~\ref{thm:parameter-LIL},
\begin{equation}
\label{eq:explicit-parameter-limsup}
\limsup_{n\to\infty}
\frac{\|\widehat\theta_n-\theta_0\|}
{n^{-\alpha m/2}L_0(n)^{m/2}
(2\log\log n)^{m/2}}
=
c^{m/2}\|V^{-1}\mathcal J\|
\mathcal W_{m,\alpha}(\kappa)
\qquad\text{a.s.}
\end{equation}
For every \(v\in\mathbb R^p\), the same formula holds with
\(\|V^{-1}\mathcal J\|\) replaced by
\(|v^\top V^{-1}\mathcal J|\) and the numerator by
\(|v^\top(\widehat\theta_n-\theta_0)|\).
\end{theorem}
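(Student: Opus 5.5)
The plan is to combine the endpoint limsup law of Theorem~\ref{thm:parameter-LIL} with the explicit asymptotics of the normalizing sequence from Proposition~\ref{prop:explicit-learning-scale}. Write
\[
\rho_n:=n^{-\alpha m/2}L_0(n)^{m/2}(2\log\log n)^{m/2},
\]
which is the denominator in \eqref{eq:explicit-parameter-limsup}. Then decompose
\[
\frac{\|\widehat\theta_n-\theta_0\|}{\rho_n}
=
\frac{W_n\|\widehat\theta_n-\theta_0\|}{B_nL_n^{m/2}}
\cdot\frac{r_n}{\rho_n}.
\]
The first factor has almost-sure limsup equal to \(\|q_0\|\Lambda_{m,\alpha,\kappa}\) by \eqref{eq:parameter-limsup}. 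By \eqref{eq:rn-expanded}, the second factor is deterministic and converges to
\[
(1-\kappa)\sqrt{m!c^mI_{m,\alpha,\kappa}}.
\]

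Since the second factor is deterministic with a finite positive limit, the limsup of the product equals the limsup of the first factor times that limit. This uses the elementary fact that \(\limsup (x_ny_n)=(\limsup x_n)\lim y_n\) whenever \(y_n\to y\in(0,\infty)\) and the \(x_n\) are nonnegative. The resulting constant is
\[
\|q_0\|\Lambda_{m,\alpha,\kappa}(1-\kappa)\sqrt{m!}\,c^{m/2}\sqrt{I_{m,\alpha,\kappa}}
=
c^{m/2}\|V^{-1}\mathcal J\|\,\mathcal W_{m,\alpha}(\kappa),
\]
which follows from the definition \eqref{eq:weighting-factor} and \(q_0=V^{-1}\mathcal J\). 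For \(m=1\) we use the convention \(\Lambda_{1,\alpha,\kappa}=1\), which agrees with the case \(m=1\) of Theorem~\ref{thm:parameter-LIL}.

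The directional statement follows in the same way from \eqref{eq:directional-parameter-LIL}, multiplied by the same deterministic factor. If \(v^\top q_0=0\), both sides vanish, consistent with \eqref{eq:orthogonal-directions}.

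There is no real obstacle. The only point needing care is to confirm that the constants in \eqref{eq:reference-variance} and Karamata's theorem are exactly those absorbed into \(\mathcal W_{m,\alpha}(\kappa)\). In particular, \(L_a(n)\) must cancel between \(B_n\) and \(W_n\); this was already verified in Proposition~\ref{prop:explicit-learning-scale}. It also matters that the hypotheses of Theorem~\ref{thm:parameter-LIL} include (P\ref{P1})--(P\ref{P3}), so that the proposition applies.
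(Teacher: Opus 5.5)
Your proposal is correct and follows the paper's proof, which simply combines Theorem~\ref{thm:parameter-LIL} with Proposition~\ref{prop:explicit-learning-scale}. Your factorization through the deterministic ratio of \(r_n\) to the displayed denominator is the same argument. That ratio tends to \((1-\kappa)\sqrt{m!c^mI_{m,\alpha,\kappa}}\), and you then match the product with \(\Lambda_{m,\alpha,\kappa}\) to \(c^{m/2}\mathcal W_{m,\alpha}(\kappa)\); this just spells out the bookkeeping the paper leaves implicit.
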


\begin{proof}
Combine Theorem~\ref{thm:parameter-LIL} with
Proposition~\ref{prop:explicit-learning-scale}.
\end{proof}

Hence the polynomial learning exponent is \(-\alpha m/2\) for every
admissible regularly varying weight: memory and chaos rank determine
the exponent, while the weight affects the sharp constant.

Let
\(\mathscr R_n:=R(\widehat\theta_n)-R(\theta_0)\)
denote the excess population risk.

\begin{theorem}[Exact LIL for excess population risk]
\label{thm:excess-risk-LIL}
Under the assumptions of Theorem~\ref{thm:parameter-LIL},
\begin{equation}
\label{eq:risk-leading-chaos}
\mathscr R_n
=
\frac{\mathcal J^\top V^{-1}\mathcal J}{2W_n^2}
\left(\sum_{t=1}^{n}a_tH_m(X_t)\right)^2
+
o_{\mathrm{a.s.}}(r_n^2),
\end{equation}
and
\begin{equation}
\label{eq:explicit-excess-risk-limsup}
\limsup_{n\to\infty}
\frac{\mathscr R_n}
{n^{-\alpha m}L_0(n)^m(2\log\log n)^m}
=
\frac{c^m}{2}
\mathcal J^\top V^{-1}\mathcal J\,
\mathcal W_{m,\alpha}(\kappa)^2
\qquad\text{a.s.}
\end{equation}
\end{theorem}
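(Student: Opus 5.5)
The plan is a second-order Taylor expansion of the population risk at \(\theta_0\), followed by the LIL-scale Bahadur representation (Theorem~\ref{thm:LIL-Bahadur}) to replace \(\Delta_n=\widehat\theta_n-\theta_0\) by its leading chaos term.

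\textbf{Step 1: local quadratic expansion of \(R\).} By \textup{(L\ref{L2})}, \(R\) is twice continuously differentiable on \(\Theta_0\) with \(\nabla R(\theta_0)=0\) and \(\nabla^2R(\theta_0)=V\). The Lipschitz envelope \(\Xi_H\) in \textup{(L\ref{L3})} makes \(\nabla^2 R\) Lipschitz near \(\theta_0\), although continuity of the Hessian already suffices. Proposition~\ref{prop:local-Hessian} gives \(\widehat\theta_n\to\theta_0\) a.s., so for large \(n\) the integral Taylor formula yields
\[
\mathscr R_n=\tfrac12\Delta_n^\top V\Delta_n+o(\|\Delta_n\|^2)\quad\text{a.s.}
\]
Theorem~\ref{thm:LIL-Bahadur} gives \(\|\Delta_n\|=O_{\mathrm{a.s.}}(r_n)\), so the remainder is \(o_{\mathrm{a.s.}}(r_n^2)\).

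\textbf{Step 2: insert the Bahadur representation.} Write \(\Delta_n=-W_n^{-1}q_0T_n+\epsilon_n\) with \(\epsilon_n=o_{\mathrm{a.s.}}(r_n)\). The general upper bound \cite[Theorem~3.2]{Moldavskaya2026WeightedLIL} gives \(T_n=O_{\mathrm{a.s.}}(B_nL_n^{m/2})\), hence \(W_n^{-1}T_n=O(r_n)\). Expanding the quadratic form, the cross terms are \(O(r_n)\cdot o(r_n)=o(r_n^2)\) and \(\epsilon_n^\top V\epsilon_n=o(r_n^2)\). What remains is
\[
\tfrac12 W_n^{-2}T_n^2\,q_0^\top Vq_0,
\qquad q_0^\top Vq_0=\mathcal J^\top V^{-1}V V^{-1}\mathcal J=\mathcal J^\top V^{-1}\mathcal J,
\]
using symmetry of \(V\). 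This proves \eqref{eq:risk-leading-chaos}.

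\textbf{Step 3: the limsup.} Divide \eqref{eq:risk-leading-chaos} by \(r_n^2\). Since \(\mathcal J^\top V^{-1}\mathcal J>0\) by positive definiteness, it suffices to identify
\[
\limsup_n \frac{T_n^2}{B_n^2L_n^m}=\Lambda_{m,\alpha,\kappa}^2.
\]
This is the squared form of the scalar sharp weighted LIL for \(T_n\) already used in Theorem~\ref{thm:parameter-LIL}. For \(m=1\) it is the symmetric limsup of \(|T_n|\). For \(m\ge2\), the endpoint cluster set \(\mathcal C_{m,\alpha,\kappa}\) from Remark~\ref{rem:parameter-cluster-parity} has \(\sup|\cdot|=\Lambda_{m,\alpha,\kappa}\), so the squared values cluster up to \(\Lambda^2\) and no higher. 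Proposition~\ref{prop:explicit-learning-scale} then gives
\[
r_n^2\sim c^m(1-\kappa)^2m!\,I_{m,\alpha,\kappa}\,n^{-\alpha m}L_0(n)^m(2\log\log n)^m.
\]
Combining this with the limsup yields exactly \(\tfrac{c^m}{2}\mathcal J^\top V^{-1}\mathcal J\,\mathcal W_{m,\alpha}(\kappa)^2\).

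\textbf{Main obstacle.} The analytic steps are routine; the care needed is in Step 3, not Step 1. The sharp LIL must be invoked in the form \(\limsup |T_n|/(B_nL_n^{m/2})=\Lambda_{m,\alpha,\kappa}\), meaning for the absolute value. For even \(m\) the cluster set is one-sided, \([0,\Lambda]\), so I would check that its upper endpoint is the one attained. Squaring is monotone on \(|T_n|\), so the squared limsup is exactly \(\Lambda^2\), and the \(o(r_n^2)\) remainders do not affect the limsup.
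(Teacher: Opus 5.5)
Your proposal is correct and follows the paper's proof. Both arguments take the same three steps: a second-order Taylor expansion of \(R\) at \(\theta_0\) using \(\|\widehat\theta_n-\theta_0\|=O_{\mathrm{a.s.}}(r_n)\), substitution of the LIL-scale Bahadur representation with \(q_0^\top Vq_0=\mathcal J^\top V^{-1}\mathcal J\), and then the sharp weighted LIL for \(T_n\) combined with Proposition~\ref{prop:explicit-learning-scale}. Your explicit check that \(\limsup T_n^2/(B_n^2L_n^m)=\Lambda_{m,\alpha,\kappa}^2\) holds for both parities of \(m\) fills in a step the paper leaves implicit.
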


\begin{proof}
Taylor expansion at \(\theta_0\) gives
\[
\mathscr R_n
=
\frac12
(\widehat\theta_n-\theta_0)^\top
V(\widehat\theta_n-\theta_0)
+
o_{\mathrm{a.s.}}(r_n^2),
\]
since \(\|\widehat\theta_n-\theta_0\|=O_{\mathrm{a.s.}}(r_n)\).
Theorem~\ref{thm:LIL-Bahadur} then yields
\eqref{eq:risk-leading-chaos}, while the sharp weighted LIL and
Proposition~\ref{prop:explicit-learning-scale} give
\eqref{eq:explicit-excess-risk-limsup}.
\end{proof}

For \(m\ge2\), the excess-risk cluster sets are obtained from the
parameter cluster sets by the continuous quadratic map
\(x\mapsto \frac12x^\top Vx\). In particular, for every fixed \(t>0\),
\begin{equation}
\label{eq:pointwise-excess-risk-envelope}
\limsup_{n\to\infty}
\frac{
R(\widehat\theta_{\lfloor nt\rfloor})-R(\theta_0)
}{r_n^2}
=
\frac12
\mathcal J^\top V^{-1}\mathcal J\,
t^{-\alpha m}
\Lambda_{m,\alpha,\kappa}^2
\qquad\text{a.s.}
\end{equation}
The corresponding functional statement follows from
Theorem~\ref{thm:raw-learning-curve} by the same quadratic map and is
recorded in the Supplement.

\subsection{Dependence of the sharp constant on the weight}
\label{subsec:optimal-weighting-general}

The admissible set is
\(\mathcal D_{m,\alpha}
:=\{\kappa:2\kappa+\alpha m<1\}
=(-\infty,(1-\alpha m)/2)\).
All factors in \eqref{eq:explicit-parameter-limsup} other than
\(\mathcal W_{m,\alpha}(\kappa)\) are independent of \(\kappa\).
Thus optimal power weighting amounts to minimizing
\(\mathcal W_{m,\alpha}\) over \(\mathcal D_{m,\alpha}\).

For \(m\ge2\), \(\mathcal W_{m,\alpha}\) is continuous on
\(\mathcal D_{m,\alpha}\). The two-sided estimate for
\(\Lambda_{m,\alpha,\kappa}\) from the companion paper gives
\begin{equation}
\label{eq:weighting-objective-bounds}
\underline{\mathcal W}_{m,\alpha}(\kappa)
\le
\mathcal W_{m,\alpha}(\kappa)
\le
(1-\kappa)\sqrt{I_{m,\alpha,\kappa}},
\end{equation}
where
\begin{align}
\label{eq:weighting-lower}
\underline{\mathcal W}_{m,\alpha}(\kappa)
&:=
(1-\kappa)
\left(\frac{2-\alpha}{2(1-\alpha)}\right)^{m/2}
\notag\\[-1mm]
&\quad\times
\sum_{j=0}^{m}
\binom{m}{j}
\mathrm B\!\left(
1-\kappa+(1-\alpha)j,\,
1+(1-\alpha)(m-j)
\right).
\end{align}
Continuity and \eqref{eq:weighting-objective-bounds} are proved in the
Supplement.

\subsection{Global optimization for \(m=1\)}
\label{subsec:optimal-weighting-linear}

For \(m=1\), \(\Lambda_{1,\alpha,\kappa}=1\), and minimizing
\(\mathcal W_{1,\alpha}\) is equivalent to minimizing
\begin{equation}
\label{eq:F-alpha-beta}
F_\alpha(\kappa)
:=
\mathcal W_{1,\alpha}(\kappa)^2
=
\frac{
2(1-\kappa)^2
\mathrm B(1-\kappa,1-\alpha)
}{
2-2\kappa-\alpha
},
\qquad
\kappa<\frac{1-\alpha}{2}.
\end{equation}

\begin{theorem}[Global optimal weighting for \(m=1\)]
\label{thm:linear-optimal-weight}
For every \(\alpha\in(0,1)\), the set
\(\mathcal K_\alpha^\star:=
\operatorname*{arg\,min}_{\kappa<(1-\alpha)/2}F_\alpha(\kappa)\)
is nonempty and
\begin{equation}
\label{eq:kappa-star-interior}
\mathcal K_\alpha^\star
\subset
\left(0,\frac{1-\alpha}{2}\right).
\end{equation}
Hence every globally optimal admissible exponent is positive.
Moreover,
\(\mathcal W_{1,\alpha}(\kappa)>
\mathcal W_{1,\alpha}(0)\) for every \(\kappa<0\), while for some
\(\kappa_\alpha^\star\in(0,(1-\alpha)/2)\),
\begin{equation}
\label{eq:uniform-strictly-suboptimal}
\mathcal W_{1,\alpha}(\kappa_\alpha^\star)
<
\mathcal W_{1,\alpha}(0).
\end{equation}
Thus neither negative power exponents nor the unweighted benchmark
\(\kappa=0\) minimize the exact \(m=1\) LIL constant.
\end{theorem}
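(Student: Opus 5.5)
The plan is to reparametrize by \(s:=1-\kappa\) and study the sign of the logarithmic derivative of \(F_\alpha\). The admissible range \(\kappa<(1-\alpha)/2\) becomes \(s>s_0:=(1+\alpha)/2\), and \(\kappa=0\) corresponds to \(s=1\). In this variable
\[
F_\alpha=G_\alpha(s):=\frac{2s^2\,\mathrm B(s,1-\alpha)}{2s-\alpha},
\qquad
g_\alpha(s):=\frac{d}{ds}\log G_\alpha(s)
=\frac{2}{s}-\frac{2}{2s-\alpha}-\bigl(\psi(s+1-\alpha)-\psi(s)\bigr),
\]
where \(\psi\) is the digamma function. Every claim of Theorem~\ref{thm:linear-optimal-weight} reduces to sign information on \(g_\alpha\).

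\textbf{(i) Existence.} \(G_\alpha\) is continuous on the closed half-line \([s_0,\infty)\), because \(2s_0-\alpha=1>0\). Since \(\mathrm B(s,1-\alpha)\sim\Gamma(1-\alpha)s^{\alpha-1}\) as \(s\to\infty\), we have \(G_\alpha(s)\sim\Gamma(1-\alpha)s^{\alpha}\to\infty\). Hence \(G_\alpha\) attains its infimum on \([s_0,\infty)\). To exclude the non-admissible endpoint \(s_0\), I will show \(g_\alpha(s_0)<0\). Then \(G_\alpha\) strictly decreases just to the right of \(s_0\), so no minimizer sits at \(s_0\), and \(\mathcal K_\alpha^\star\) is a nonempty subset of the open admissible set.

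At \(s_0\) the first two terms of \(g_\alpha\) simplify to \(4/(1+\alpha)-2=2(1-\alpha)/(1+\alpha)\). It therefore suffices to show
\[
\psi\!\left(\tfrac{3-\alpha}{2}\right)-\psi\!\left(\tfrac{1+\alpha}{2}\right)>\frac{2(1-\alpha)}{1+\alpha}.
\]
For this I will use the series \(\psi(x+a)-\psi(x)=\sum_{k\ge0}a/((k+x)(k+x+a))\) with \(a=1-\alpha\) and \(x=s_0\). The \(k=0\) term alone is \(4(1-\alpha)/((1+\alpha)(3-\alpha))\), so the remaining terms must be bounded below. I will compare them with an integral \(\int_1^\infty a\,dy/((y+x)(y+x+a))\), which is explicit in logarithms, and check the resulting elementary inequality on \(\alpha\in(0,1)\).

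The limit \(\alpha\downarrow0\) is degenerate: there \(g_0(1/2)=0\). So this inequality will be delicate near \(\alpha=0\) and may need the exact identity \(\psi(x+1)-\psi(x)=1/x\) as an anchor, with a perturbation in \(\alpha\). A numerical check, for example \(\alpha=1/2\) giving \(g\approx-0.19\), supports the sign.

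\textbf{(ii) Negative exponents are worse.} I will prove \(g_\alpha(s)>0\) for all \(s\ge1\). The first two terms equal \(2(s-\alpha)/(s(2s-\alpha))\). The digamma increment is \(a\int_0^1\psi'(s+ua)\,du\), and I will bound it above using convexity of \(\psi'\) together with \(\psi'(x)<1/x+1/(2x^2)+1/(6x^3)\), or the series \(\sum_k a/((k+s)(k+s+a))\le a\sum_k 1/((k+s)(k+s+a))\). The required inequality
\[
(1-\alpha)\sum_{k\ge0}\frac{1}{(k+s)(k+s+1-\alpha)}<\frac{2(s-\alpha)}{s(2s-\alpha)}
\]
is tightest at \(s=1\). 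There it becomes \(\sum_{k\ge0}1/((k+1)(k+2-\alpha))<2/(2-\alpha)\). This holds at both ends of the range of \(\alpha\): at \(\alpha=0\) it reads \(1<1\) with strictness coming from the factor \(1-\alpha\), and at \(\alpha\to1\) it reads \(\pi^2/6<2\). I will prove it for all \(\alpha\) by monotonicity in \(\alpha\) of the difference, and then extend to \(s>1\) by showing the ratio of the two sides decreases in \(s\).

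Positivity of \(g_\alpha\) on \([1,\infty)\) means \(G_\alpha\) is strictly increasing there. This gives \(\mathcal W_{1,\alpha}(\kappa)>\mathcal W_{1,\alpha}(0)\) for \(\kappa<0\), so every minimizer satisfies \(\kappa\ge0\).

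\textbf{(iii) Strict improvement over \(\kappa=0\).} Since \(g_\alpha(1)>0\), \(G_\alpha\) is strictly smaller than \(G_\alpha(1)\) at some \(s\in(s_0,1)\). This gives some \(\kappa_\alpha^\star>0\) satisfying \eqref{eq:uniform-strictly-suboptimal}. It also shows \(0\notin\mathcal K_\alpha^\star\), which yields \eqref{eq:kappa-star-interior}.

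\textbf{Main obstacle.} The two digamma inequalities are the hard part, especially the endpoint inequality \(g_\alpha(s_0)<0\), which becomes an equality as \(\alpha\to0\). I would first try the series-plus-integral comparison. If that is too lossy near \(\alpha=0\), I would instead expand \(g_\alpha(s_0)\) in \(\alpha\), showing the derivative in \(\alpha\) at \(0\) is negative, and combine this with a crude bound valid for \(\alpha\) bounded away from \(0\).
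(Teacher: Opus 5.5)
Your reduction is the paper's own: \(g_\alpha(s)=-D_\alpha(1-s)\). So \(g_\alpha(s_0)<0\) and \(g_\alpha>0\) on \([1,\infty)\) are exactly the paper's two digamma sign statements, and together with coercivity and continuity at \(s_0\) they yield every claim, including (iii). The gap is that neither sign estimate is proved. The tools you propose cannot prove them, because both inequalities become equalities as \(\alpha\downarrow0\). In (i), the integral comparison gives the lower bound \(\frac{4(1-\alpha)}{(1+\alpha)(3-\alpha)}+\log\frac{5-\alpha}{3+\alpha}\). This equals \(4/3+\log(5/3)\approx1.84<2\) at \(\alpha=0\), and is still below \(2(1-\alpha)/(1+\alpha)\) at \(\alpha=0.1\). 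Since equality holds at \(\alpha=0\), any tail bound that is not exact there fails on some \((0,\varepsilon)\). A negative \(\alpha\)-derivative at \(0\) only controls a neighborhood of unknown size, so the two regimes are never joined. The missing idea is a comparison that is exact at the degenerate end. Put \(\delta=(1-\alpha)/2\), so that \(s_0=1-\delta\) and \(a=2\delta\). Your series then equals \(2\delta\sum_{n\ge1}(n^2-\delta^2)^{-1}\), and the target is \(2\delta/(1-\delta)\). Since \(n+\delta<n+1-\delta\), each term exceeds \(\bigl((n-\delta)(n+1-\delta)\bigr)^{-1}\), and these telescope to exactly \(1/(1-\delta)\). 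The paper reaches the same series through the reflection formula and the partial-fraction expansion of \(\cot\).

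In (ii) the degeneracy is uniform in \(s\). Because \(\psi(s+1)-\psi(s)=1/s\), we have \(g_0(s)\equiv0\), so your inequality tends to an equality as \(\alpha\downarrow0\) at every \(s\ge1\), not only at \(s=1\). Your \(s=1\) step can be completed. The \(\alpha\)-derivative of \(2/(2-\alpha)-\sum_{k\ge0}((k+1)(k+2-\alpha))^{-1}\) is \((2-\alpha)^{-2}-\sum_{k\ge1}((k+1)(k+2-\alpha)^2)^{-1}\), which exceeds \(\tfrac14-\sum_{j\ge2}j^{-3}>0\). However, the extension to \(s>1\) via a ratio decreasing in \(s\) is only asserted, and the bounds you list cannot supply it. Convexity of \(\psi'\) is lossy at \(\alpha=0\) for every fixed \(s\): at \(s=1\), \(\alpha\downarrow0\), the trapezoid bound is already \((\psi'(1)+\psi'(2))/2\approx1.14\) against the required \(1\). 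What is needed is a bound that is exact at \(\alpha=0\) uniformly in \(s\). The paper writes, with \(\beta=1-\alpha\),
\[
g_\alpha(s)=\int_0^1 t^{s-1}\Bigl[2-t^{-\alpha/2}-\frac{1-t^{\beta}}{1-t}\Bigr]\,dt ,
\]
and uses the pointwise bound \(\frac{1-t^{\beta}}{1-t}<\beta t^{-\alpha/2}\). With \(t=e^{-2u}\) this is \(\sinh(\beta u)<\beta\sinh u\), which follows from strict convexity of \(\sinh\). Integrating gives \(g_\alpha(s)>\alpha(s-1)/\bigl(s(s-\alpha/2)\bigr)\ge0\) for all \(s\ge1\) simultaneously.
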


\begin{proof}
Differentiating \eqref{eq:F-alpha-beta} gives
\begin{equation}
\label{eq:logF-derivative}
D_\alpha(\kappa)
:=
\frac{d}{d\kappa}\log F_\alpha(\kappa)
=
-\frac{2}{1-\kappa}
-\Psi(1-\kappa)
+\Psi(2-\kappa-\alpha)
+\frac{2}{2-2\kappa-\alpha}.
\end{equation}
For \(\kappa\le0\), an integral representation of the digamma
difference gives \(D_\alpha(\kappa)<0\), while
\[
F_\alpha(\kappa)
\sim
\Gamma(1-\alpha)(1-\kappa)^\alpha
\longrightarrow\infty
\qquad(\kappa\to-\infty).
\]
Thus no global minimizer lies in \((-\infty,0]\).

At \(\kappa=0\), the series representation of the digamma function
gives \(D_\alpha(0)<0\), whereas\\
\(\lim_{\kappa\uparrow(1-\alpha)/2}D_\alpha(\kappa)>0\).
Moreover, \(F_\alpha\) extends continuously to
\(\kappa=(1-\alpha)/2\), since there
\(1-\kappa=(1+\alpha)/2>0\) and
\(2-2\kappa-\alpha=1\).
Hence neither \(0\) nor the right endpoint of the extended domain is
a minimizer. Continuity and divergence as \(\kappa\to-\infty\)
therefore imply that every global minimizer lies in
\((0,(1-\alpha)/2)\). The digamma estimates establishing the sign
statements are given in the Supplement.
\end{proof}

The theorem does not assert uniqueness. Numerical minimizers and the
nonlinear weighting problem are studied in
Section~\ref{sec:numerics}. All statements above concern the standing
regime in \textup{(P\ref{P3})}; critical boundaries require different
normalizations.

%\section{Machine-learning examples}
% =====================================================================
% Section 7. Machine-learning examples and verification of assumptions
% =====================================================================

\section{Machine-learning examples and verification of the assumptions}
\label{sec:examples}

We illustrate the general theory on three smooth learning problems.
For each model we identify the population minimizer \(\theta_0\), the
local curvature \(V\), the Hermite rank \(m\), and the low-frequency
coefficient \(\mathcal J\).  Detailed verification of the learning
assumptions and the Gaussian projection calculations are collected in
the Supplement.

\subsection{One-step linear prediction}
\label{subsec:linear-prediction}

Let \(U_t:=X_{t-1}\), \(Y_t:=X_t\), and consider
\(f_\theta(U_t)=\theta U_t\), \(\theta\in\Theta\subset\mathbb R\),
with squared loss
\(\ell(\theta;U_t,Y_t)=\frac12(Y_t-\theta U_t)^2\).
Write \(\rho_1:=\rho(1)\). Then
\[
R(\theta)
=
\frac12\mathbb E(X_t-\theta X_{t-1})^2
=
\frac12(1-\rho_1^2)+\frac12(\theta-\rho_1)^2,
\]
so
\begin{equation}
\label{eq:linear-prediction-target}
\theta_0=\rho_1,
\qquad
V=1.
\end{equation}

\begin{proposition}[One-step prediction]
\label{prop:linear-prediction-rank}
Assume \textup{(P\ref{P1})}, and let \(\Theta\) be a compact interval
containing \(\rho_1\) in its interior. Then the model satisfies
\textup{(L\ref{L1})--(L\ref{L4})}. The score at \(\theta_0\) is
\[
g_t^{\mathrm{pred}}
=
X_{t-1}(\rho_1X_{t-1}-X_t),
\]
belongs entirely to the second Wiener chaos, and satisfies
\begin{equation}
\label{eq:linear-prediction-J}
m=2,
\qquad
\mathcal J_{\mathrm{pred}}=\rho_1-1\ne0.
\end{equation}
Thus \textup{(P\ref{P2})} holds with \(m=2\).
\end{proposition}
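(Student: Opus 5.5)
The plan is to verify the learning assumptions directly from the explicit quadratic form of the loss, and then compute the chaos expansion of the score by hand. For \textup{(L\ref{L1})}, the displayed formula \(R(\theta)=\frac12(1-\rho_1^2)+\frac12(\theta-\rho_1)^2\) shows that \(\theta_0=\rho_1\) is the unique minimizer. It is well separated, since \(R(\theta)-R(\theta_0)=\frac12(\theta-\rho_1)^2\ge\varepsilon^2/2\) when \(|\theta-\rho_1|\ge\varepsilon\). Measurable selection is trivial, since \(\widehat R_n\) is a quadratic in \(\theta\) with continuous coefficients.

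For \textup{(L\ref{L2})--(L\ref{L4})}, note that \(\ell\) is a polynomial of degree two in \(\theta\). Its derivatives are
\[
\psi(\theta;z)=-(Y-\theta U)U,
\qquad
\nabla_\theta\psi(\theta;z)=U^2.
\]
Let \(M:=\sup_{\theta\in\Theta}|\theta|\). The envelopes can then be taken as
\[
\Xi_0=\tfrac12(|X_0|+M|X_{-1}|)^2,\quad
\Xi_\ell=\Xi_1=(|X_0|+M|X_{-1}|)|X_{-1}|,\quad
\Xi_2=X_{-1}^2,\quad
\Xi_H=0.
\]
All of these lie in \(L^2(\mu_1)\) because Gaussian vectors have finite moments of every order. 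For \(\Xi_\ell\), the mean value theorem is applied on the interval \(\Theta\). Differentiation under the expectation is immediate for polynomials in \(\theta\) with integrable coefficients. This gives \(\nabla R(\rho_1)=0\) and \(V=\mathbb E X_{-1}^2=1>0\).

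For the score, substituting \(\theta_0=\rho_1\) gives \(g_t^{\mathrm{pred}}=X_{t-1}(\rho_1X_{t-1}-X_t)\). In the isonormal representation we have
\[
X_{t-1}^2=I_2(h_{t-1}^{\otimes2})+1,
\qquad
X_tX_{t-1}=I_2(h_t\widetilde\otimes h_{t-1})+\rho_1 .
\]
The constants cancel, and
\[
g_t^{\mathrm{pred}}
=I_2\bigl(\rho_1h_{t-1}^{\otimes2}-h_t\widetilde\otimes h_{t-1}\bigr),
\]
so the score lies entirely in the second chaos and has zero mean. In the notation of \eqref{eq:finite-lag-chaos} with \(r=1\), the coefficients are \(\beta^{(2)}_{(1,1)}=\rho_1\) and \(\beta^{(2)}_{(0,1)}=\beta^{(2)}_{(1,0)}=-\tfrac12\). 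Hence \(\mathcal J_{\mathrm{pred}}=\rho_1-1\).

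As an independent cross-check, I will use the intrinsic definition of \(\mathcal J\). Computing via Wick's formula,
\[
\operatorname{Cov}\bigl(g_0,H_2(X_h)\bigr)
=2\rho_1\rho(h+1)^2-2\rho(h)\rho(h+1).
\]
Dividing by \(2\rho(h)^2\) and using \(\rho(h+1)/\rho(h)\to1\) from \textup{(P\ref{P1})} gives the limit \(\rho_1-1\).

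The remaining point, which I expect to be the only nontrivial one, is \(\mathcal J\ne0\), that is, \(\rho_1\ne1\). If \(\rho_1=1\), then \(\mathbb E(X_t-X_{t-1})^2=2-2\rho_1=0\), so \(X_t=X_{t-1}\) almost surely for all \(t\). This would force \(\rho(h)=1\) for all \(h\), contradicting \(\rho(h)\to0\) under \textup{(P\ref{P1})}. Thus \(\rho_1<1\) and \(\mathcal J_{\mathrm{pred}}\ne0\). Since \(\mathcal J\) is the sum of the kernel coefficients, this also shows \(K_{2,0}\ne0\). Together with the vanishing of the first chaos, this gives chaos rank \(m=2\), so \textup{(P\ref{P2})} holds with \(g\in L^2(\mu_1)\).
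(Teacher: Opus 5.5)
Your proposal is correct and follows essentially the same route as the paper: you verify (L1)--(L4) from the explicit quadratic least-squares structure with Gaussian-moment envelopes, write the score as \(\rho_1H_2(X_{t-1})-I_2(h_{t-1}\widetilde\otimes h_t)\), obtain \(\mathcal J_{\mathrm{pred}}=\rho_1-1\) from the covariance limit against \(H_2(X_h)\), and exclude \(\rho_1=1\) using \(\rho(h)\to0\). The differences are cosmetic. The paper handles all three examples through one common least-squares verification, uses the measurable maximum theorem for the selection, and proves the slightly stronger \(|\rho_1|<1\); you add a consistent cross-check via the kernel coefficients.
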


\begin{corollary}[Exact learning rate for one-step prediction]
\label{cor:linear-prediction-rate}
Assume \textup{(P\ref{P1})} and
\textup{(P\ref{P3})--(P\ref{P6})}. Then
\begin{equation}
\label{eq:linear-prediction-limsup}
\limsup_{n\to\infty}
\frac{|\widehat\theta_n^{\mathrm{pred}}-\rho_1|}
{n^{-\alpha}L_0(n)(2\log\log n)}
=
c\,|1-\rho_1|\,\mathcal W_{2,\alpha}(\kappa)
\qquad\text{a.s.}
\end{equation}
Moreover,
\begin{equation}
\label{eq:linear-prediction-risk-limsup}
\limsup_{n\to\infty}
\frac{
R(\widehat\theta_n^{\mathrm{pred}})-R(\rho_1)
}{
n^{-2\alpha}L_0(n)^2(2\log\log n)^2
}
=
\frac12c^2(1-\rho_1)^2
\mathcal W_{2,\alpha}(\kappa)^2
\qquad\text{a.s.}
\end{equation}
\end{corollary}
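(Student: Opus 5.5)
The corollary follows by specializing Theorem~\ref{thm:explicit-learning-rate} and Theorem~\ref{thm:excess-risk-LIL} to the one-step prediction model, so the plan is to check their hypotheses and then read off the constants. Proposition~\ref{prop:linear-prediction-rank}, under \textup{(P\ref{P1})}, already gives \textup{(L\ref{L1})--(L\ref{L4})} together with \textup{(P\ref{P2})} with \(m=2\) and \(\mathcal J_{\mathrm{pred}}=\rho_1-1\ne0\). The remaining hypotheses \textup{(P\ref{P3})--(P\ref{P6})} are assumed directly. For \(m=2\) the standing regime in \textup{(P\ref{P3})} reads \(2\alpha<1\) and \(2\kappa+2\alpha<1\); these are part of \textup{(P\ref{P3})} and so are implicitly assumed. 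Hence the assumptions of Theorem~\ref{thm:parameter-LIL}, including the nonlinear case \(m\ge2\), are all in force.

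\textbf{Parameter rate.} Apply \eqref{eq:explicit-parameter-limsup} with \(p=1\), \(m=2\), \(V=1\) and \(\theta_0=\rho_1\). The normalization is \(n^{-\alpha m/2}L_0(n)^{m/2}(2\log\log n)^{m/2}=n^{-\alpha}L_0(n)(2\log\log n)\), and the constant is \(c^{m/2}\|V^{-1}\mathcal J\|\mathcal W_{2,\alpha}(\kappa)=c\,|\rho_1-1|\,\mathcal W_{2,\alpha}(\kappa)\). This is \eqref{eq:linear-prediction-limsup}, using \(|\rho_1-1|=|1-\rho_1|\).

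\textbf{Excess risk.} Apply \eqref{eq:explicit-excess-risk-limsup} with the same data. The normalization becomes \(n^{-2\alpha}L_0(n)^2(2\log\log n)^2\), and the constant is \(\frac{c^2}{2}\mathcal J^\top V^{-1}\mathcal J\,\mathcal W_{2,\alpha}(\kappa)^2=\frac12c^2(1-\rho_1)^2\mathcal W_{2,\alpha}(\kappa)^2\). Alternatively, here \(R(\theta)-R(\rho_1)=\frac12(\theta-\rho_1)^2\) holds exactly, so the excess-risk statement follows from the parameter limsup by squaring, with no Taylor remainder needed.

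The only point needing care is that the hypotheses of Proposition~\ref{prop:linear-prediction-rank} are weaker than those of the general theorems. In particular the envelope condition \textup{(L\ref{L3})} requires \(L^2\) envelopes such as \(\Xi_1\lesssim X_{t-1}^2+|X_{t-1}X_t|\), which are fine since Gaussian variables have all moments, and \(\mathcal J\neq0\) because \(|\rho_1|<1\). Both are covered by that proposition, so the corollary is essentially a bookkeeping substitution with no real obstacle.
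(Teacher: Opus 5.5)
Your proposal is correct and follows the paper's proof: the paper likewise applies Theorems~\ref{thm:explicit-learning-rate} and \ref{thm:excess-risk-LIL} with \(m=2\), \(V=1\), \(\mathcal J=\rho_1-1\), with \textup{(L\ref{L1})--(L\ref{L4})} and \textup{(P\ref{P2})} supplied by Proposition~\ref{prop:linear-prediction-rank}. Your side remark is also valid: since \(R(\theta)-R(\rho_1)=\tfrac12(\theta-\rho_1)^2\) holds exactly, the excess-risk limsup follows from the parameter limsup by squaring, without the Taylor remainder used in Theorem~\ref{thm:excess-risk-LIL}.
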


\begin{proof}
Apply Theorems~\ref{thm:explicit-learning-rate} and
\ref{thm:excess-risk-LIL} with \(m=2\), \(V=1\), and
\(\mathcal J=\rho_1-1\).
\end{proof}

Although the predictor is linear in its parameter, its score is
quadratic in the Gaussian data, so the learning behavior is governed
by the \(m=2\) weighted LIL.

\subsection{Least-squares classification of a threshold event}
\label{subsec:threshold-classification}

Fix \(b\in\mathbb R\), let \(U_t:=X_{t-1}\),
\(Y_t:=\mathbf 1_{\{X_t>b\}}\), and fit
\(f_\theta(u)=\theta_1+\theta_2u\),
\(\theta=(\theta_1,\theta_2)^\top\in\Theta\subset\mathbb R^2\),
by squared loss
\(\ell(\theta;U_t,Y_t)
=\frac12(Y_t-\theta_1-\theta_2U_t)^2\).
Let \(\varphi\) be the standard normal density and put
\(p_b:=\mathbb P(X_0>b)=\overline\Phi_{\mathrm N}(b)\).

\begin{proposition}[Threshold classification]
\label{prop:threshold-classification}
Assume \textup{(P\ref{P1})}, and let
\(\Theta\subset\mathbb R^2\) be compact and contain
\[
\theta_0^{\mathrm{cls}}
:=
\begin{pmatrix}
p_b\\
\rho_1\varphi(b)
\end{pmatrix}
\]
in its interior. Then the model satisfies
\textup{(L\ref{L1})--(L\ref{L4})}, with
\begin{equation}
\label{eq:classification-Hessian}
V=I_2.
\end{equation}
Its score has a nonzero first-chaos projection, with
\begin{equation}
\label{eq:classification-J}
m=1,
\qquad
\mathcal J_{\mathrm{cls}}
=
-\varphi(b)
\begin{pmatrix}
1-\rho_1\\
\rho_1b
\end{pmatrix}
\ne0.
\end{equation}
Thus \textup{(P\ref{P2})} holds with \(m=1\).
\end{proposition}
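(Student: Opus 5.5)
The proof has two parts. First we verify the learning assumptions \textup{(L\ref{L1})--(L\ref{L4})} for the quadratic loss. Then we compute the Wiener-chaos decomposition of the score at \(\theta_0^{\mathrm{cls}}\).

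\emph{Population risk.} Write \(Y=\mathbf 1_{\{X_t>b\}}\) and \(U=X_{t-1}\), so that \(\mathbb EU=0\) and \(\mathbb EU^2=1\). Then
\[
R(\theta)=\tfrac12\mathbb E(Y-\theta_1-\theta_2U)^2
\]
is a quadratic polynomial in \(\theta\) with Hessian \(\mathbb E[(1,U)^\top(1,U)]=I_2\). This gives \eqref{eq:classification-Hessian} and \textup{(L\ref{L4})}. The minimizer solves the normal equations \(\theta_1=\mathbb EY=p_b\) and \(\theta_2=\mathbb E[YU]\). To evaluate \(\mathbb E[YU]\), decompose \(X_{t-1}=\rho_1X_t+\sqrt{1-\rho_1^2}\,\varepsilon\) with \(\varepsilon\) independent of \(X_t\). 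Since \(\mathbb E[X_0\mathbf 1_{\{X_0>b\}}]=\varphi(b)\), we get \(\mathbb E[YU]=\rho_1\varphi(b)\). Strict convexity gives a unique, well-separated minimizer on the compact set \(\Theta\), so \textup{(L\ref{L1})} holds.

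\emph{Smoothness and envelopes.} The loss is polynomial in \(\theta\), with \(\nabla_\theta\psi=(1,U)^\top(1,U)\) independent of \(\theta\). We may therefore take \(\Xi_H=0\), \(\Xi_2=C(1+X_{t-1}^2)\), and \(\Xi_0,\Xi_\ell,\Xi_1\le C(1+X_{t-1}^2)\) on the compact \(\Theta\). All of these lie in \(L^2(\mu_1)\) because Gaussian moments are finite. Interchanging derivatives and expectation is then immediate, which gives \textup{(L\ref{L2})--(L\ref{L3})}.

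\emph{Score and chaos rank.} The score at \(\theta_0\) is
\[
g_t=-(Y_t-p_b-\rho_1\varphi(b)X_{t-1})\,(1,X_{t-1})^\top.
\]
Rather than expanding this product fully in chaos, we use the intrinsic definition of \(\mathcal J\). For \(m=1\), the first-chaos projection of a functional \(G\) of \((X_t,X_{t-1})\) is \(\mathbb E[G X_t]\) times the dual basis. Equivalently, by Section~\ref{Loss gradient}, \(\mathcal J=\beta_0+\beta_1\), where \(\Pi_1 G=W(\beta_0h_t+\beta_1h_{t-1})\). The identity \(\beta_0\rho(k)+\beta_1\rho(k-1)\) evaluated at \(k=0,1\) shows that \(\beta_0+\beta_1\) solves a \(2\times2\) system determined by \(\mathbb E[GX_t]\) and \(\mathbb E[GX_{t-1}]\).

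\emph{First coordinate.} Here \(G=-(Y_t-p_b-\rho_1\varphi(b)X_{t-1})\), and
\[
\mathbb E[GX_t]=-\varphi(b)+\rho_1^2\varphi(b),\qquad \mathbb E[GX_{t-1}]=-\rho_1\varphi(b)+\rho_1\varphi(b)=0.
\]
Solving \(\beta_0+\rho_1\beta_1=-\varphi(b)(1-\rho_1^2)\) and \(\rho_1\beta_0+\beta_1=0\) gives \(\beta_0=-\varphi(b)\) and \(\beta_1=\rho_1\varphi(b)\). Hence the first coordinate of \(\mathcal J\) is \(-\varphi(b)(1-\rho_1)\).

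\emph{Second coordinate.} Here \(G=-X_{t-1}(Y_t-p_b)+\rho_1\varphi(b)X_{t-1}^2\), and we need third-order Gaussian moments such as \(\mathbb E[X_{t-1}^2\mathbf 1_{\{X_t>b\}}]\). The second term is even in \(X\), so it contributes only to even chaoses and nothing to first-order moments. For \(\mathbb E[X_{t-1}X_t\mathbf 1_{\{X_t>b\}}]\), the decomposition above gives \(\rho_1\mathbb E[X^2\mathbf 1_{\{X>b\}}]=\rho_1(p_b+b\varphi(b))\). Likewise
\[
\mathbb E[X_{t-1}^2\mathbf 1_{\{X_t>b\}}]=\rho_1^2(p_b+b\varphi(b))+(1-\rho_1^2)p_b,
\]
and \(\mathbb E[X_{t-1}^3]=0\). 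Substituting these yields \(\mathbb E[GX_t]=-\rho_1b\varphi(b)\) and \(\mathbb E[GX_{t-1}]=-\rho_1^2b\varphi(b)\). Solving the \(2\times2\) system gives \(\beta_0+\beta_1=-\rho_1b\varphi(b)\), which matches \eqref{eq:classification-J}.

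\emph{Conclusion.} Since \(\varphi(b)>0\) and \(|\rho_1|<1\) (we assume the sequence is non-degenerate), we have \(1-\rho_1\neq0\). Hence \(\mathcal J\neq0\), so the chaos rank is \(m=1\) and \textup{(P\ref{P2})} holds.

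The main obstacle is bookkeeping in the second coordinate, and we must be careful to use the intrinsic characterization of \(\mathcal J\) rather than the raw first-chaos projection. We also assume \(|\rho_1|<1\), which holds because \(X\) is not a.s. constant along lags under \textup{(P\ref{P1})}. If this case were degenerate, we would argue \(\rho_1<1\) from the decay of \(\rho\) together with stationarity.
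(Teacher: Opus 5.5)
Your proposal is correct. The verification of \textup{(L1)}--\textup{(L4)} is the same least-squares calculation the paper uses: Hessian \(\mathbb E[\zeta\zeta^\top]=I_2\), the normal equations with \(\mathbb E[X_{t-1}\mathbf 1_{\{X_t>b\}}]=\rho_1\varphi(b)\), and polynomial envelopes with \(\Xi_H=0\).

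The difference lies in how you obtain \(\mathcal J_{\mathrm{cls}}\).
\begin{itemize}
\item You compute the first-chaos kernel coefficients \((\beta_0,\beta_1)\) exactly, by solving the \(2\times2\) Gram system built from \(\mathbb E[GX_t]\) and \(\mathbb E[GX_{t-1}]\). You then sum them using the identity \(\mathcal J=\sum_{\mathbf j}\beta^{(m)}_{\mathbf j}\) from Section~2.2. Your moments and solutions check out, and they give \(\Pi_1g_{t,1}=\varphi(b)(\rho_1X_{t-1}-X_t)\) and \(\Pi_1g_{t,2}=-\rho_1b\varphi(b)X_t\).
\item The paper gets \(m=1\) from the Hermite coefficient \(\Pi_1\mathbf 1_{\{X_t>b\}}=\varphi(b)X_t\). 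It then computes \(\mathcal J\) from the intrinsic limit definition. It evaluates \(\operatorname{Cov}(g_0,X_h)\) at a distant lag \(h\), using \(\mathbb E[X_{-1}X_h\mid X_0]\) for the second coordinate. This yields \(\varphi(b)(\rho_1\rho(h+1)-\rho(h))\) and \(-\rho_1b\varphi(b)\rho(h)\), and it lets \(h\to\infty\) via \(\rho(h+1)/\rho(h)\to1\).
\end{itemize}
Your route stays within bivariate Gaussian moments of \((X_t,X_{t-1})\) and yields the full first-chaos projections of both coordinates, not just their sum. The cost is inverting the Gram matrix. The paper's route needs no linear solve and no appeal to the kernel representation, and it uses \textup{(P1)} only through the ratio limit.

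Both your Gram inversion and your conclusion \(1-\rho_1\neq0\) rest on \(|\rho_1|<1\). So replace ``we assume the sequence is non-degenerate'' by the one-line argument: if \(|\rho_1|=1\), then \(\mathbb E(X_t-\rho_1X_{t-1})^2=1-\rho_1^2=0\), so \(X_t=\rho_1X_{t-1}\) almost surely for every \(t\). Hence \(\rho(h)=\rho_1^h\) for all \(h\), which contradicts \(\rho(h)\to0\) under \textup{(P1)}.
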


\begin{corollary}[Exact learning rate for threshold classification]
\label{cor:classification-rate}
Assume \textup{(P\ref{P1})} and \textup{(P\ref{P3})}. Then
\begin{equation}
\label{eq:classification-limsup}
\limsup_{n\to\infty}
\frac{
\|\widehat\theta_n^{\mathrm{cls}}-\theta_0^{\mathrm{cls}}\|
}{
n^{-\alpha/2}L_0(n)^{1/2}(2\log\log n)^{1/2}
}
=
c^{1/2}\varphi(b)
\sqrt{(1-\rho_1)^2+\rho_1^2b^2}\,
\mathcal W_{1,\alpha}(\kappa)
\qquad\text{a.s.}
\end{equation}
The excess population risk satisfies
\begin{equation}
\label{eq:classification-risk-limsup}
\limsup_{n\to\infty}
\frac{
R(\widehat\theta_n^{\mathrm{cls}})
-R(\theta_0^{\mathrm{cls}})
}{
n^{-\alpha}L_0(n)(2\log\log n)
}
=
\frac{c}{2}\varphi(b)^2
\bigl((1-\rho_1)^2+\rho_1^2b^2\bigr)
\mathcal W_{1,\alpha}(\kappa)^2
\qquad\text{a.s.}
\end{equation}
\end{corollary}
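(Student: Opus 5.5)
The plan is to apply Theorems~\ref{thm:explicit-learning-rate} and \ref{thm:excess-risk-LIL} with \(m=1\), \(V=I_2\) and \(\mathcal J=\mathcal J_{\mathrm{cls}}\), taking as inputs the model-specific facts of Proposition~\ref{prop:threshold-classification}. First, hypotheses: (P\ref{P1}) and (P\ref{P3}) are assumed. Proposition~\ref{prop:threshold-classification} supplies (L\ref{L1})--(L\ref{L4}) and (P\ref{P2}) with \(m=1\) and \(\mathcal J\neq0\). Since \(m=1\), Theorem~\ref{thm:parameter-LIL}, and hence Theorem~\ref{thm:explicit-learning-rate}, does not need (P\ref{P4})--(P\ref{P6}), and \(\Lambda_{1,\alpha,\kappa}=1\). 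The standing regime \(2\kappa+\alpha<1\) is part of (P\ref{P3}).

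Second, compute the constants. With \(V=I_2\) we get \(V^{-1}\mathcal J=\mathcal J_{\mathrm{cls}}\), so
\[
\|V^{-1}\mathcal J\|=\varphi(b)\sqrt{(1-\rho_1)^2+\rho_1^2b^2}.
\]
Substituting into \eqref{eq:explicit-parameter-limsup} with \(m=1\) gives the factor \(c^{1/2}\) and the scale \(n^{-\alpha/2}L_0(n)^{1/2}(2\log\log n)^{1/2}\). This is exactly \eqref{eq:classification-limsup}.

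For the risk, \(\mathcal J^\top V^{-1}\mathcal J=\|\mathcal J_{\mathrm{cls}}\|^2=\varphi(b)^2\bigl((1-\rho_1)^2+\rho_1^2b^2\bigr)\). Substituting into \eqref{eq:explicit-excess-risk-limsup} with \(m=1\) yields \eqref{eq:classification-risk-limsup}.

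The only genuinely nontrivial input is Proposition~\ref{prop:threshold-classification}, which we take as given. If it had to be checked, the main obstacle would be confirming \(\mathcal J_{\mathrm{cls}}\). Here \(r=1\), so \(\mathcal J\) is the sum of the first-chaos coefficients on \(h_t\) and \(h_{t-1}\). The score is \(-(Y_t-\theta_1-\theta_2X_{t-1})(1,X_{t-1})^\top\). Its first-chaos projection is obtained from \(\mathbb E[\mathbf 1_{\{X_t>b\}}X_t]=\varphi(b)\), from \(\mathbb E[\mathbf 1_{\{X_t>b\}}X_{t-1}X_{t-1}]\)-type Gaussian moments, and from \(\rho_1\varphi(b)X_{t-1}^2\), which has no first-chaos part. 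Regressing onto \(X_t\) and \(X_{t-1}\) and summing the coefficients gives the stated vector. This step is already contained in the proposition, so the corollary itself reduces to substituting these constants.
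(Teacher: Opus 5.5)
Your proposal is correct and matches the paper's proof: it substitutes \(m=1\), \(V=I_2\), \(\mathcal J=\mathcal J_{\mathrm{cls}}\) into Theorems~\ref{thm:explicit-learning-rate} and~\ref{thm:excess-risk-LIL}, with Proposition~\ref{prop:threshold-classification} supplying \textup{(L\ref{L1})--(L\ref{L4})} and \textup{(P\ref{P2})}. As you note, \(\Lambda_{1,\alpha,\kappa}=1\), so \textup{(P\ref{P4})--(P\ref{P6})} are not needed, and your values for \(\|V^{-1}\mathcal J\|\) and \(\mathcal J^\top V^{-1}\mathcal J\) are right.
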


\begin{proof}
Apply Theorems~\ref{thm:explicit-learning-rate} and
\ref{thm:excess-risk-LIL} with \(m=1\), \(V=I_2\), and
\(\mathcal J=\mathcal J_{\mathrm{cls}}\).
\end{proof}

\begin{corollary}[Weighting effect for the \(m=1\) classifier]
\label{cor:classification-weighting}
For every \(\alpha\in(0,1)\), there exists
\(\kappa_\alpha^\star\in(0,(1-\alpha)/2)\) such that the exact
constants in \eqref{eq:classification-limsup} and
\eqref{eq:classification-risk-limsup} are strictly smaller at
\(\kappa=\kappa_\alpha^\star\) than at \(\kappa=0\).
\end{corollary}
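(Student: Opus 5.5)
The corollary is a direct combination of Corollary~\ref{cor:classification-rate} with Theorem~\ref{thm:linear-optimal-weight}. The plan is to show that, in both limsup formulas, the weight exponent enters only through the factor \(\mathcal W_{1,\alpha}(\kappa)\).

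First, I will check that each admissible \(\kappa\) falls under Corollary~\ref{cor:classification-rate}. For \(m=1\), assumption (P\ref{P3}) requires only \(2\kappa+\alpha<1\), that is, \(\kappa<(1-\alpha)/2\). This covers \(\kappa=0\) (the power weights \(a_t=t^{0}\), with \(L_a\equiv1\)) and every \(\kappa\in(0,(1-\alpha)/2)\) (take \(a_t=t^{-\kappa}\)). Assumption (P\ref{P1}) is a property of the data alone and does not depend on \(\kappa\). Corollary~\ref{cor:classification-rate} therefore applies at both exponents being compared.

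Second, I will read off the structure of the constants. In \eqref{eq:classification-limsup} the constant is \(A_b\,\mathcal W_{1,\alpha}(\kappa)\), where \(A_b:=c^{1/2}\varphi(b)\sqrt{(1-\rho_1)^2+\rho_1^2b^2}\). In \eqref{eq:classification-risk-limsup} it is \(\tfrac12 A_b^2\,\mathcal W_{1,\alpha}(\kappa)^2\). The prefactor \(A_b\) does not depend on \(\kappa\), and it is strictly positive: \(\varphi(b)>0\), and \((1-\rho_1)^2+\rho_1^2b^2>0\) because \(\mathcal J_{\mathrm{cls}}\neq0\) by Proposition~\ref{prop:threshold-classification}. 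Also \(\mathcal W_{1,\alpha}(\kappa)=(1-\kappa)\sqrt{I_{1,\alpha,\kappa}}>0\). Hence \(x\mapsto A_bx\) and \(x\mapsto\tfrac12A_b^2x^2\) are strictly increasing on \((0,\infty)\).

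Third, I will invoke \eqref{eq:uniform-strictly-suboptimal} of Theorem~\ref{thm:linear-optimal-weight}. It provides \(\kappa_\alpha^\star\in(0,(1-\alpha)/2)\) with \(\mathcal W_{1,\alpha}(\kappa_\alpha^\star)<\mathcal W_{1,\alpha}(0)\). Applying the two strictly increasing maps from the second step gives the strict inequality for both exact constants at \(\kappa_\alpha^\star\) compared with \(\kappa=0\). One could equally take any element of \(\mathcal K_\alpha^\star\), since the theorem shows that \(0\) is not a minimizer.

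I do not expect a genuine obstacle. The only point needing care is the second step: ruling out a vanishing prefactor \(A_b\), which would make the two constants equal and destroy the strict inequality. This is settled by \(\mathcal J_{\mathrm{cls}}\neq0\) together with \(\varphi(b)>0\).
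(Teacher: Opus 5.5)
Your proposal is correct and follows the paper's own proof. The paper notes that \(\mathcal W_{1,\alpha}(\kappa)\) is the only \(\kappa\)-dependent factor in both constants and then applies \eqref{eq:uniform-strictly-suboptimal} of Theorem~\ref{thm:linear-optimal-weight}. Your extra checks are details the paper leaves implicit: that \(\kappa=0\) and \(\kappa_\alpha^\star\) are both admissible under \textup{(P\ref{P3})}, and that the \(\kappa\)-free prefactor is strictly positive because \(\mathcal J_{\mathrm{cls}}\neq0\).
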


\begin{proof}
The only \(\kappa\)-dependent factor is
\(\mathcal W_{1,\alpha}(\kappa)\); apply
Theorem~\ref{thm:linear-optimal-weight}.
\end{proof}

The same mechanism extends to other smooth classification losses,
such as logistic loss, provided the learning assumptions
\textup{(L\ref{L1})--(L\ref{L4})} hold and the first low-frequency chaos coefficient
is nonzero.
The squared-loss classifier is convenient because both \(V\) and
\(\mathcal J\) are available in closed form.

\subsection{A symmetry-induced change of the learning rate}
\label{subsec:rank-switching}

Fix \(s\in\mathbb R\), define \(Y_t^{(s)}:=(X_t-s)^2\), and estimate
a constant predictor \(f_\theta\equiv\theta\) by squared loss
\(\ell_s(\theta;X_t)
:=\frac12(\theta-(X_t-s)^2)^2\).
The population minimizer is
\begin{equation}
\label{eq:quadratic-target}
\theta_0^{(s)}=1+s^2.
\end{equation}

\begin{proposition}[Rank transition at the symmetric target]
\label{prop:rank-transition}
Let \(\Theta\) be a compact interval containing \(1+s^2\) in its
interior. Then the model satisfies
\textup{(L\ref{L1})--(L\ref{L4})}, with \(V=1\). The score at the
target is
\begin{equation}
\label{eq:quadratic-target-score}
g_s(X_t)
:=
\psi_s(\theta_0^{(s)};X_t)
=
1+s^2-(X_t-s)^2
=
2sH_1(X_t)-H_2(X_t).
\end{equation}
Hence
\begin{equation}
\label{eq:rank-transition-parameters}
s\ne0 \Longrightarrow m=1,\ \mathcal J_s=2s,
\qquad
s=0 \Longrightarrow m=2,\ \mathcal J_0=-1.
\end{equation}
Thus \textup{(P\ref{P2})} holds with the corresponding value of \(m\).
\end{proposition}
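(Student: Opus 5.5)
The plan is to verify the claims of Proposition~\ref{prop:rank-transition} directly. Here \(r=0\), so the Hermite-expansion remark of Section~\ref{Loss gradient} applies: \(\mathcal J=J_m/m!\) with \(J_m=\mathbb E[g(X_0)H_m(X_0)]\).

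\textbf{Population quantities.} Since \(X_t\sim\mathrm N(0,1)\), we have \(\mathbb E(X_t-s)^2=1+s^2\). Then
\[
R(\theta)=\tfrac12(\theta-1-s^2)^2+\tfrac12\operatorname{Var}\bigl((X_0-s)^2\bigr),
\]
so \(\theta_0^{(s)}=1+s^2\) is the unique minimizer and \(V=R''=1\). Well-separation on the compact interval \(\Theta\) is immediate from the quadratic form of \(R\). This gives (L\ref{L1}) and (L\ref{L4}).

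\textbf{Learning assumptions (L\ref{L2})--(L\ref{L3}).} The loss is a polynomial in \((\theta,X_t)\): \(\psi_s(\theta;x)=\theta-(x-s)^2\) and \(\nabla_\theta\psi_s\equiv1\). Consequently:
\begin{itemize}
\item \(\Xi_2\) can be taken to be \(1\) and \(\Xi_H=0\).
\item Since \(\Theta\) is bounded, \(\Xi_1\) can be taken to be \(C(1+x^2)\) and \(\Xi_\ell\) to be \(C(1+x^2)\).
\item \(\Xi_0\) can be taken to be \(C(1+x^4)\).
\end{itemize}
All of these lie in \(L^2(\mu_0)\) because Gaussian moments are finite. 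Interchanging differentiation and expectation is then justified by dominated convergence, and \(\nabla R(\theta_0)=0\) follows from the formula for \(R\). The only point needing care is that the \(L^2\) requirement on \(\Xi_0\) uses \(\mathbb E X^8<\infty\).

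\textbf{Hermite decomposition and rank.} Expand \((x-s)^2=x^2-2sx+s^2\) and use \(x^2=H_2(x)+1\). This gives
\[
g_s(x)=1+s^2-(x-s)^2=2sH_1(x)-H_2(x),
\]
which is \eqref{eq:quadratic-target-score}. The function has zero mean and no chaoses beyond order two.
\begin{itemize}
\item If \(s\ne0\), the first-chaos coefficient is nonzero, so \(m=1\). Then \(J_1=\mathbb E[g_sH_1]=2s\), hence \(\mathcal J_s=J_1/1!=2s\ne0\).
\item If \(s=0\), only the second chaos survives, so \(m=2\). Then \(J_2=-\mathbb E H_2^2=-2\), hence \(\mathcal J_0=-2/2!=-1\ne0\).
\end{itemize}
This establishes (P\ref{P2}) in both cases.

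\textbf{Main obstacle.} There is no real analytical obstacle in this proposition. The only point requiring attention is the normalization \(\mathcal J=J_m/m!\): one must use \(\mathbb E H_2^2=2\) correctly so that \(\mathcal J_0=-1\), and not \(-2\).
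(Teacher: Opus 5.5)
Your proposal is correct and follows essentially the same route as the paper. The paper verifies (L1)--(L4) through a generic least-squares template, here with \(\zeta=1\) and \(Y=(X_t-s)^2\); its moment condition \(\mathbb E|Y|^4<\infty\) is exactly your eighth-moment remark. It then reads off \(m\) and \(\mathcal J\) from the Hermite expansion \(2sH_1-H_2\) via \(\mathcal J=J_m/m!\), as you do. The only point you leave unmentioned is the measurable choice of \(\widehat\theta_n\) required in (L1), which is trivial here because the minimizer is the weighted mean \(W_n^{-1}\sum_t a_tY_t\) clipped to the interval \(\Theta\).
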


\begin{corollary}[Two distinct learning regimes]
\label{cor:rank-transition-rates}
If \(s\ne0\), assume \textup{(P\ref{P1})} and
\textup{(P\ref{P3})}. Then
\begin{equation}
\label{eq:shifted-quadratic-rate}
\limsup_{n\to\infty}
\frac{
|\widehat\theta_n^{(s)}-(1+s^2)|
}{
n^{-\alpha/2}L_0(n)^{1/2}(2\log\log n)^{1/2}
}
=
2|s|\,c^{1/2}\mathcal W_{1,\alpha}(\kappa)
\qquad\text{a.s.}
\end{equation}
If \(s=0\), assume \textup{(P\ref{P1})} and
\textup{(P\ref{P3})--(P\ref{P6})}. Then
\begin{equation}
\label{eq:symmetric-quadratic-rate}
\limsup_{n\to\infty}
\frac{
|\widehat\theta_n^{(0)}-1|
}{
n^{-\alpha}L_0(n)(2\log\log n)
}
=
c\,\mathcal W_{2,\alpha}(\kappa)
\qquad\text{a.s.}
\end{equation}
\end{corollary}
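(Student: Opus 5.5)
The proof is a direct application of Theorem~\ref{thm:explicit-learning-rate}, with the model constants supplied by Proposition~\ref{prop:rank-transition}. Here \(r=0\) and \(V=1\), so \(q_0=V^{-1}\mathcal J_s=\mathcal J_s\). First I check that the hypotheses of Theorem~\ref{thm:parameter-LIL} hold in each case.

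Proposition~\ref{prop:rank-transition} gives \textup{(L\ref{L1})--(L\ref{L4})}. It also gives \textup{(P\ref{P2})}, with \(m=1\) when \(s\ne0\) and \(m=2\) when \(s=0\). This follows from the explicit Hermite decomposition \(g_s=2sH_1-H_2\) and the fact that \(\mathcal J=J_m/m!\) for \(r=0\).
\begin{itemize}
\item For \(s\ne0\): \(J_1=\mathbb E[g_sH_1]=2s\), so \(\mathcal J_s=2s\).
\item For \(s=0\): \(J_2=\mathbb E[-H_2^2]=-2\), so \(\mathcal J_0=-1\).
\end{itemize}
The remaining assumption \textup{(P\ref{P1})} is assumed in both cases, and \textup{(P\ref{P3})} is assumed with the standing regime \(\alpha m<1\), \(2\kappa+\alpha m<1\) for the relevant \(m\). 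For \(s=0\) the nonlinear case also needs \textup{(P\ref{P4})--(P\ref{P6})}, which are assumed in exactly that case.

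For \(s\ne0\), Theorem~\ref{thm:explicit-learning-rate} with \(m=1\) gives a limsup of
\[
c^{1/2}\,|V^{-1}\mathcal J_s|\,\mathcal W_{1,\alpha}(\kappa)=2|s|\,c^{1/2}\mathcal W_{1,\alpha}(\kappa),
\]
with normalization \(n^{-\alpha/2}L_0(n)^{1/2}(2\log\log n)^{1/2}\). This is \eqref{eq:shifted-quadratic-rate}.

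For \(s=0\), the same theorem with \(m=2\) gives a limsup of
\[
c\,|\mathcal J_0|\,\mathcal W_{2,\alpha}(\kappa)=c\,\mathcal W_{2,\alpha}(\kappa),
\]
with normalization \(n^{-\alpha}L_0(n)(2\log\log n)\). This is \eqref{eq:symmetric-quadratic-rate}.

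No real obstacle arises. The only points to watch are that the correct \(m\) is inserted into the standing regime of \textup{(P\ref{P3})}, and that the Hermite coefficient normalization \(\mathcal J=J_m/m!\) is applied correctly; in particular \(\mathcal J_0=-1\), not \(-2\). With both in place, the corollary is a routine specialization.
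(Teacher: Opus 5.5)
Your proposal is correct and is essentially the paper's own proof. The paper applies Theorem~\ref{thm:explicit-learning-rate} with \((m,V,\mathcal J)=(1,1,2s)\) for \(s\ne0\) and \((m,V,\mathcal J)=(2,1,-1)\) for \(s=0\), with these constants supplied by Proposition~\ref{prop:rank-transition}; your additional checks of the standing regime in \textup{(P\ref{P3})} for the relevant \(m\) and of the normalization \(\mathcal J=J_m/m!\) are consistent with that argument, just more explicit.
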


\begin{proof}
Apply Theorem~\ref{thm:explicit-learning-rate} with
\((m,V,\mathcal J)=(1,1,2s)\) for \(s\ne0\), and with
\((m,V,\mathcal J)=(2,1,-1)\) for \(s=0\).
\end{proof}

Thus exact symmetry cancels the first chaos and changes the
almost-sure learning exponent from \(-\alpha/2\) to \(-\alpha\).

By Remark~\ref{rem:parameter-cluster-parity}, the two \(m=2\)
examples also have one-sided endpoint cluster sets. In one-step
prediction,
\[
\operatorname{Clust}_{n\to\infty}
\left\{
\frac{W_n(\widehat\theta_n^{\mathrm{pred}}-\rho_1)}
     {B_nL_n}
\right\}
=
[0,(1-\rho_1)\Lambda_{2,\alpha,\kappa}]
\qquad\text{a.s.},
\]
while for the symmetric quadratic target the corresponding cluster
set is \([0,\Lambda_{2,\alpha,\kappa}]\). By contrast, the \(m=1\)
examples have symmetric endpoint cluster sets.

\begin{table}[htb]
\centering
\small
\renewcommand{\arraystretch}{1.2}
\begin{tabular}{
    @{}
    p{0.25\textwidth}
    c
    c
    p{0.22\textwidth}
    p{0.25\textwidth}
    @{}
}
\toprule
\textbf{Learning problem}
&
\(\boldsymbol{p}\)
&
\(\boldsymbol{m}\)
&
\textbf{Low-frequency coefficient}
&
\textbf{Parameter LIL scale}
\\
\midrule

One-step linear prediction
&
\(1\)
&
\(2\)
&
\(\rho_1-1\)
&
\(n^{-\alpha}L_0(n)(2\log\log n)\)
\\[1mm]

Threshold-event least-squares classification
&
\(2\)
&
\(1\)
&
\(-\varphi(b)\bigl(1-\rho_1,\rho_1b\bigr)^\top\)
&
\(n^{-\alpha/2}L_0(n)^{1/2}\sqrt{2\log\log n}\)
\\[1mm]

Quadratic target, \(s\ne0\)
&
\(1\)
&
\(1\)
&
\(2s\)
&
\(n^{-\alpha/2}L_0(n)^{1/2}\sqrt{2\log\log n}\)
\\[1mm]

Symmetric quadratic target, \(s=0\)
&
\(1\)
&
\(2\)
&
\(-1\)
&
\(n^{-\alpha}L_0(n)(2\log\log n)\)
\\

\bottomrule
\end{tabular}
\caption{
Machine-learning examples: parameter dimension \(p\), chaos rank \(m\),
effective low-frequency coefficient, and almost-sure learning scale.
The corresponding sharp constants are given in
Corollaries~\ref{cor:linear-prediction-rate},
\ref{cor:classification-rate}, and
\ref{cor:rank-transition-rates}.
}
\label{tab:ML-examples}
\end{table}
%\section{Numerical experiments}
% =====================================================================
% % =====================================================================
% Section 8. Numerical experiments
% =====================================================================
\section{Numerical experiments}
\label{sec:numerics}

We illustrate two aspects of the theory: Section~8.1 reports deterministic numerical
evaluation of the sharp weighting objective, and Section~8.2 presents Monte Carlo behavior
of the learning error, including its rank-one geometry and the
symmetry-induced chaos-rank transition.

The Gaussian input is fractional Gaussian noise,
\(X_t=B_H(t+1)-B_H(t)\), with \(H=1-\alpha/2\), generated by the
Davies--Harte circulant-embedding method~\cite{DaviesHarte1987}.  The
Monte Carlo experiments use pure power weights \(a_t=t^{-\kappa}\),
\(N=2^{16}\), \(M=120\) independent trajectories, and the dyadic grid
\(n=2^8,\ldots,2^{16}\).  The classification experiment uses seed
\(\mathtt{20260830}\), while the rank-transition experiment uses seed
\(\mathtt{20260831}\).

\subsection{Weighting the sharp LIL constant}
\label{subsec:numerical-weighting}

For \(m\ge2\), the weighting objective
\(\mathcal W_{m,\alpha}(\kappa)\) also contains the variational constant
\(\Lambda_{m,\alpha,\kappa}\).  We compute it from the reduced
\([0,1]\) variational problem used in the companion paper, with exact
cell integration and shifted symmetric power iteration.  The main
computations use 1500 cells; selected values were checked on grids up
to 6000 cells.

At \(\alpha=0.2\), the computed minimizers under this numerical scheme are
\(\kappa^\star=0.1361,0.1310,0.1250\) for \(m=1,2,3\), respectively.
The corresponding reductions relative to uniform weighting are
\(0.100\%\), \(0.205\%\), and \(0.315\%\).  Thus the best positive
weighting gives only a modest improvement.  The opposite direction is
substantially more costly: at \(\kappa=-0.6\), the sharp constant is
larger than at \(\kappa=0\) by \(1.44\%\), \(3.10\%\), and \(5.04\%\)
for \(m=1,2,3\).

\begin{figure}[t]
\centering
\includegraphics[width=0.49\textwidth]{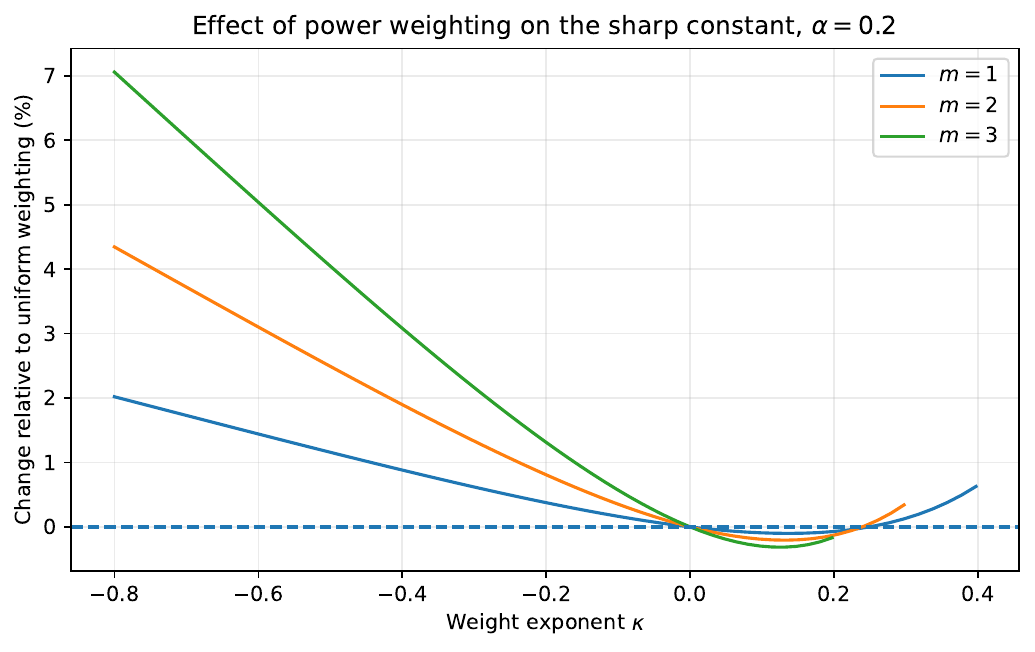}
\hfill
\includegraphics[width=0.49\textwidth]{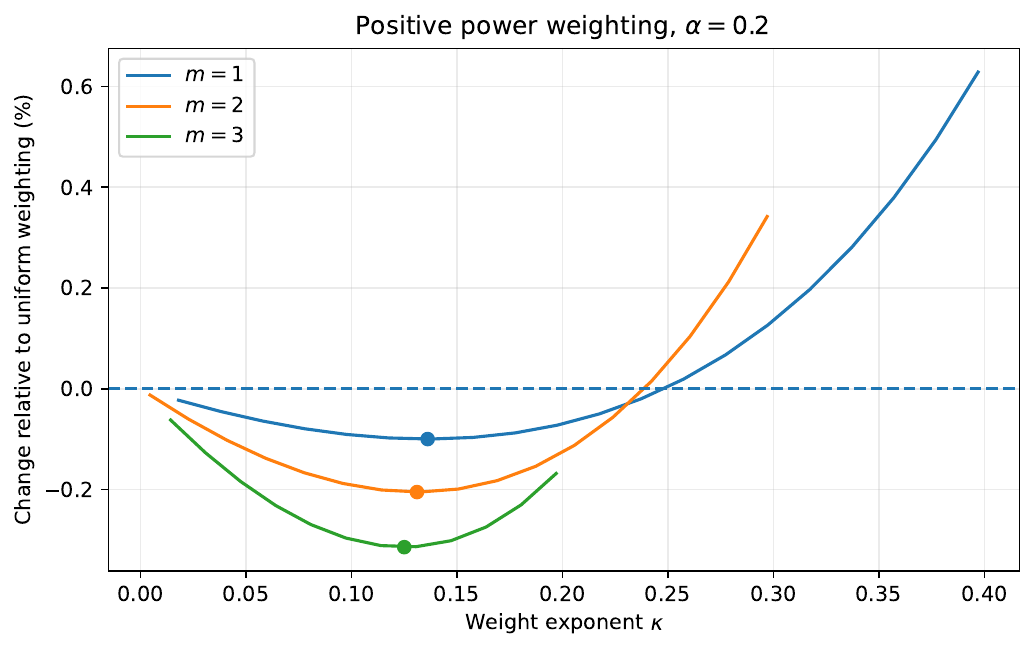}
\caption{
Effect of power weighting on the sharp LIL constant at \(\alpha=0.2\).
Left: percentage change relative to uniform weighting over a broad
range of \(\kappa\).  Negative exponents, which favor later
observations in the fixed-origin scheme, can produce a substantially
larger deterioration than the gain obtained from positive weighting.
Right: enlargement of the positive range; the points mark the
numerical minima for \(m=1,2,3\).
}
\label{fig:weighting-profiles}
\end{figure}

For stronger effective memory, the nonlinear objective can behave
qualitatively differently from the rank-one case: for example, at
\((m,\alpha)=(2,0.45)\) and \((3,0.30)\) it decreases numerically up to
the admissible boundary
\(\kappa=(1-\alpha m)/2\).  This numerical behavior suggests that the nonlinear problem may have
a boundary infimum rather than an interior minimizer. Grid-refinement
checks and the numerical localization of this transition are given in
the supplementary material.

\subsection{Learning-error geometry and chaos-rank transition}
\label{subsec:numerical-learning-geometry}

We first use the threshold-classification example with
\(\alpha=0.2\), \(b=0.5\), and
\(\kappa\in\{-0.15,0,0.10,0.20\}\).  Let
\(e_{\parallel}\) be the unit vector in the leading direction
\(V^{-1}\mathcal J\), and let \(e_{\perp}\) be an orthogonal unit
vector.  For all four weights,
\(\operatorname{RMSE}_{\perp}/\operatorname{RMSE}_{\parallel}\)
decreases from about \(0.46\) at \(n=2^8\) to about \(0.17\) at
\(n=2^{16}\).  The fitted log--log slopes of this ratio over the final
five dyadic points lie between \(-0.163\) and \(-0.175\).  This
provides finite-sample evidence for the rank-one asymptotic geometry of
the parameter error.

We next consider the quadratic-target estimator of
Section~\ref{sec:examples} with \(\alpha=0.4\), \(\kappa=0\), and
\(s\in\{0,0.1,0.2,0.4,0.6\}\).  Since
\(
    (X_t-s)^2-(1+s^2)=H_2(X_t)-2sH_1(X_t),
\)
the symmetry point \(s=0\) has rank two and asymptotic exponent
\(-\alpha=-0.4\), whereas every fixed \(s\ne0\) has rank one and
eventual exponent \(-\alpha/2=-0.2\).  The fitted slopes are
\(-0.370,-0.328,-0.282,-0.245,-0.231\) as \(s\) increases from \(0\)
to \(0.6\), showing the predicted finite-sample crossover toward the
rank-one regime.

\begin{figure}[t]
\centering
\includegraphics[width=0.49\textwidth]{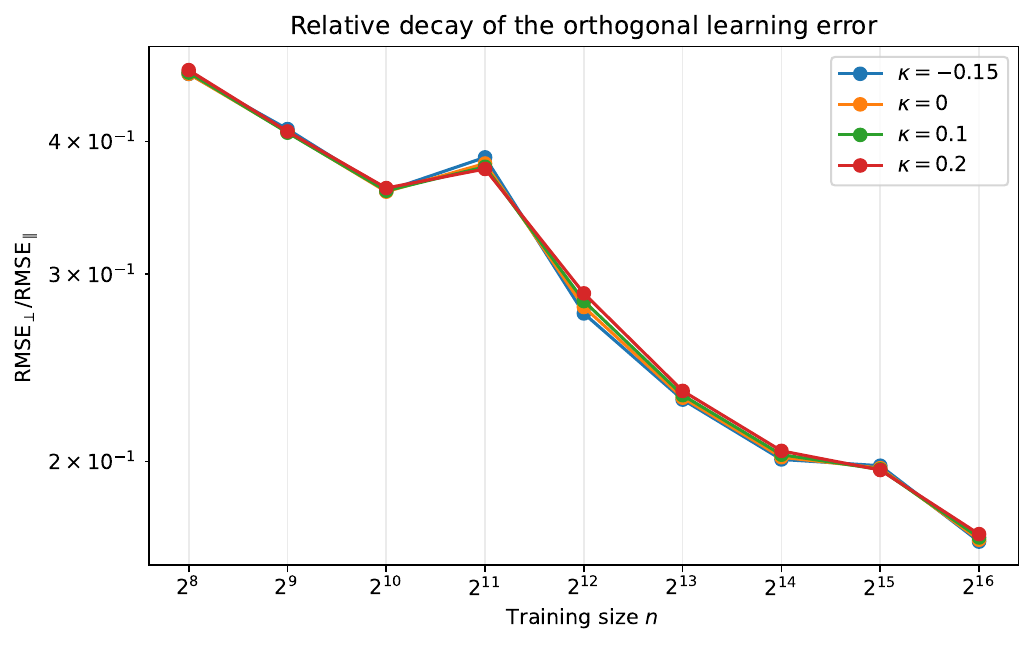}
\hfill
\includegraphics[width=0.49\textwidth]{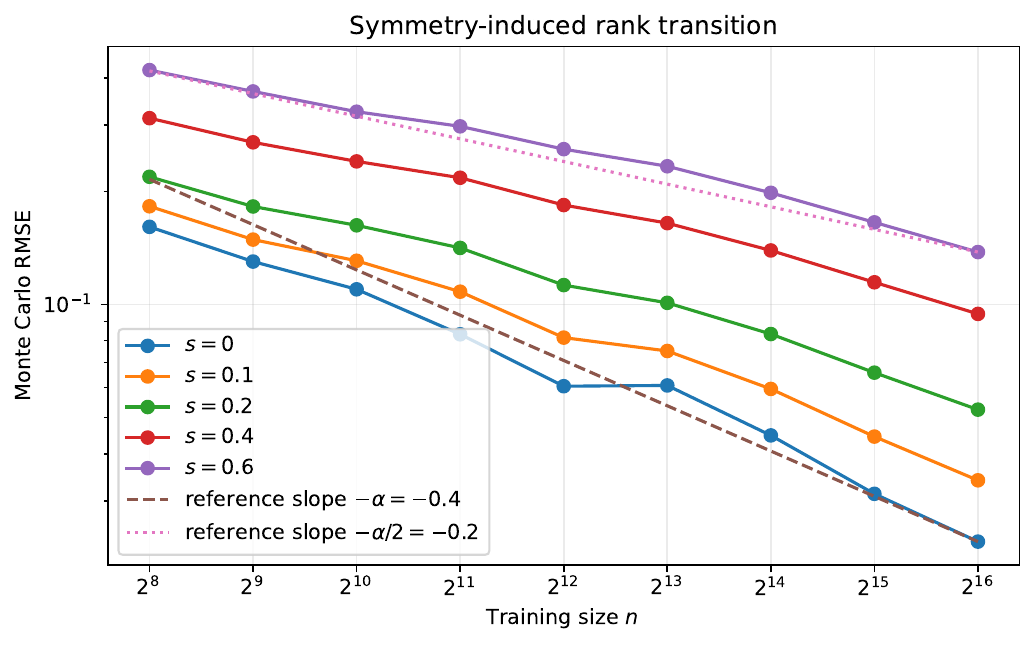}
\caption{
Finite-sample manifestations of the chaos geometry.
Left: ratio of the orthogonal and parallel RMSEs in threshold
classification; its decay illustrates concentration of the learning
error along \(V^{-1}\mathcal J\).
Right: symmetry-induced rank transition in the quadratic-target model.
At \(s=0\) the rank-two slope is already visible, while increasing
\(s\) makes the rank-one component dominate earlier.
}
\label{fig:learning-geometry}
\end{figure}

The total classification RMSE has nearly the same fitted slope for all
four values of \(\kappa\), although the common finite-sample slope is
still pre-asymptotic.  The corresponding RMSE curves, fitted slopes,
and all grid-refinement checks are reported in the supplementary
material.
%\section{Discussion and extensions}
% =====================================================================
% Section 9. Discussion
% =====================================================================

\section{Discussion}
\label{sec:discussion}

The results give an exact pathwise learning theory for regularly
weighted empirical-risk minimization under long-range dependence.
The companion weighted LIL~\cite{Moldavskaya2026WeightedLIL} provides
the probabilistic input; the present paper identifies how its
fluctuations propagate through the loss gradient and the population
curvature to the learned parameter, the learning trajectory, and the
excess expected loss.

The leading Wiener chaos of the loss gradient determines the
polynomial learning exponent.  If its rank is \(m\), then the parameter
error has exponent \(-\alpha m/2\), while the excess expected loss has
exponent \(-\alpha m\).  The local curvature transforms the leading
score direction \(\mathcal J\) into \(V^{-1}\mathcal J\); consequently,
for scalar Gaussian input the leading multidimensional learning
trajectory has rank-one pathwise geometry.

A second conclusion concerns sample weighting.  For
\(a_t=t^{-\kappa}L_a(t)\), the slowly varying factor \(L_a\) cancels
from the leading learning scale, and \(\kappa\) does not change its
polynomial exponent.  It does, however, change the exact LIL constant
and, in the nonlinear regime, the cluster geometry.  Thus two
weighting rules can have the same conventional learning rate while
producing different magnitudes of the largest persistent pathwise
fluctuations.

For \(m=1\), this comparison can be made globally over the admissible
range \(\kappa<(1-\alpha)/2\).
Theorem~\ref{thm:linear-optimal-weight} shows that every global
minimizer is strictly positive.  Hence both the unweighted choice
\(\kappa=0\) and negative exponents are suboptimal for the exact
rank-one LIL constant.  Negative \(\kappa\) favor later observations
within the fixed-origin weighting scheme considered here.  They should
not be confused with age-dependent forgetting weights
\(a_{n,t}=w(n-t)\) or \(w(t/n)\), which form triangular arrays and
require a different theory.  For \(m\ge2\), optimization is potentially
richer because \(\Lambda_{m,\alpha,\kappa}\) also depends on the
weight exponent.

The LIL is an almost-sure asymptotic statement and does not by itself
provide finite-sample confidence sequences or other anytime-valid
guarantees.  Such results would require additional time-uniform
probability control.  Likewise, the present results concern empirical
risk minimization and do not automatically extend to stochastic
gradient descent: an SGD error recursion contains products of
linearized update operators and is not a deterministic weighted sum
of gradients evaluated at \(\theta_0\).

The theory is proved in the noncentral interior regime
\(
    \alpha m<1,
    \quad
    2\kappa+\alpha m<1.
\)
The critical cases \(\alpha m=1\) and
\(2\kappa+\alpha m=1\) require different logarithmic normalizations.
When \(\alpha m>1\), the leading covariance becomes summable under
appropriate additional conditions and a short-memory LIL would require
a separate strong-approximation or invariance-principle argument.

Several extensions are natural.  A vector-valued Gaussian driver may
produce genuinely multidimensional cluster geometry; growing lag
windows would require quantitative control of the low-frequency lag
reduction; and nonsmooth losses or nonidentifiable, overparameterized
models require different local expansions.  Data-dependent sample
weights are another distinct direction because they destroy the
deterministic weighted-sum structure used here.

In summary, long memory and chaos rank determine the polynomial
learning exponent, whereas regularly varying sample weights determine
the sharp pathwise constant and the detailed cluster geometry.  This
separation provides a statistical-learning interpretation of the
weighted LIL and a principled way to compare power-law weighting
schemes for long-range dependent data.
%% ================================================================
%% Acknowledgments
%% ================================================================

%% ================================================================
%% Supplementary material
%% ================================================================

\begin{supplement}
\stitle{Supplementary proofs and numerical details}
\sdescription{The supplement contains proofs for weighted empirical
risk minimization, functional delta-method and excess-risk results,
weighting estimates, verification of the learning examples, and
additional numerical methods and results.}
\end{supplement}

%% ================================================================
%% Bibliography
%% ================================================================


\begin{thebibliography}{99}

\bibitem[Arcones(1994)]{Arcones1994}
Arcones, M. A. (1994).
Limit theorems for nonlinear functionals of a stationary Gaussian
sequence of vectors.
\emph{The Annals of Probability} \textbf{22}, 2242--2274.

\bibitem[Benning, Nourdin and Peccati(2026)]{BenningNourdinPeccati2026}
Benning, F., Nourdin, I. and Peccati, G. (2026).
Correlated initialization of deep residual networks.
Preprint, arXiv:2609.03589.

\bibitem[Bingham, Goldie and Teugels(1987)]{BGT1987}
Bingham, N. H., Goldie, C. M. and Teugels, J. L. (1987).
\emph{Regular Variation}.
Encyclopedia of Mathematics and its Applications 27.
Cambridge University Press, Cambridge.

\bibitem[Cortes, Mansour and Mohri(2010)]{CortesMansourMohri2010}
Cortes, C., Mansour, Y. and Mohri, M. (2010).
Learning bounds for importance weighting.
In \emph{Advances in Neural Information Processing Systems 23},
442--450.

\bibitem[Davies and Harte(1987)]{DaviesHarte1987}
Davies, R. B. and Harte, D. S. (1987).
Tests for Hurst effect.
\emph{Biometrika} \textbf{74}, 95--101.

\bibitem[Dehling and Taqqu(1989)]{DehlingTaqqu1989}
Dehling, H. and Taqqu, M. S. (1989).
The empirical process of some long-range dependent sequences with an
application to U-statistics.
\emph{The Annals of Statistics} \textbf{17}, 1767--1783.

\bibitem[Fan, Yan and Xiu(2014)]{FanYanXiu2014}
Fan, J., Yan, A. and Xiu, N. (2014).
Asymptotic properties for \(M\)-estimators in linear models with
dependent random errors.
\emph{Journal of Statistical Planning and Inference}
\textbf{148}, 49--66.

\bibitem[He and Wang(1995)]{HeWang1995}
He, X. and Wang, G. (1995).
Law of the iterated logarithm and invariance principle for
\(M\)-estimators.
\emph{Proceedings of the American Mathematical Society}
\textbf{123}, 563--573.

\bibitem[Ivanov et al.(2013)]{IvanovEtAl2013}
Ivanov, A. V., Leonenko, N. N., Ruiz-Medina, M. D. and Savich, I. N.
(2013).
Limit theorems for weighted nonlinear transformations of Gaussian
stationary processes with singular spectra.
\emph{The Annals of Probability} \textbf{41}, 1088--1114.

\bibitem[Koul(1992)]{Koul1992}
Koul, H. L. (1992).
M-estimators in linear models with long range dependent errors.
\emph{Statistics \& Probability Letters} \textbf{14}, 153--164.

\bibitem[Kuznetsov and Mohri(2017)]{KuznetsovMohri2017}
Kuznetsov, V. and Mohri, M. (2017).
Generalization bounds for non-stationary mixing processes.
\emph{Machine Learning} \textbf{106}, 93--117.

\bibitem[Mohri and Rostamizadeh(2010)]{MohriRostamizadeh2010}
Mohri, M. and Rostamizadeh, A. (2010).
Stability bounds for stationary \(\phi\)-mixing and \(\beta\)-mixing
processes.
\emph{Journal of Machine Learning Research} \textbf{11}, 789--814.

\bibitem[Moldavskaya(2007)]{Moldavskaya2007LSE}
Moldavskaya, E. (2007).
Asymptotic distributions of non-linear inequality constrained least
squares estimation of linear regression coefficients with strong
dependence.
\emph{Theory of Probability and Mathematical Statistics} \textbf{75},
121--137.

\bibitem[Moldavskaya(2026a)]{Moldavskaya2026WeightedLIL}
Moldavskaya, E. (2026a).
Law of Iterated Logarithm for Weighted Sums of Functionals of
Long-Memory Gaussian Sequences.
Preprint, arXiv:2606.21006.

\bibitem[Moldavskaya(2026b)]{Moldavskaya2026MLSupplement}
Moldavskaya, E. (2026b).
Supplement to ``Weighted Iterated-Logarithm Laws for Machine
Learning under Long-Range Dependence.''
\bibitem[Schreuder, Brunel and Dalalyan(2020)]{SchreuderBrunelDalalyan2020}
Schreuder, N., Brunel, V.-E. and Dalalyan, A. (2020).
A nonasymptotic law of iterated logarithm for general M-estimators.
In \emph{Proceedings of the Twenty Third International Conference on
Artificial Intelligence and Statistics}, Proceedings of Machine
Learning Research \textbf{108}, 1331--1341.

\bibitem[Shimodaira(2000)]{Shimodaira2000}
Shimodaira, H. (2000).
Improving predictive inference under covariate shift by weighting the
log-likelihood function.
\emph{Journal of Statistical Planning and Inference} \textbf{90},
227--244.

\bibitem[Sugiyama, Krauledat and M\"uller(2007)]{SugiyamaKrauledatMueller2007}
Sugiyama, M., Krauledat, M. and M\"uller, K.-R. (2007).
Covariate shift adaptation by importance weighted cross validation.
\emph{Journal of Machine Learning Research} \textbf{8}, 985--1005.

\bibitem[Taqqu(1977)]{Taqqu1977}
Taqqu, M. S. (1977).
Law of the iterated logarithm for sums of non-linear functions of
Gaussian variables that exhibit a long range dependence.
\emph{Zeitschrift f\"ur Wahrscheinlichkeitstheorie und Verwandte
Gebiete} \textbf{40}, 203--238.

\end{thebibliography}
\end{document}

% --- supplement: Supplement.tex ---

\begin{frontmatter}

\title{Supplement to ``Weighted Empirical Risk Minimization for Machine Learning under Long-Range Dependence: Exact Pathwise Rates and Learning-Error Geometry''}

\runtitle{Supplement: Weighted LIL for Long-Memory Learning}

\begin{aug}
\author{\inits{E.}\fnms{Elina}~\snm{Moldavskaya}}
\end{aug}

\end{frontmatter}

\section{Weighted empirical-risk minimization}
\label{supsec:weighted-ERM}

Throughout the Supplement, we use the notation and assumptions of the
main paper.  This section provides the details for Proposition~4.1 of
the main paper.

\subsection{Regularly weighted ergodic averages}
\label{supsubsec:weighted-ergodic}

We first record the weighted ergodic result used for the empirical
risk and the empirical Hessian.

\begin{lemma}[Regularly weighted ergodic lemma]
\label{supp:lem-weighted-ergodic}
Let \(\{Y_t\}_{t\in\mathbb Z}\) be stationary and ergodic with
\(\mathbb E|Y_0|<\infty\). Let
\(a_t=t^{-\kappa}L_a(t)\) be eventually positive and regularly varying,
with \(\kappa<1\). For any fixed positive integer \(t_0\), put
\[
A_n^{(t_0)}:=\sum_{t=t_0}^{n}a_t.
\]
Then
\[
\frac{1}{A_n^{(t_0)}}
\sum_{t=t_0}^{n}a_tY_t
\longrightarrow
\mathbb EY_0
\qquad\text{almost surely}.
\]
\end{lemma}

\begin{proof}
Put
\[
\mu:=\mathbb EY_0,
\qquad
S_n^Y:=\sum_{t=t_0}^{n}(Y_t-\mu).
\]
By the ergodic theorem,
\[
S_n^Y=o(n)
\qquad\text{almost surely}.
\]

By the smooth-variation theorem for regularly varying functions
\citep{BGT1987}, there exists an eventually positive continuously
differentiable function \(\widetilde a\) such that
\[
\widetilde a(t)\sim a_t,
\qquad
\frac{x\widetilde a'(x)}{\widetilde a(x)}
\longrightarrow-\kappa .
\]
Write
\[
\widetilde a_t:=\widetilde a(t),
\qquad
\widetilde A_n^{(t_0)}
:=
\sum_{t=t_0}^{n}\widetilde a_t.
\]
For all sufficiently large \(t\),
\[
|\widetilde a_{t+1}-\widetilde a_t|
\le
C\frac{\widetilde a_t}{t},
\]
while Karamata's theorem gives
\[
\widetilde A_n^{(t_0)}
\sim
\frac{n\widetilde a_n}{1-\kappa}.
\]

Summation by parts yields
\[
\sum_{t=t_0}^{n}
\widetilde a_t(Y_t-\mu)
=
\widetilde a_nS_n^Y
+
\sum_{t=t_0}^{n-1}
(\widetilde a_t-\widetilde a_{t+1})S_t^Y .
\]
Fix \(\varepsilon>0\). Almost surely,
\(|S_t^Y|\le\varepsilon t\) for all sufficiently large \(t\).
Consequently,
\[
\left|
\sum_{t=t_0}^{n}
\widetilde a_t(Y_t-\mu)
\right|
\le
O(1)
+
\varepsilon n\widetilde a_n
+
C\varepsilon
\sum_{t=t_0}^{n-1}\widetilde a_t .
\]
After division by \(\widetilde A_n^{(t_0)}\), Karamata's theorem gives
\[
\limsup_{n\to\infty}
\frac{1}{\widetilde A_n^{(t_0)}}
\left|
\sum_{t=t_0}^{n}
\widetilde a_t(Y_t-\mu)
\right|
\le C\varepsilon
\qquad\text{almost surely}.
\]
Since \(\varepsilon\) is arbitrary,
\[
\frac{1}{\widetilde A_n^{(t_0)}}
\sum_{t=t_0}^{n}
\widetilde a_t(Y_t-\mu)
\longrightarrow0
\qquad\text{almost surely}.
\]

It remains to return from \(\widetilde a_t\) to \(a_t\).
The preceding smooth-weight argument, applied to the stationary
ergodic integrable sequence \(\{|Y_t-\mu|\}\), gives
\[
\frac{1}{\widetilde A_n^{(t_0)}}
\sum_{t=t_0}^{n}
\widetilde a_t|Y_t-\mu|
\longrightarrow
\mathbb E|Y_0-\mu|
\qquad\text{almost surely}.
\]
For fixed \(N\), put
\[
\delta_N
:=
\sup_{t\ge N}
\left|
\frac{a_t}{\widetilde a_t}-1
\right|,
\qquad
\delta_N\longrightarrow0.
\]
Then
$$
\frac{1}{\widetilde A_n^{(t_0)}}
\left|
\sum_{t=t_0}^{n}
(a_t-\widetilde a_t)(Y_t-\mu)
\right|
\le
o(1)
+
\delta_N
\frac{1}{\widetilde A_n^{(t_0)}}
\sum_{t=N}^{n}
\widetilde a_t|Y_t-\mu|.
$$
Taking first \(n\to\infty\) and then \(N\to\infty\) shows that this
difference converges to zero almost surely. Finally,
\(A_n^{(t_0)}/\widetilde A_n^{(t_0)}\to1\), again by Karamata's
theorem and \(a_n/\widetilde a_n\to1\). Hence
\[
\frac{1}{A_n^{(t_0)}}
\sum_{t=t_0}^{n}a_t(Y_t-\mu)
\longrightarrow0
\qquad\text{almost surely},
\]
which proves the result.
\end{proof}

We also need a uniform version over compact parameter sets.

\begin{lemma}[Weighted uniform ergodic lemma]
\label{supp:lem-weighted-uniform-ergodic}
Let the weights be as in Lemma~\ref{supp:lem-weighted-ergodic}.
Let \(K\subset\mathbb R^p\) be compact, and let
\(\{F_\theta(Z_t):t\in\mathbb Z\}\) be stationary and ergodic for
every \(\theta\in K\). Suppose that there exist stationary integrable
sequences \(\{M_t\}\) and \(\{L_t\}\), with \(\{L_t\}\) ergodic,
such that
\[
\sup_{\theta\in K}|F_\theta(Z_t)|\le M_t
\]
and
\[
|F_\theta(Z_t)-F_{\theta'}(Z_t)|
\le
L_t\|\theta-\theta'\|,
\qquad
\theta,\theta'\in K.
\]
Then, for every fixed positive integer \(t_0\),
\[
\sup_{\theta\in K}
\left|
\frac{1}{A_n^{(t_0)}}
\sum_{t=t_0}^{n}a_tF_\theta(Z_t)
-
\mathbb EF_\theta(Z_0)
\right|
\longrightarrow0
\qquad\text{almost surely}.
\]
\end{lemma}

\begin{proof}
By the envelope bound and \(A_n^{(t_0)}\to\infty\), changing finitely
many weights does not affect the conclusion. We may therefore assume
that all weights are positive.
Fix \(\delta>0\), and let
\(\theta_1,\ldots,\theta_{N_\delta}\) be a finite
\(\delta\)-net of \(K\). By
Lemma~\ref{supp:lem-weighted-ergodic}, almost surely,
\[
\frac{1}{A_n^{(t_0)}}
\sum_{t=t_0}^{n}a_tF_{\theta_j}(Z_t)
\longrightarrow
\mathbb EF_{\theta_j}(Z_0)
\]
for every \(j\), and
\[
\frac{1}{A_n^{(t_0)}}
\sum_{t=t_0}^{n}a_tL_t
\longrightarrow
\mathbb EL_0.
\]
For every \(\theta\in K\), choose \(j\) with
\(\|\theta-\theta_j\|\le\delta\). Then
$$
\left|
\frac{1}{A_n^{(t_0)}}
\sum_{t=t_0}^{n}a_tF_\theta(Z_t)
-
\mathbb EF_\theta(Z_0)
\right|
\le
\left|
\frac{1}{A_n^{(t_0)}}
\sum_{t=t_0}^{n}a_tF_{\theta_j}(Z_t)
-
\mathbb EF_{\theta_j}(Z_0)
\right|
+
\delta
\left\{
\frac{1}{A_n^{(t_0)}}
\sum_{t=t_0}^{n}a_tL_t
+
\mathbb EL_0
\right\}.
$$
Taking the supremum over \(\theta\in K\), then the limit superior in
\(n\), and finally letting \(\delta\downarrow0\), proves the claim.
\end{proof}

\subsection{Proof of Proposition 4.1}
\label{supsubsec:proof-prop41}

We now prove the uniform laws, consistency, and local-curvature
statements collected in Proposition~4.1 of the main paper.

\begin{proof}[Detailed proof of Proposition~4.1 of the main paper]
By \textup{(P1)}, \(\rho(h)\to0\). Hence the stationary Gaussian
sequence \(\{X_t\}\) is ergodic, and so are the finite-lag sequence
\(\{\mathbf X_t^{(r)}\}\) and every measurable factor \(\{Z_t\}\).

Under \textup{(P3)}, \(\kappa<1/2\), hence in particular
\(\kappa<1\), so
Lemma~\ref{supp:lem-weighted-ergodic} applies with
\(t_0=r+1\). In this case
\[
A_n^{(r+1)}
=
\sum_{t=r+1}^{n}a_t
=
W_n.
\]

We first prove the uniform law for the empirical risk. Assumption
\textup{(L3)} gives
\[
\Xi_0,\Xi_\ell\in L^2(\mu_r)\subset L^1(\mu_r),
\]
and therefore the required envelopes are integrable. Moreover,
\[
\sup_{\theta\in\Theta}
|\ell(\theta;Z_t)|
\le
\Xi_0(\mathbf X_t^{(r)})
\]
and
\[
|\ell(\theta;Z_t)-\ell(\theta';Z_t)|
\le
\Xi_\ell(\mathbf X_t^{(r)})
\|\theta-\theta'\|.
\]
Since \(\Theta\) is compact,
Lemma~\ref{supp:lem-weighted-uniform-ergodic} yields
\[
\sup_{\theta\in\Theta}
|\widehat R_n(\theta)-R(\theta)|
\longrightarrow0
\qquad\text{almost surely}.
\]

We next prove consistency. Fix \(\varepsilon>0\). By
\textup{(L1)},
\[
\eta_\varepsilon
:=
\inf_{\substack{\theta\in\Theta\\
                 \|\theta-\theta_0\|\ge\varepsilon}}
\{R(\theta)-R(\theta_0)\}
>0.
\]
Almost surely, for all sufficiently large \(n\),
\[
\sup_{\theta\in\Theta}
|\widehat R_n(\theta)-R(\theta)|
<
\frac{\eta_\varepsilon}{3}.
\]
For every \(\theta\in\Theta\) with
\(\|\theta-\theta_0\|\ge\varepsilon\),
$$
\widehat R_n(\theta)
>
R(\theta)-\frac{\eta_\varepsilon}{3}
\ge
R(\theta_0)+\frac{2\eta_\varepsilon}{3}
>
\widehat R_n(\theta_0).
$$
Hence every minimizer \(\widehat\theta_n\) lies in the
\(\varepsilon\)-neighborhood of \(\theta_0\) for all sufficiently
large \(n\). Since \(\varepsilon\) is arbitrary,
\[
\widehat\theta_n\longrightarrow\theta_0
\qquad\text{almost surely}.
\]

We now establish the uniform Hessian law. For
\(\theta\in\Theta_0\), put
\[
H(\theta,\mathbf x)
:=
\nabla_\theta\psi(\theta;\Phi(\mathbf x)).
\]
Let \(K\subset\Theta_0\) be compact. Assumption \textup{(L3)} gives
\[
\sup_{\theta\in K}
\|H(\theta,\mathbf x)\|
\le
\Xi_2(\mathbf x)
\]
and
\[
\|H(\theta,\mathbf x)-H(\theta',\mathbf x)\|
\le
\Xi_H(\mathbf x)\|\theta-\theta'\|,
\qquad
\theta,\theta'\in K,
\]
where
\(\Xi_2,\Xi_H\in L^2(\mu_r)\subset L^1(\mu_r)\).
Applying
Lemma~\ref{supp:lem-weighted-uniform-ergodic}
coordinatewise to the finitely many matrix entries gives
\[
\sup_{\theta\in K}
\|\widehat V_n(\theta)-V(\theta)\|
\longrightarrow0
\qquad\text{almost surely},
\]
where
\[
V(\theta)
=
\mathbb E\nabla_\theta\psi(\theta;Z_0).
\]
Moreover,
\[
\|V(\theta)-V(\theta')\|
\le
\mathbb E\Xi_H(\mathbf X_0^{(r)})
\|\theta-\theta'\|,
\]
so \(V(\theta)\) is continuous on \(\Theta_0\).

Since \(\Theta_0\) is open and contains \(\theta_0\), choose
\(\delta>0\) such that
\[
K_0
:=
\overline B(\theta_0,\delta)
\subset
\Theta_0.
\]
By consistency, almost surely
\(\widehat\theta_n\in K_0\) for all sufficiently large \(n\).
Since \(K_0\) is convex, the entire segment
\[
\{\theta_0+u(\widehat\theta_n-\theta_0):0\le u\le1\}
\]
is then contained in \(K_0\).

Recall
\[
\Delta_n:=\widehat\theta_n-\theta_0
\]
and
\[
\overline V_n
:=
\int_0^1
\widehat V_n(\theta_0+u\Delta_n)\,du.
\]
We have
\begin{align*}
\|\overline V_n-V\|
&\le
\sup_{\theta\in K_0}
\|\widehat V_n(\theta)-V(\theta)\|
\\
&\quad+
\sup_{0\le u\le1}
\|V(\theta_0+u\Delta_n)-V(\theta_0)\|,
\end{align*}
where \(V:=V(\theta_0)\). The first term tends to zero by the uniform
Hessian law, and the second tends to zero by consistency and
continuity of \(V(\theta)\). Hence
\[
\overline V_n\longrightarrow V
\qquad\text{almost surely}.
\]
Since \(V\) is positive definite by \textup{(L4)},
\(\overline V_n\) is invertible for all sufficiently large \(n\), and
continuity of matrix inversion gives
\[
\overline V_n^{-1}
\longrightarrow
V^{-1}
\qquad\text{almost surely}.
\]

Finally, since
\(\theta_0\in\operatorname{int}\Theta\) and
\(\widehat\theta_n\to\theta_0\), almost surely
\(\widehat\theta_n\in\operatorname{int}\Theta\) for all sufficiently
large \(n\). Therefore
\[
\nabla\widehat R_n(\widehat\theta_n)=0.
\]
The integral Taylor formula gives
$$
0
=
\nabla\widehat R_n(\theta_0)
+
\left[
\int_0^1
\widehat V_n(\theta_0+u\Delta_n)\,du
\right]\Delta_n
=
\frac{1}{W_n}
\sum_{t=r+1}^{n}
a_tg(\mathbf X_t^{(r)})
+
\overline V_n\Delta_n
=
\frac{S_n}{W_n}
+
\overline V_n\Delta_n.
$$
Solving for \(\Delta_n\) yields the exact identity
\[
\Delta_n
=
-\,W_n^{-1}\overline V_n^{-1}S_n.
\]
This proves all assertions of Proposition~4.1.
\end{proof}

\section{Pathwise delta method for smooth functionals}
\label{supsec:delta-method}

We now give the detailed pathwise delta-method statement promised at
the end of Section~5 of the main paper. Recall that throughout that
section \(m\ge2\), and let \(\mathcal D_n\) denote the normalized
learning curve of Theorem~5.2 of the main paper. Thus, for fixed
\(0<\varepsilon<T\), \(\mathcal D_n\) is the linear interpolation on
\([\varepsilon,T]\) of
\[
\mathcal D_n(k/n)
=
\frac{\widehat\theta_k-\theta_0}{r_n},
\qquad
r_n=\frac{B_nL_n^{m/2}}{W_n}.
\]
Also recall
\[
q_0=V^{-1}\mathcal J.
\]

\begin{proposition}[Pathwise delta method]
\label{supp:prop-pathwise-delta}
Assume the assumptions of Theorem~5.2 of the main paper. Let
\(F:\Theta_0\to\mathbb R^s\) be continuously differentiable, and fix
\(0<\varepsilon<T\). For all sufficiently large \(n\), let
\(\mathcal D_n^F\) be the linear interpolation on
\([\varepsilon,T]\) of the mesh values
\[
\mathcal D_n^F(k/n)
:=
\frac{F(\widehat\theta_k)-F(\theta_0)}{r_n}.
\]
Then \(\{\mathcal D_n^F\}\) is almost surely relatively compact in
\(C([\varepsilon,T];\mathbb R^s)\), and
\begin{equation}
\label{supp:eq:functional-delta-cluster}
\operatorname{Cl}_{C([\varepsilon,T];\mathbb R^s)}
(\mathcal D_n^F)
=
\mathcal G^F_{m,\alpha,\kappa;\varepsilon,T}
\qquad\text{a.s.},
\end{equation}
where
\begin{equation}
\label{supp:eq:functional-delta-set}
\mathcal G^F_{m,\alpha,\kappa;\varepsilon,T}
:=
\left\{
t\mapsto
-t^{\kappa-1}DF(\theta_0)q_0
\left\langle
Q_t^{(m,\alpha,\kappa)},\xi^{\otimes m}
\right\rangle
:
\|\xi\|_{\mathfrak H_{\mathbb R}}\le1
\right\}.
\end{equation}
In particular, the effective leading direction is
\[
DF(\theta_0)V^{-1}\mathcal J\in\mathbb R^s.
\]
\end{proposition}

\begin{proof}
Work on a probability-one event on which Proposition~4.1 of the main
paper gives consistency and Theorem~5.2 holds on both
\([\varepsilon,T]\) and \([\varepsilon/2,T+1]\).

Put \(I_n:=\{\lfloor n\varepsilon\rfloor,\ldots,\lceil nT\rceil\}\).
Define \(\delta_n:=\max_{k\in I_n}\|\widehat\theta_k-\theta_0\|\).
Since \(\varepsilon>0\), we have
\(\min I_n=\lfloor n\varepsilon\rfloor\to\infty\);
consistency therefore gives \(\delta_n:=\max_{k\in I_n}\|\widehat\theta_k-\theta_0\|\to0\).
Indeed, for every \(\eta>0\), the inequality
\(\|\widehat\theta_k-\theta_0\|<\eta\) holds for all sufficiently
large \(k\), and hence for every \(k\in I_n\) once \(n\) is
sufficiently large.

Choose \(\delta_0>0\) with
\(\overline B(\theta_0,\delta_0)\subset\Theta_0\).
For \(0\le\delta\le\delta_0\), define
\(\omega_F(\delta):=
\sup_{\theta\in\overline B(\theta_0,\delta)}
\|DF(\theta)-DF(\theta_0)\|_{\mathrm{op}}\).
Continuity of \(DF\) at \(\theta_0\) gives
\(\omega_F(\delta)\to0\) as \(\delta\downarrow0\).

Let \(\widetilde{\mathcal D}_n\) denote the normalized learning curve
on \([\varepsilon/2,T+1]\); its restriction to \([\varepsilon,T]\)
is \(\mathcal D_n\). For all sufficiently large \(n\),
\(I_n/n\subset[\varepsilon/2,T+1]\), because
\(\lfloor n\varepsilon\rfloor/n\ge\varepsilon-1/n>\varepsilon/2\)
and \(\lceil nT\rceil/n\le T+1/n<T+1\).
For \(k\in I_n\), the mesh values satisfy
\(\widetilde{\mathcal D}_n(k/n)
=(\widehat\theta_k-\theta_0)/r_n\).
Relative compactness of \(\{\widetilde{\mathcal D}_n\}\) implies
boundedness in the uniform norm. Hence there is a finite random
constant \(C=C(\omega)\) such that
\(\max_{k\in I_n}\|\widehat\theta_k-\theta_0\|/r_n\le C\)
for all sufficiently large \(n\).

For large \(n\), \(\delta_n<\delta_0\), so all segments joining
\(\theta_0\) to \(\widehat\theta_k\), \(k\in I_n\), lie in
\(\Theta_0\). Define
\(R_k:=F(\widehat\theta_k)-F(\theta_0)
-DF(\theta_0)(\widehat\theta_k-\theta_0)\).
The integral Taylor formula gives
\[
R_k
=
\int_0^1
\bigl[
DF\bigl(\theta_0+u(\widehat\theta_k-\theta_0)\bigr)
-DF(\theta_0)
\bigr]
(\widehat\theta_k-\theta_0)\,du.
\]
Since \(\|u(\widehat\theta_k-\theta_0)\|\le\delta_n\) for
\(0\le u\le1\), we obtain
\(\|R_k\|\le\omega_F(\delta_n)\|\widehat\theta_k-\theta_0\|\),
uniformly over \(k\in I_n\).
On the chosen event, \(C(\omega)<\infty\) and
\(\delta_n(\omega)\to0\); therefore
\begin{equation}
\label{supp:eq:mesh-delta-remainder}
\max_{k\in I_n}\frac{\|R_k\|}{r_n}
\le C(\omega)\omega_F(\delta_n)
\longrightarrow0
\qquad\text{a.s.}
\end{equation}

The mesh values defining \(\mathcal D_n^F\) and
\(DF(\theta_0)\mathcal D_n\) differ by \(R_k/r_n\) for \(k\in I_n\).
Linear interpolation commutes with \(DF(\theta_0)\) and, being a
convex combination of adjacent values, does not increase the
maximum norm of these differences. Thus
\eqref{supp:eq:mesh-delta-remainder} yields
\begin{equation}
\label{supp:eq:uniform-delta-remainder}
\sup_{\varepsilon\le t\le T}
\|\mathcal D_n^F(t)-DF(\theta_0)\mathcal D_n(t)\|
\longrightarrow0
\qquad\text{a.s.}
\end{equation}

Consider the continuous linear map
\(\mathcal A:C([\varepsilon,T];\mathbb R^p)
\to C([\varepsilon,T];\mathbb R^s)\), defined by
\((\mathcal A h)(t):=DF(\theta_0)h(t)\).
By Theorem~5.2 of the main paper, \(\{\mathcal D_n\}\) is relatively
compact with cluster set
\(\mathcal G_{m,\alpha,\kappa;\varepsilon,T}\).
Continuity of \(\mathcal A\) and
\eqref{supp:eq:uniform-delta-remainder} therefore imply relative
compactness of \(\{\mathcal D_n^F\}\).

To identify its cluster set, suppose that
\(\mathcal D_{n_j}^F\to g\) uniformly.
Relative compactness of \(\{\mathcal D_n\}\) allows passage to a
further subsequence along which \(\mathcal D_{n_j}\to h\) uniformly
for some \(h\in\mathcal G_{m,\alpha,\kappa;\varepsilon,T}\).
Equation~\eqref{supp:eq:uniform-delta-remainder} then gives
\(g=\mathcal A h\).
Conversely, every \(h\in\mathcal G_{m,\alpha,\kappa;\varepsilon,T}\)
is the uniform limit of a subsequence \(\mathcal D_{n_j}\), and
\eqref{supp:eq:uniform-delta-remainder} implies
\(\mathcal D_{n_j}^F\to\mathcal A h\) uniformly. Hence
\[
\operatorname{Cl}_{C([\varepsilon,T];\mathbb R^s)}
(\mathcal D_n^F)
=
\mathcal A\mathcal G_{m,\alpha,\kappa;\varepsilon,T}
=
\mathcal G^F_{m,\alpha,\kappa;\varepsilon,T}.
\]
The last equality follows by applying \(\mathcal A\) to the explicit
cluster set in Theorem~5.2 of the main paper, yielding
\eqref{supp:eq:functional-delta-set}. This proves
\eqref{supp:eq:functional-delta-cluster}.
\end{proof}

\begin{corollary}[Prediction functional]
\label{supp:cor:prediction-delta}
Under the assumptions of
Proposition~\ref{supp:prop-pathwise-delta}, fix a feature vector \(u\)
such that \(\theta\mapsto f_\theta(u)\) is continuously
differentiable on \(\Theta_0\), and put
\[
c_u
:=
\nabla_\theta f_{\theta_0}(u)^\top
V^{-1}\mathcal J.
\]
Let \(\mathcal P_n^{(u)}\) be the linear interpolation on
\([\varepsilon,T]\) of
\[
\mathcal P_n^{(u)}(k/n)
:=
\frac{
f_{\widehat\theta_k}(u)-f_{\theta_0}(u)
}{r_n}.
\]
Then \(\{\mathcal P_n^{(u)}\}\) is almost surely relatively compact
in \(C([\varepsilon,T];\mathbb R)\), and
\[
\operatorname{Cl}_{C([\varepsilon,T];\mathbb R)}
(\mathcal P_n^{(u)})
=
\left\{
t\mapsto
-t^{\kappa-1}c_u
\left\langle
Q_t^{(m,\alpha,\kappa)},\xi^{\otimes m}
\right\rangle
:
\|\xi\|_{\mathfrak H_{\mathbb R}}\le1
\right\}
\qquad\text{a.s.}
\]
\end{corollary}

\begin{proof}
Apply Proposition~\ref{supp:prop-pathwise-delta} with
\[
F(\theta)=f_\theta(u).
\]
\end{proof}

\begin{remark}[Parity of the prediction-error cluster set]
\label{supp:rem:prediction-parity}
Assume \(c_u\ne0\). By the scaling relation for
\(Q_t^{(m,\alpha,\kappa)}\) and Remark~4.4 of the main paper, for
every fixed \(t>0\) the pointwise cluster set of
\(\{\mathcal P_n^{(u)}(t)\}\) is
\[
-c_u t^{-\alpha m/2}
[0,\Lambda_{m,\alpha,\kappa}]
\]
when \(m\) is even, whereas for odd \(m\) it is the symmetric interval
\[
\left[
-|c_u|t^{-\alpha m/2}\Lambda_{m,\alpha,\kappa},
\,
|c_u|t^{-\alpha m/2}\Lambda_{m,\alpha,\kappa}
\right].
\]
Thus, for even \(m\), all nonzero pointwise cluster values of the
normalized prediction error have the same sign; excursions in the
opposite direction are negligible on the LIL scale. This is a
pathwise statement about the LIL-scale cluster geometry and does not
assert a nonzero mean or finite-sample bias.
\end{remark}

\begin{remark}[Degenerate delta-method direction]
\label{supp:rem:delta-degenerate}
If
\(
DF(\theta_0)V^{-1}\mathcal J=0,
\)
then
\(
\mathcal G^F_{m,\alpha,\kappa;\varepsilon,T}
=
\{0\}.
\)
Since \(\{\mathcal D_n^F\}\) is relatively compact and has the unique
cluster point \(0\), it follows that
\[
\sup_{\varepsilon\le t\le T}
\|\mathcal D_n^F(t)\|
\longrightarrow0
\qquad\text{almost surely}.
\]
The proposition does not identify the next nonzero scale in this
degenerate case.
\end{remark}

\section{Functional LIL for the excess population risk}
\label{supsec:functional-excess-risk}

We now record the functional excess-risk statement referred to after
Theorem~6.3 of the main paper. Throughout this section \(m\ge2\).
Fix \(0<\varepsilon<T\), and recall that
\(\mathcal D_n\) denotes the normalized learning curve of
Theorem~5.2 of the main paper,
\[
\mathcal D_n(k/n)
=
\frac{\widehat\theta_k-\theta_0}{r_n}.
\]
Also recall
\(
q_0=V^{-1}\mathcal J.
\)

For all sufficiently large \(n\), let
\(\mathcal R_n^{\mathrm{exc}}\) be the linear interpolation on
\([\varepsilon,T]\) of the mesh values
\[
\mathcal R_n^{\mathrm{exc}}(k/n)
:=
\frac{
R(\widehat\theta_k)-R(\theta_0)
}{r_n^2}.
\]

\begin{proposition}[Functional excess-risk cluster set]
\label{supp:prop-functional-excess-risk}
Assume the assumptions of Theorem~5.2 of the main paper. Then
\(\{\mathcal R_n^{\mathrm{exc}}\}\) is almost surely relatively
compact in \(C([\varepsilon,T];\mathbb R)\), and
\begin{equation}
\label{supp:eq:functional-excess-risk-cluster}
\operatorname{Cl}_{C([\varepsilon,T];\mathbb R)}
(\mathcal R_n^{\mathrm{exc}})
=
\mathcal G^{\mathrm{exc}}_{m,\alpha,\kappa;\varepsilon,T}
\qquad\text{a.s.},
\end{equation}
where
\begin{equation}
\label{supp:eq:functional-excess-risk-set}
\mathcal G^{\mathrm{exc}}_{m,\alpha,\kappa;\varepsilon,T}
:=
\left\{
t\mapsto
\frac12
\mathcal J^\top V^{-1}\mathcal J\,
t^{2\kappa-2}
\left\langle
Q_t^{(m,\alpha,\kappa)},\xi^{\otimes m}
\right\rangle^2
:
\|\xi\|_{\mathfrak H_{\mathbb R}}\le1
\right\}.
\end{equation}
\end{proposition}

\begin{proof}
Work on a probability-one event on which Proposition~4.1 of the main
paper gives consistency and Theorem~5.2 holds on both
\([\varepsilon,T]\) and \(J:=[\varepsilon/2,T+1]\).
Let \(\widetilde{\mathcal D}_n\) denote the normalized learning curve
on \(J\), whose restriction to \([\varepsilon,T]\) is
\(\mathcal D_n\).

Put \(I_n:=\{\lfloor n\varepsilon\rfloor,\ldots,\lceil nT\rceil\}\),
\(h_k:=\widehat\theta_k-\theta_0\), and
\(\delta_n:=\max_{k\in I_n}\|h_k\|\).
As in the proof of Proposition~\ref{supp:prop-pathwise-delta},
\(\min I_n\to\infty\) and \(I_n/n\subset J\) for large \(n\).
Consistency and relative compactness on \(J\) therefore give
\(\delta_n\to0\) and
\(\max_{k\in I_n}\|h_k\|/r_n\le C(\omega)<\infty\)
for all sufficiently large \(n\).

Let \(Q_V(x):=\tfrac12x^\top Vx\), and choose \(\delta_0>0\) with
\(\overline B(\theta_0,\delta_0)\subset\Theta_0\).
By \textup{(L2)}, \(\nabla^2R(\theta)=V(\theta)\), whose continuity
was established in Section~\ref{supsec:weighted-ERM}.
For \(0\le\delta\le\delta_0\), define
\(\omega_V(\delta):=\sup_{\|\theta-\theta_0\|\le\delta}
\|V(\theta)-V\|_{\mathrm{op}}\).
Then \(\omega_V(\delta)\to0\) as \(\delta\downarrow0\).

For large \(n\), \(\delta_n<\delta_0\). Since
\(\nabla R(\theta_0)=0\), the remainder
\(E_k:=R(\widehat\theta_k)-R(\theta_0)-Q_V(h_k)\) satisfies
\[
E_k
=
\int_0^1
(1-u)\,h_k^\top[V(\theta_0+uh_k)-V]h_k\,du,
\qquad k\in I_n.
\]
Thus \(|E_k|\le\tfrac12\omega_V(\delta_n)\|h_k\|^2\), and
\begin{equation}
\label{supp:eq:risk-quadratic-mesh}
\max_{k\in I_n}\frac{|E_k|}{r_n^2}
\le\tfrac12 C(\omega)^2\omega_V(\delta_n)
\longrightarrow0
\qquad\text{a.s.}
\end{equation}

Let \(\mathcal Q_n\) be the linear interpolation of the values
\(Q_V(h_k/r_n)\), restricted to \([\varepsilon,T]\).
The mesh differences between the excess-risk trajectory and
\(\mathcal Q_n\) equal \(E_k/r_n^2\).
As in the proof of Proposition~\ref{supp:prop-pathwise-delta},
linear interpolation does not increase their maximum absolute value.
Hence the bound in \eqref{supp:eq:risk-quadratic-mesh} also controls
\(\|\mathcal R_n^{\mathrm{exc}}-\mathcal Q_n\|_\infty\).
Unlike \(DF(\theta_0)\), however, \(Q_V\) does not in general commute
with interpolation, so an additional error must be controlled.

For a cell used by the interpolation, put
\(x=h_k/r_n\), \(y=h_{k+1}/r_n\), and \(\lambda=nt-k\).
At points of this cell in \([\varepsilon,T]\),
\(\mathcal D_n(t)=(1-\lambda)x+\lambda y\) and
\(\mathcal Q_n(t)=(1-\lambda)Q_V(x)+\lambda Q_V(y)\).
Consequently,
\[
\mathcal Q_n(t)-Q_V(\mathcal D_n(t))
=
\tfrac12\lambda(1-\lambda)(y-x)^\top V(y-x).
\]
Since \(0\le\lambda\le1\), the absolute value of this difference
is at most \(\tfrac18\|V\|_{\mathrm{op}}\|y-x\|^2\).
Relative compactness of \(\{\widetilde{\mathcal D}_n\}\) gives
a common pathwise modulus of continuity
\(\eta_\omega(\delta)\to0\) as \(\delta\downarrow0\).
All relevant cell endpoints lie in \(J\) for large \(n\), so
\(\|y-x\|\le\eta_\omega(1/n)\to0\), uniformly over these cells.
Together with \eqref{supp:eq:risk-quadratic-mesh}, this yields
\begin{equation}
\label{supp:eq:uniform-risk-quadratic}
\|\mathcal R_n^{\mathrm{exc}}-Q_V(\mathcal D_n(\cdot))\|_\infty
\le\tfrac12 C(\omega)^2\omega_V(\delta_n)
+\tfrac18\|V\|_{\mathrm{op}}\eta_\omega(1/n)^2
\longrightarrow0
\qquad\text{a.s.}
\end{equation}

The map \(\mathcal Q:h\mapsto Q_V(h(\cdot))\) is continuous from
\(C([\varepsilon,T];\mathbb R^p)\) to
\(C([\varepsilon,T];\mathbb R)\).
Theorem~5.2 of the main paper and
\eqref{supp:eq:uniform-risk-quadratic} therefore imply relative
compactness of \(\{\mathcal R_n^{\mathrm{exc}}\}\).
The same subsequence argument as in
Proposition~\ref{supp:prop-pathwise-delta} identifies its cluster set
as \(\mathcal Q\mathcal G_{m,\alpha,\kappa;\varepsilon,T}\).

For a member
\(h(t)=-t^{\kappa-1}q_0
\langle Q_t^{(m,\alpha,\kappa)},\xi^{\otimes m}\rangle\)
of \(\mathcal G_{m,\alpha,\kappa;\varepsilon,T}\), we have
\[
(\mathcal Qh)(t)
=
\tfrac12\mathcal J^\top V^{-1}\mathcal J\,
t^{2\kappa-2}
\langle Q_t^{(m,\alpha,\kappa)},\xi^{\otimes m}\rangle^2,
\]
because \(q_0^\top Vq_0=\mathcal J^\top V^{-1}\mathcal J\).
Thus \(\mathcal Q\mathcal G_{m,\alpha,\kappa;\varepsilon,T}
=\mathcal G^{\mathrm{exc}}_{m,\alpha,\kappa;\varepsilon,T}\),
which proves \eqref{supp:eq:functional-excess-risk-cluster}.
\end{proof}

\begin{remark}[Pointwise envelope and parity]
\label{supp:rem:excess-risk-parity}
Let
\[
H=1-\kappa-\frac{\alpha m}{2}.
\]
By the scaling relation
\[
\sup_{\|\xi\|_{\mathfrak H_{\mathbb R}}\le1}
\left|
\left\langle
Q_t^{(m,\alpha,\kappa)},\xi^{\otimes m}
\right\rangle
\right|
=
t^H\Lambda_{m,\alpha,\kappa},
\]
evaluation of
\eqref{supp:eq:functional-excess-risk-cluster} at any fixed
\(t>0\) gives the pointwise cluster interval
\[
\left[
0,\,
\frac12
\mathcal J^\top V^{-1}\mathcal J\,
t^{-\alpha m}
\Lambda_{m,\alpha,\kappa}^2
\right].
\]
In particular, its upper endpoint is exactly the limsup constant in
equation~(6.6) of the main paper. The parity distinction present in
the parameter endpoint cluster set disappears after the quadratic
map: both the one-sided even-\(m\) parameter cluster set and the
symmetric odd-\(m\) cluster set are mapped onto the same nonnegative
pointwise interval after squaring.
\end{remark}

\section{Continuity and bounds for the weighting objective}
\label{supsec:weighting-objective}

For \(m\ge2\) and \(\alpha m<1\), recall
\[
\mathcal D_{m,\alpha}
=
\left(-\infty,\frac{1-\alpha m}{2}\right),
\qquad
\mathcal W_{m,\alpha}(\kappa)
=
(1-\kappa)
\sqrt{m!I_{m,\alpha,\kappa}}\,
\Lambda_{m,\alpha,\kappa},
\]
where
\[
I_{m,\alpha,\kappa}
=
\frac{2\mathrm B(1-\kappa,1-\alpha m)}
     {2-2\kappa-\alpha m}.
\]

\begin{proposition}[Continuity and bounds]
\label{supp:prop-weighting-objective}
For fixed \(m\ge2\) and \(\alpha m<1\), the map
\(\kappa\mapsto\mathcal W_{m,\alpha}(\kappa)\) is continuous on
\(\mathcal D_{m,\alpha}\). Moreover,
\begin{equation}
\label{supp:eq:weighting-bounds}
\underline{\mathcal W}_{m,\alpha}(\kappa)
\le
\mathcal W_{m,\alpha}(\kappa)
\le
(1-\kappa)\sqrt{I_{m,\alpha,\kappa}},
\qquad
\kappa\in\mathcal D_{m,\alpha},
\end{equation}
where
\[
\underline{\mathcal W}_{m,\alpha}(\kappa)
:=
(1-\kappa)
\left(
\frac{2-\alpha}{2(1-\alpha)}
\right)^{m/2}
\times
\sum_{j=0}^{m}\binom mj
\mathrm B\!\left(
1-\kappa+(1-\alpha)j,\,
1+(1-\alpha)(m-j)
\right).
\]
\end{proposition}

\begin{proof}
The beta-function formula shows that
\(I_{m,\alpha,\kappa}\) is positive and continuous on
\(\mathcal D_{m,\alpha}\). Write
\[
Q^\kappa:=Q_1^{(m,\alpha,\kappa)},
\qquad
I_\kappa:=I_{m,\alpha,\kappa},
\]
and define, for \(\kappa,\lambda\in\mathcal D_{m,\alpha}\),
\[
J_{m,\alpha}(\kappa,\lambda)
:=
\int_0^1\!\!\int_0^1
y^{-\kappa}z^{-\lambda}
|y-z|^{-\alpha m}\,dy\,dz.
\]
Thus \(J_{m,\alpha}(\kappa,\kappa)=I_\kappa\).

Fix a compact \(K\subset\mathcal D_{m,\alpha}\) and let
\(\kappa_+:=\max K\). For \(\kappa,\lambda\in K\), the integrand is
bounded by
\[
y^{-\kappa_+}z^{-\kappa_+}|y-z|^{-\alpha m},
\]
which is integrable because
\(2\kappa_++\alpha m<1\). Dominated convergence therefore shows that
\(J_{m,\alpha}\) is continuous on \(K\times K\).

The kernel representation and the regularized Fourier calculation in\\
\citep[Lemma~10.14]{Moldavskaya2026WeightedLIL}, applied with weight
exponents \(\kappa\) and \(\lambda\), give
\[
\left\langle Q^\kappa,Q^\lambda\right\rangle
=
\frac{J_{m,\alpha}(\kappa,\lambda)}
     {m!\sqrt{I_\kappa I_\lambda}},
\qquad
\|Q^\kappa\|^2=\frac1{m!}.
\]
Consequently,
\[
\|Q^\kappa-Q^\lambda\|^2
=
\frac{2}{m!}
\left(
1-
\frac{J_{m,\alpha}(\kappa,\lambda)}
     {\sqrt{I_\kappa I_\lambda}}
\right)
\longrightarrow0
\qquad
(\lambda\to\kappa).
\]

Moreover,
$$
\left|
\Lambda_{m,\alpha,\kappa}
-
\Lambda_{m,\alpha,\lambda}
\right|
\le
\sup_{\|\xi\|_{\mathfrak H_{\mathbb R}}\le1}
\left|
\left\langle
Q^\kappa-Q^\lambda,\xi^{\otimes m}
\right\rangle
\right|
\le
\|Q^\kappa-Q^\lambda\|,
$$
because
\(\|\xi^{\otimes m}\|=\|\xi\|^m\le1\).
Hence \(\kappa\mapsto\Lambda_{m,\alpha,\kappa}\), and therefore
\(\kappa\mapsto\mathcal W_{m,\alpha}(\kappa)\), is continuous.

Finally, the two-sided estimate in
\cite[Lemma~7.2]{Moldavskaya2026WeightedLIL} can be written as
\[
\frac{
\underline{\mathcal W}_{m,\alpha}(\kappa)
}{
(1-\kappa)\sqrt{m!I_{m,\alpha,\kappa}}
}
\le
\Lambda_{m,\alpha,\kappa}
\le
\frac1{\sqrt{m!}}.
\]
Since \(1-\kappa>0\) on \(\mathcal D_{m,\alpha}\), multiplication by
\((1-\kappa)\sqrt{m!I_{m,\alpha,\kappa}}\) proves
\eqref{supp:eq:weighting-bounds}.
\end{proof}

The bounds in \eqref{supp:eq:weighting-bounds} show that
\(\mathcal W_{m,\alpha}(\kappa)\) is finite and strictly positive for
each fixed admissible \(\kappa\); they are not intended to localize its
minimizer, for which Section~8 of the main paper computes
\(\Lambda_{m,\alpha,\kappa}\) directly.

\section{Digamma estimates for rank-one weighting}
\label{supsec:rank-one-digamma}

Recall from equation~(6.12) of the main paper that
\[
D_\alpha(\kappa)
=
-\frac{2}{1-\kappa}
-\Psi(1-\kappa)
+\Psi(2-\kappa-\alpha)
+\frac{2}{2-2\kappa-\alpha}.
\]

\begin{lemma}[Sign estimates for \(D_\alpha\)]
\label{supp:lem-rank-one-signs}
For every \(\alpha\in(0,1)\),
\(
D_\alpha(\kappa)<0,
\quad \kappa\le0,
\)
and\\
\(
\lim_{\kappa\uparrow(1-\alpha)/2}
D_\alpha(\kappa)>0.
\)
\end{lemma}

\begin{proof}
Put
\(
\beta:=1-\alpha\in(0,1),
\quad
x:=1-\kappa.
\)
%%%%%%%%%%
For \(\kappa\le0\), \(x\ge1\). Since
\[
\Psi(x+\beta)-\Psi(x)
=
\int_0^1
t^{x-1}\frac{1-t^\beta}{1-t}\,dt,
\qquad
\frac{2}{x}
=
2\int_0^1 t^{x-1}\,dt,
\]
and
\[
\frac{2}{2x-\alpha}
=
\int_0^1 t^{x-1-\alpha/2}\,dt,
\]
where all three integrals converge for \(x\ge1\), we obtain
\[
D_\alpha(\kappa)
=
\int_0^1
t^{x-1}
\left[
\frac{1-t^\beta}{1-t}
-2+t^{-\alpha/2}
\right]dt.
\]%%%%%%%%%%
For \(0<t<1\),
\begin{equation}
\label{supp:eq:power-quotient-bound}
\frac{1-t^\beta}{1-t}
<
\beta t^{(\beta-1)/2}
=
\beta t^{-\alpha/2}.
\end{equation}
Indeed, writing \(t=e^{-2u}\), \(u>0\), reduces
\eqref{supp:eq:power-quotient-bound} to
\(
\sinh(\beta u)<\beta\sinh u,
\)
which follows from strict convexity of \(\sinh\) and
\(\sinh 0=0\). Hence
\[
D_\alpha(\kappa)
<
\frac{2-\alpha}{x-\alpha/2}-\frac{2}{x}
=
\frac{\alpha(1-x)}
{x(x-\alpha/2)}
\le0.
\]
The first inequality is strict also when \(x=1\), proving
\(D_\alpha(\kappa)<0\) for all \(\kappa\le0\).

For the right endpoint, put
\[
\delta:=\frac{1-\alpha}{2}\in\left(0,\frac12\right).
\]
The recurrence and reflection formulas for the digamma function give
\[
\Psi(1+\delta)-\Psi(1-\delta)
=
\frac1\delta-\pi\cot(\pi\delta).
\]
Therefore
$$
\lim_{\kappa\uparrow(1-\alpha)/2}D_\alpha(\kappa)
=
\frac1\delta-\pi\cot(\pi\delta)
-\frac{2\delta}{1-\delta}
=
2\delta
\left[
\sum_{n=1}^{\infty}\frac1{n^2-\delta^2}
-\frac1{1-\delta}
\right],
$$
where the second equality uses the partial-fraction expansion of
\(\cot\). Since \(0<\delta<1/2\),
\[
\frac1{n^2-\delta^2}
>
\frac1{(n-\delta)(n+1-\delta)}
\]
for every \(n\ge1\), and
\[
\sum_{n=1}^{\infty}
\frac1{(n-\delta)(n+1-\delta)}
=
\frac1{1-\delta}.
\]
The displayed limit is therefore strictly positive.
\end{proof}

Lemma~\ref{supp:lem-rank-one-signs} supplies the sign statements used
in the proof of Theorem~6.4 of the main paper.

\section{Verification of the learning examples}
\label{supsec:examples}

All three examples in Section~7 of the main paper are instances of
linear least squares. We first record the common verification. Let
\[
\ell(\theta;\zeta,Y)
=
\frac12(Y-\theta^\top\zeta)^2,
\qquad
A:=\mathbb E[\zeta\zeta^\top],
\qquad
b_\zeta:=\mathbb E[\zeta Y].
\]
If \(A\) is positive definite and
\(\theta_0=A^{-1}b_\zeta\), then
\[
R(\theta)-R(\theta_0)
=
\frac12(\theta-\theta_0)^\top
A(\theta-\theta_0),
\qquad
V=A.
\]
Moreover,
\[
\psi(\theta;\zeta,Y)
=
\zeta(\theta^\top\zeta-Y),
\qquad
\nabla_\theta\psi(\theta;\zeta,Y)
=
\zeta\zeta^\top.
\]
Let \(\Theta\) be compact with
\(\theta_0\in\operatorname{int}\Theta\), and put
\(C_\Theta:=\sup_{\theta\in\Theta}\|\theta\|\). The choices
\[
\Xi_0
=
\frac12(|Y|+C_\Theta\|\zeta\|)^2,
\qquad
\Xi_\ell=\Xi_1
=
\|\zeta\|(|Y|+C_\Theta\|\zeta\|),
\]
\[
\Xi_2=\|\zeta\|^2,
\qquad
\Xi_H=0,
\]
verify \textup{(L3)} whenever
\(\mathbb E|Y|^4+\mathbb E\|\zeta\|^4<\infty\).
The preceding identities also give \textup{(L2)} and
\textup{(L4)}, while positive definiteness of \(A\) gives the
well-separated minimizer in \textup{(L1)}. Since the weighted empirical
risk is measurable in the data and continuous in \(\theta\), a
measurable minimizer on \(\Theta\) exists by the measurable maximum
theorem, applied to \(-\widehat R_n\)
\cite[Theorem~18.19]{AliprantisBorder2006}.

Under \textup{(P1)}, \(|\rho_1|<1\). Indeed, if
\(|\rho_1|=1\), then
\(\mathbb E(X_t-\rho_1X_{t-1})^2=1-\rho_1^2=0\) for every \(t\).
Thus \(X_t=\rho_1X_{t-1}\) almost surely with the same deterministic
sign \(\rho_1\), so \(\rho(h)=\rho_1^h\) for every \(h\ge1\),
contradicting \(\rho(h)\to0\).

\subsection{One-step linear prediction}

\begin{proof}[Proof of Proposition~7.1 of the main paper]
Here
\(
\zeta=X_{t-1},
\quad
Y=X_t,
\quad
A=1,
\quad
b_\zeta=\rho_1.
\)
All Gaussian moments are finite, so the preceding calculation verifies
\textup{(L1)--(L4)}, with
\(\theta_0=\rho_1\) and \(V=1\).

At the target,
$$
g_t^{\mathrm{pred}}
=
X_{t-1}(\rho_1X_{t-1}-X_t)
=
\rho_1H_2(X_{t-1})
-
I_2(h_{t-1}\widetilde\otimes h_t),
$$
and hence the score belongs entirely to the second Wiener chaos.
Furthermore,
\[
\frac{
\Cov(g_0^{\mathrm{pred}},H_2(X_h))
}{
2\rho(h)^2
}
=
\rho_1
\left(\frac{\rho(h+1)}{\rho(h)}\right)^2
-
\frac{\rho(h+1)}{\rho(h)}.
\]
By \textup{(P1)}, the right-hand side converges to
\(\rho_1-1\). Thus
\(
m=2,
\quad
\mathcal J_{\mathrm{pred}}=\rho_1-1\ne0.
\)
\end{proof}

\subsection{Threshold-event classification}

\begin{proof}[Proof of Proposition~7.3 of the main paper]
Put
\[
\zeta_t=
\begin{pmatrix}
1\\ X_{t-1}
\end{pmatrix},
\qquad
Y_t=\mathbf 1_{\{X_t>b\}}.
\]
Then
\[
A=\mathbb E[\zeta_t\zeta_t^\top]=I_2
\]
and, using
\(\mathbb E[X_t\mathbf 1_{\{X_t>b\}}]=\varphi(b)\),
\[
b_\zeta
=
\mathbb E[\zeta_tY_t]
=
\begin{pmatrix}
p_b\\
\rho_1\varphi(b)
\end{pmatrix}.
\]
Since \(Y_t\) is bounded and \(X_{t-1}\) has moments of every order,
the common least-squares calculation verifies
\textup{(L1)--(L4)}, with
\[
\theta_0^{\mathrm{cls}}
=
\begin{pmatrix}
p_b\\
\rho_1\varphi(b)
\end{pmatrix},
\qquad
V=I_2.
\]

The score is
\[
g_t^{\mathrm{cls}}
=
\begin{pmatrix}
1\\ X_{t-1}
\end{pmatrix}
\left(
p_b+\rho_1\varphi(b)X_{t-1}
-\mathbf 1_{\{X_t>b\}}
\right).
\]
Since
\[
\Pi_1\mathbf 1_{\{X_t>b\}}
=
\varphi(b)X_t,
\]
its first coordinate satisfies
\[
\Pi_1g_{t,1}^{\mathrm{cls}}
=
\varphi(b)(\rho_1X_{t-1}-X_t)\ne0.
\]
Hence \(m=1\).

For the low-frequency coefficient,
\[
\Cov(g_{0,1}^{\mathrm{cls}},X_h)
=
\varphi(b)
\bigl(\rho_1\rho(h+1)-\rho(h)\bigr).
\]
Also,
\[
\mathbb E[X_{-1}X_h\mid X_0]
=
\rho(h+1)-\rho_1\rho(h)
+\rho_1\rho(h)X_0^2
\]
and
\[
\mathbb E[
X_0^2\mathbf 1_{\{X_0>b\}}
]
=
p_b+b\varphi(b).
\]
Consequently,
\[
\mathbb E[
X_{-1}X_h\mathbf 1_{\{X_0>b\}}
]
=
p_b\rho(h+1)
+\rho_1b\varphi(b)\rho(h),
\]
which gives
\[
\Cov(g_{0,2}^{\mathrm{cls}},X_h)
=
-\rho_1b\varphi(b)\rho(h).
\]
Dividing by \(\rho(h)\) and using
\(\rho(h+1)/\rho(h)\to1\), we obtain
\[
\mathcal J_{\mathrm{cls}}
=
-\varphi(b)
\begin{pmatrix}
1-\rho_1\\
\rho_1b
\end{pmatrix}.
\]
Its first component is nonzero because
\(\varphi(b)>0\) and \(\rho_1\ne1\).
\end{proof}

\subsection{Symmetry-induced rank transition}

\begin{proof}[Proof of Proposition~7.6 of the main paper]
Here
\[
\zeta=1,
\qquad
Y=(X_t-s)^2,
\qquad
A=1,
\qquad
b_\zeta=\mathbb E(X_t-s)^2=1+s^2.
\]
Since \(Y\) has moments of every order, the common least-squares
calculation verifies \textup{(L1)--(L4)}, with
\[
\theta_0^{(s)}=1+s^2,
\qquad
V=1.
\]
At the target,
\[
g_s(X_t)
=
1+s^2-(X_t-s)^2
=
2sH_1(X_t)-H_2(X_t).
\]
Thus, if \(s\ne0\),
\[
m=1,
\qquad
\mathcal J_s
=
\frac{\mathbb E[g_s(X_0)H_1(X_0)]}{1!}
=
2s.
\]
If \(s=0\), then \(g_0=-H_2\), and therefore
\[
m=2,
\qquad
\mathcal J_0
=
\frac{\mathbb E[g_0(X_0)H_2(X_0)]}{2!}
=
-\frac{\mathbb E[H_2(X_0)^2]}{2!}
=
-1.
\]
\end{proof}

\section{Numerical methods and additional results}
\label{supsec:numerics}

This section supplements Section~8 of the main paper with numerical
methods and additional checks. Reported values are rounded; agreement
under grid refinement is not a certified error bound.

\subsection{Deterministic computation of the weighting objective}
\label{supsubsec:numerical-method}

The implementation discretizes the reduced variational problem of
\cite{Moldavskaya2026WeightedLIL}. For \(J\) uniform cells of width
\(h=1/J\), put \(y_i=(i-\tfrac12)h\),
\(c_\alpha=2\Gamma(\alpha)\cos(\pi\alpha/2)\), and
\[
K_{ij}
=
c_\alpha\int_{(j-1)h}^{jh}|y_i-z|^{-\alpha}\,dz,
\qquad
w_i
=
\frac{(ih)^{1-\kappa}-((i-1)h)^{1-\kappa}}{1-\kappa}.
\]
The singular kernel is integrated exactly within each source cell;
this avoids evaluating its diagonal singularity. The normalization
uses midpoint quadrature:
\[
\|v\|_K^2=h\,v^\top Kv,
\qquad
\lambda_J(v)
=
\frac{\sum_{i=1}^J w_i(Kv)_i^m}
{\sqrt{m!c_\alpha^m I_{m,\alpha,\kappa}}},
\qquad
\|v\|_K=1.
\]
Starting from \(v_i^{(0)}\propto y_i^{-\kappa}\), normalized in
\(\|\cdot\|_K\), the shifted iteration is
\[
u_i^{(\ell)}
=
\frac{w_i}{h}(Kv^{(\ell)})_i^{m-1}
+\tfrac12v_i^{(\ell)},
\qquad
v^{(\ell+1)}
=
\frac{u^{(\ell)}}{\|u^{(\ell)}\|_K}.
\]
It stops when successive values of \(\lambda_J\) differ by less than
\(10^{-13}\), with a cap of \(30000\) iterations. The terminal value
\(\widehat\Lambda_J\) gives
\[
\widehat{\mathcal W}_J
=
(1-\kappa)\sqrt{m!I_{m,\alpha,\kappa}}\,
\widehat\Lambda_J.
\]
There is one deterministic starting vector per evaluation and no
random seed. The rank-one case provides the normalization check
\(\Lambda_{1,\alpha,\kappa}=1\).

For \(\bar\kappa=(1-\alpha m)/2\), SciPy's bounded scalar search uses
\([-0.5,\bar\kappa-10^{-6}]\), absolute tolerance \(10^{-7}\), and at
most \(120\) iterations; its result is denoted \(\widehat\kappa_J\).
The broad profiles at \(\alpha=0.2\) use \(J=800\) and \(81\)
equally spaced points on \([-0.8,\bar\kappa-0.003]\); the positive
profiles use \(J=1500\) and \(81\) points on
\([0,\bar\kappa-0.002]\). Thus the respective steps are
\((\bar\kappa+0.797)/80\) and \((\bar\kappa-0.002)/80\).
Convergence from one starting vector and a bounded scalar search do
not certify global optimality.

\subsection{Grid refinement and behavior near the boundary}
\label{supsubsec:grid-checks}

At \((\alpha,\kappa)=(0.2,0.1)\), grids \(J=800,1500,3000\) give
rank-two values \(0.680864,0.680863,0.680863\), respectively; all
three rank-three values round to \(0.354299\). The rank-one values
round to\\
 \(1.000001,1.000000,1.000000\).

At \(\alpha=0.2\), refining \(J=1500\) to \(3000\) changes the
rank-two and rank-three numerical minimizers from
\((0.131003,0.124985)\) to \((0.131032,0.125006)\).
The corresponding reductions relative to uniform weighting change
from \((0.205138\%,0.314533\%)\) to
\((0.205184\%,0.314589\%)\). At \(\kappa=-0.6\), the penalties change
from \((3.102524\%,5.038918\%)\) to
\((3.102523\%,5.038915\%)\).
Table~\ref{supp:tab:boundary-grids} records the checks near
\(\bar\kappa=0.05\); the rank-two and rank-three iterations required
\(13\) and \(15\) steps, respectively.

\begin{table}[t]
\centering
\small
\caption{Computed weighting objectives near the admissible boundary.
Columns specify the number \(J\) of cells; entries are rounded to six
decimal places. Dashes indicate cases not computed on that grid
in the notebook.}
\label{supp:tab:boundary-grids}
\begin{tabular}{@{}rrrrrr@{}}
\toprule
& \multicolumn{3}{c}{\((m,\alpha)=(2,0.45)\)}
& \multicolumn{2}{c}{\((m,\alpha)=(3,0.30)\)}\\
\cmidrule(lr){2-4}\cmidrule(l){5-6}
\(\kappa\)&1500&3000&6000&1500&3000\\
\midrule
0.0400&2.353599&2.353597&--&2.194260&2.194260\\
0.0450&2.353178&2.353176&--&2.193740&2.193739\\
0.0480&2.352946&2.352944&--&2.193445&2.193444\\
0.0490&2.352871&2.352870&2.352869&2.193349&2.193349\\
0.0495&2.352835&2.352833&2.352832&2.193302&2.193301\\
0.0498&2.352813&2.352811&2.352811&2.193274&2.193273\\
0.0499&2.352806&2.352804&2.352804&2.193265&2.193264\\
\bottomrule
\end{tabular}
\end{table}

The finest transition scans use \(J=3000\) and step
\(\Delta\alpha=0.00005\), on \([0.40450,0.40480]\) for \(m=2\) and
\([0.25770,0.25820]\) for \(m=3\).
Table~\ref{supp:tab:boundary-transition} shows the neighboring tested
values at which the optimizer changes from an interior point to
numerical contact with the imposed upper search limit. These are
algorithm- and grid-dependent observations, not certified intervals
for a critical parameter or a proof of a boundary infimum.

\begin{table}[t]
\centering
\small
\caption{Numerical transition at \(J=3000\).
The reported gaps are \(\bar\kappa-\widehat\kappa_J\), rounded to six
decimal places; a gap near \(10^{-6}\) reflects the search cutoff.}
\label{supp:tab:boundary-transition}
\begin{tabular}{@{}rrrrr@{}}
\toprule
\(m\)&\(\alpha_-\)&Gap at \(\alpha_-\)
&\(\alpha_+\)&Gap at \(\alpha_+\)\\
\midrule
2&0.40460&0.000004&0.40465&0.000001\\
3&0.25795&0.000051&0.25800&0.000001\\
\bottomrule
\end{tabular}
\end{table}

\subsection{Monte Carlo design and additional learning diagnostics}
\label{supsubsec:monte-carlo}

The Gaussian input is \(X_t=B_{H_f}(t+1)-B_{H_f}(t)\),
\(H_f=1-\alpha/2\), at unit time spacing, with covariance
\[
\rho(h)
=
\tfrac12\bigl(
|h+1|^{2H_f}-2|h|^{2H_f}+|h-1|^{2H_f}
\bigr).
\]
The Davies--Harte circulant embedding
\citep{DaviesHarte1987} generates paths with the prescribed covariance
using FFTs. The implementation rejects embedding eigenvalues below
\(-10^{-10}\) and clips those in \([-10^{-10},0)\) to zero.
Both experiments use \(M=120\) independent paths, \(N=2^{16}\),
\(n=2^j\), \(j=8,\ldots,16\), and weights \(a_t=t^{-\kappa}\).
Random numbers are generated by \texttt{numpy.random.default\_rng};
paths are shared across weights or target shifts within each
experiment.

For classification, \(\alpha=0.2\), \(b=0.5\),
\(\kappa\in\{-0.15,0,0.10,0.20\}\), and the seed is
\(\mathtt{20260830}\). With \(\zeta_t=(1,X_{t-1})^\top\) and
\(Y_t=\mathbf1_{\{X_t>b\}}\), cumulative sufficient statistics give
\[
\widehat\theta_n
=
\left(\sum_{t=1}^n a_t\zeta_t\zeta_t^\top\right)^{-1}
\sum_{t=1}^n a_t\zeta_tY_t.
\]
Here \(n\) counts training pairs, with an additional initial
observation \(X_0\). For the rank-transition experiment,
\(\alpha=0.4\), \(\kappa=0\),
\(s\in\{0,0.1,0.2,0.4,0.6\}\), and the seed is
\(\mathtt{20260831}\); the estimator is
\(\widehat\theta_n^{(s)}=n^{-1}\sum_{t=1}^n(X_t-s)^2\), with target
\(1+s^2\). 
Both computations use unconstrained least squares. Weighted ergodic
convergence of the sample moments and positive definiteness of \(A\)
give strong consistency; hence these estimators eventually coincide
almost surely with the constrained minimizers whenever
\(\theta_0\in\operatorname{int}\Theta\).

For replication \(q\), write
\(\Delta_n^{(q)}=\widehat\theta_n^{(q)}-\theta_0\). We report
\[
\operatorname{RMSE}_{\mathrm{tot}}(n)
=
\left(\frac1M\sum_{q=1}^M\|\Delta_n^{(q)}\|^2\right)^{1/2},
\qquad
\operatorname{RMSE}_{e}(n)
=
\left(\frac1M\sum_{q=1}^M(e^\top\Delta_n^{(q)})^2\right)^{1/2}.
\]
For classification, \(e_\parallel=q_0/\|q_0\|\) and
\(e_\perp=(-e_{\parallel,2},e_{\parallel,1})^\top\).
Slopes are least-squares fits of log RMSE, or log RMSE ratio, against
\(\log n\), with an intercept, using \(n=2^{12},\ldots,2^{16}\).
Total and directional diagnostics use the same simulated paths.

\begin{table}[t]
\centering
\small
\caption{Classification: log--log slopes over the final five dyadic
sizes and total RMSE at \(N=2^{16}\). The ratio is
\(\operatorname{RMSE}_\perp/\operatorname{RMSE}_\parallel\).
Slopes are rounded to four decimal places.}
\label{supp:tab:classification-slopes}
\begin{tabular}{@{}rrrrrr@{}}
\toprule
\(\kappa\)&Total&Parallel&Orthogonal&Ratio
&\(\operatorname{RMSE}_{\mathrm{tot}}(N)\)\\
\midrule
-0.15&-0.1826&-0.1752&-0.3383&-0.1631&0.066341\\
 0.00&-0.1800&-0.1723&-0.3397&-0.1674&0.066492\\
 0.10&-0.1778&-0.1698&-0.3406&-0.1708&0.066756\\
 0.20&-0.1751&-0.1667&-0.3417&-0.1750&0.067207\\
\bottomrule
\end{tabular}
\end{table}

Table~\ref{supp:tab:classification-slopes} and
Figure~\ref{supp:fig:classification-rmse} show similar total-error
slopes across weights, while the orthogonal component decays faster.
The total slopes remain more negative than the reference exponent
\(-\alpha/2=-0.1\), so this comparison is pre-asymptotic.
For the quadratic targets, terminal RMSEs in increasing order of
\(s\) are \(0.023400,0.034059,0.052500,0.094339,0.137762\);
the fitted slopes are those reported in Section~8.2 of the main paper.
These finite-sample second-moment diagnostics do not estimate an
almost-sure limsup constant.

\begin{figure}[t]
\centering
\includegraphics[width=0.70\textwidth]{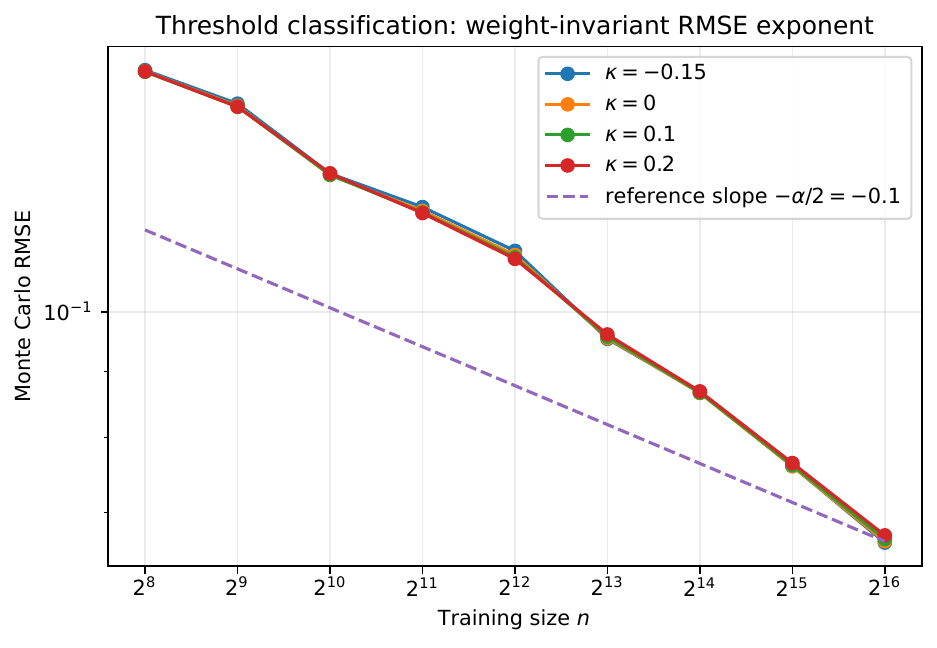}
\caption{Total parameter RMSE for threshold classification:
\(\alpha=0.2\), \(b=0.5\), \(M=120\), seed \(\mathtt{20260830}\),
and \(n=2^8,\ldots,2^{16}\). The dashed reference has slope \(-0.1\)
and is anchored at the unweighted RMSE at \(N=2^{16}\); it is not an
LIL boundary.}
\label{supp:fig:classification-rmse}
\end{figure}

\FloatBarrier